\documentclass[lettersize,journal]{IEEEtran}
\usepackage{amsmath,amsfonts}
\usepackage{algorithm}
\usepackage{amsthm}
\usepackage{array}
\usepackage[caption=false,font=normalsize,labelfont=sf,textfont=sf]{subfig}
\usepackage{stfloats}
\usepackage{url}
\usepackage{enumitem}
\usepackage{graphicx}
\usepackage{cite}
\usepackage[table]{xcolor}
\usepackage{multirow}
\usepackage{rotating}
\usepackage{makecell}
\usepackage{hyperref}
\usepackage{natbib}
\usepackage{comment}
\usepackage{booktabs} 
\usepackage{wrapfig}
\usepackage{tikz}
\usepackage{overpic}  
\usepackage{algpseudocode}
\usepackage{amssymb}
\newtheorem{theorem}{Theorem}

\newtheorem*{theorem*}{Theorem}
\newtheorem*{proof*}{Proof}
\definecolor{academicblue}{RGB}{235, 245, 255}
\usepackage{xspace}

\makeatletter
\DeclareRobustCommand\onedot{\futurelet\@let@token\@onedot}
\def\@onedot{\ifx\@let@token.\else.\null\fi\xspace}

\makeatother

\definecolor{darkorange}{RGB}{255,140,0}

\title{Unbiased Diffusion Variational Inversion via Principled Posterior Matching}

\author{
  \IEEEauthorblockN{Weimin Bai\textsuperscript{*},}
  \and
  \IEEEauthorblockN{Yuxuan Gu\textsuperscript{*},}
  \and
  \IEEEauthorblockN{Yifei Wang,}
  \and
  \IEEEauthorblockN{Weijian Luo,}
  \and
  \and
  \IEEEauthorblockN{He Sun\textsuperscript{\dag}}
  \\
  \IEEEauthorblockA{Peking University}
  \thanks{\textsuperscript{*}Equal contribution}
  \thanks{\textsuperscript{\dag}Correspondence to hesun@pku.edu.cn}
}

\begin{document}

\maketitle

\begin{abstract}

Existing score-based methods for inverse problems often resort to approximate minimization of the KL divergence between the inversion distribution and the Bayesian posterior. Such an approximation leads to severe mode collapse and unreliable uncertainty quantification. In this paper, we propose \emph{Principled Posterior Matching (PPM)}, a framework that returns to the fundamentals of variational inference, rather than using tricky approximations. Instead of relying on heuristic approximations, we rigorously formulate the exact optimization of the KL divergence via the \emph{integration of Fisher divergence}. We derive a tractable, equivalent gradient form of this integral, enabling precise optimization without the biases introduced by prior approximations. Our analysis clearly reveals that the mode collapse in previous methods stems directly from this approximation gap. \emph{Supported by our theoretical solution, PPM unifies two complementary paradigms}: (1) In \emph{variational inference}, PPM adopts mass-covering divergences that significantly improve the inversion diversity and uncertainty quantification; (2) In \emph{amortized inference}, it enables the training of an efficient reconstruction network for rapid, single-step reconstruction. Furthermore, our formulation naturally extends to a broader family of divergence measures by generalizing the integral of the Fisher divergence. We validate PPM across challenging computational imaging tasks, including inpainting, super-resolution fluorescent microscopy, and radio interferometric black-hole imaging. In all experiments, PPM achieves superior reconstruction fidelity, faithful multimodal posterior recovery, and well-calibrated uncertainty estimates, establishing a robust framework for scientific imaging.

\end{abstract}





\begin{IEEEkeywords}
Computational Imaging, Variational Inference, Amortized Inference, Diffusion Models, Uncertainty Quantification
\end{IEEEkeywords}

\vspace*{-0.2cm}
\section{Introduction}
\label{sec:introduction}
\IEEEPARstart{C}{omputational} imaging tasks seek to recover an unknown signal $x$ from noisy, indirect measurements,
$\boldsymbol{y} = \mathcal{A}(\boldsymbol{x}) + \eta$,
where $\mathcal{A}$ is a linear or nonlinear, ill-posed forward operator, and $\eta$ denotes measurement noise. 
Computational imaging finds applications in diverse scientific fields such as astronomy~\cite{chael2019eht}, optical microscopy~\cite{choi2007tomographic}, medical imaging~\cite{lustig2008compressed}, and fluid dynamics~\cite{iglesias2013ensemble}.
A crucial aspect of imaging in this context is the incorporation of an image prior, or so-called regularization, to guide the reconstruction towards desired image characteristics.
From a Bayesian perspective, the prior shapes the posterior distribution, namely the uncertainty and diversity, of the reconstructed images.
Classical methods impose handcrafted priors on $x$—for example, sparsity~\cite{candes2007sparsity}, total variation (TV)~\cite{vogel1996iterative}, or wavelet-based regularizers—to constrain the solution space. 
While effective in some settings, these analytical priors usually fail to capture the rich statistics of natural or scientific images.

Recent advances in generative AI have established diffusion models~\cite{ho2020denoising, song2020score, sohl2015deep} as powerful, data-driven image priors. 
A diffusion model learns to approximate the distribution of clean images by progressively adding noise (the forward process) and then training a neural network to reverse this corruption through denoising (the reverse process). When integrated into inverse problem solvers, diffusion priors not only yield high-quality point estimates but also facilitate full posterior exploration for uncertainty quantification, a capability crucial in scientific and medical imaging applications.

Existing approaches for posterior sampling from $p(\boldsymbol{x}|\boldsymbol{y})$ with a pretrained diffusion prior primarily fall into three categories: gradient-guided Monte Carlo sampling, optimization-based variational inference, and amortized inference. The first category comprises gradient-guided Monte Carlo (MC) techniques, exemplified by Diffusion Posterior Sampling (DPS)~\cite{chung2022diffusion}. DPS iteratively interleaves denoising updates from a score-based model with gradient-driven data-consistency steps within a Markov Chain Monte Carlo (MCMC) framework~\cite{brooks2011handbook}. While this procedure seeks to satisfy both the measurement likelihood $p(\boldsymbol{y}|\boldsymbol{x})$ and the learned image prior, it enforces data consistency by approximating the intractable time-dependent likelihood with strong assumptions. This often results in unstable sampling trajectories and biased posterior estimates. Moreover, these methods remain inherently inefficient, as their sequential nature incurs substantial computational costs during inference.

\begin{figure*}[t]
    \centering
    \setlength{\tabcolsep}{1pt}
    \setlength{\fboxrule}{1pt}
    \resizebox{0.99\textwidth}{!}{
    \begin{tabular}{c}
    \begin{tabular}{ccc@{\hspace{2pt}}!{\color{black}\vrule width 1.0pt}@{\hspace{2pt}}cccc}
        GT Prior & Diffusion Prior & Data Likelihood &
        RED-Diff & Pixel-RLSD & Ours & GT Posterior
        \\
        \includegraphics[width=0.2\textwidth]{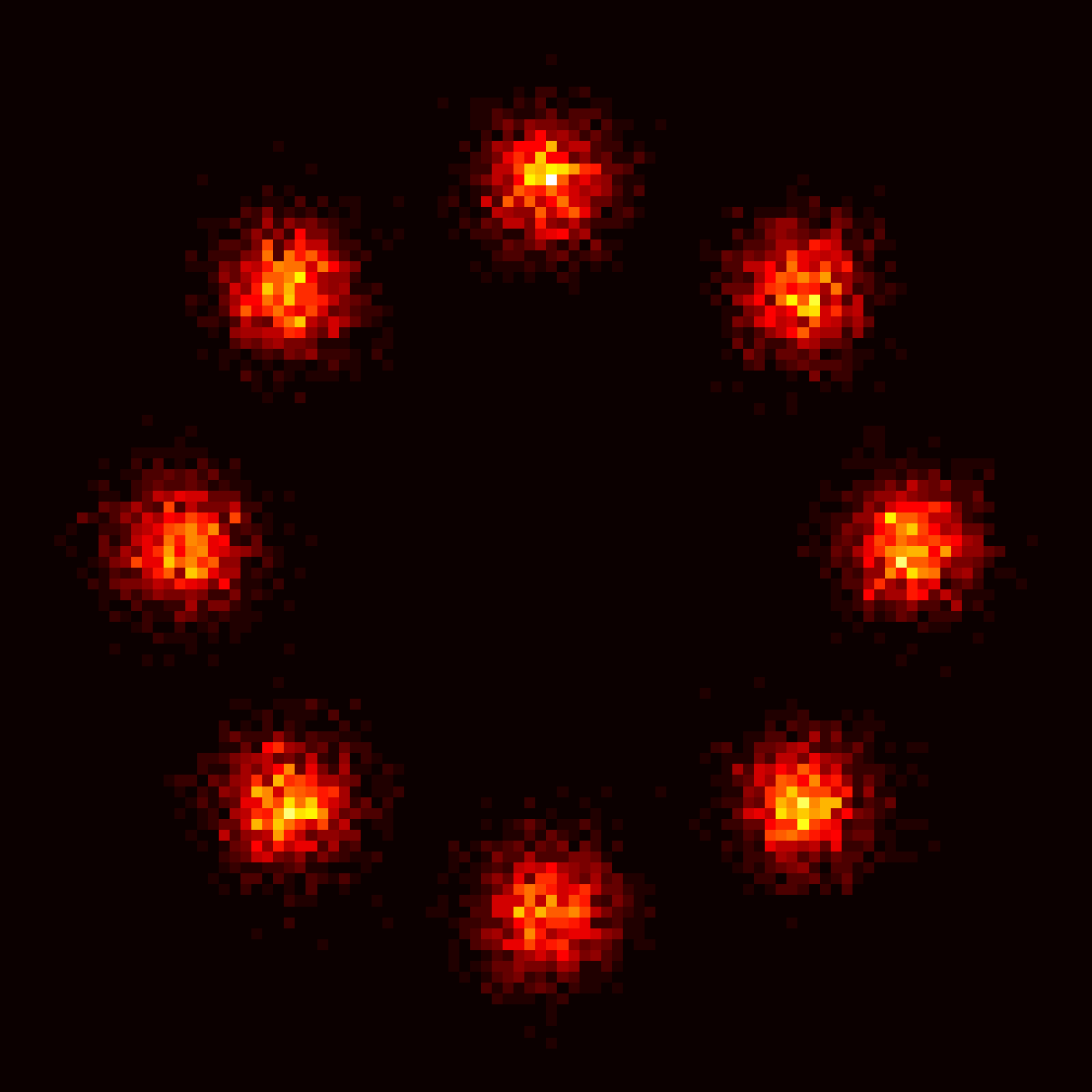} &
        \includegraphics[width=0.2\textwidth]{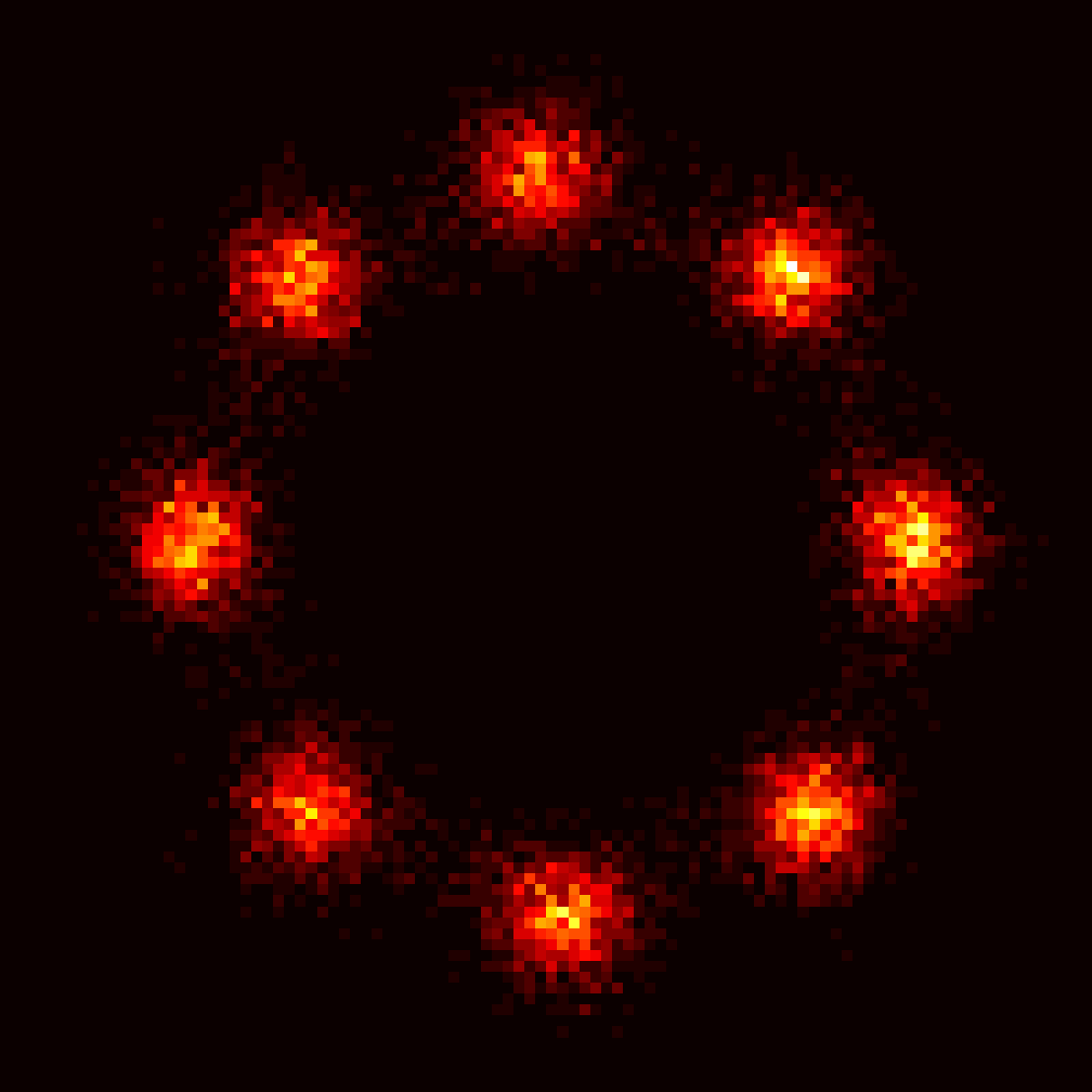} &
        \includegraphics[width=0.2\textwidth]{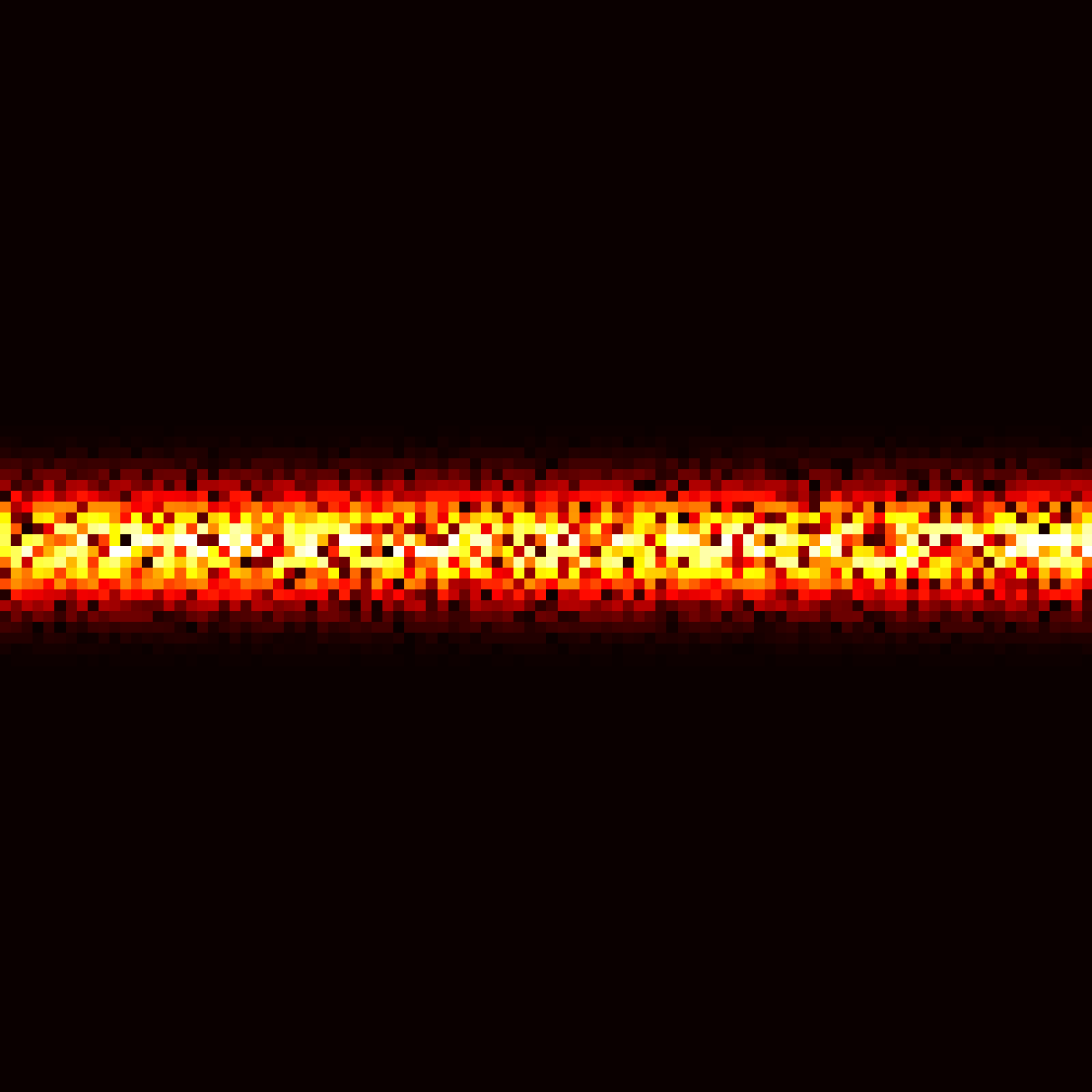} &
        \includegraphics[width=0.2\textwidth]{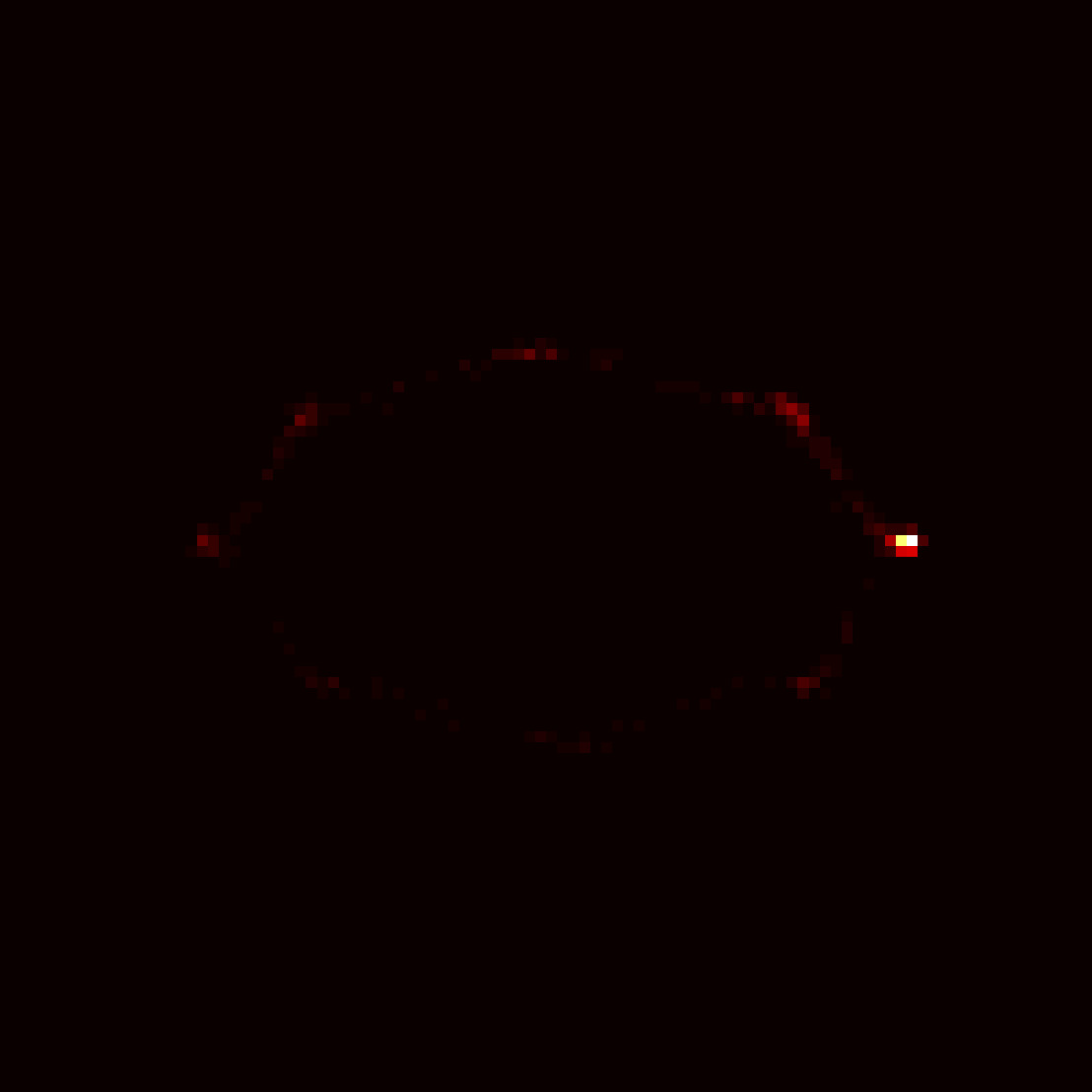} &
        \includegraphics[width=0.2\textwidth]{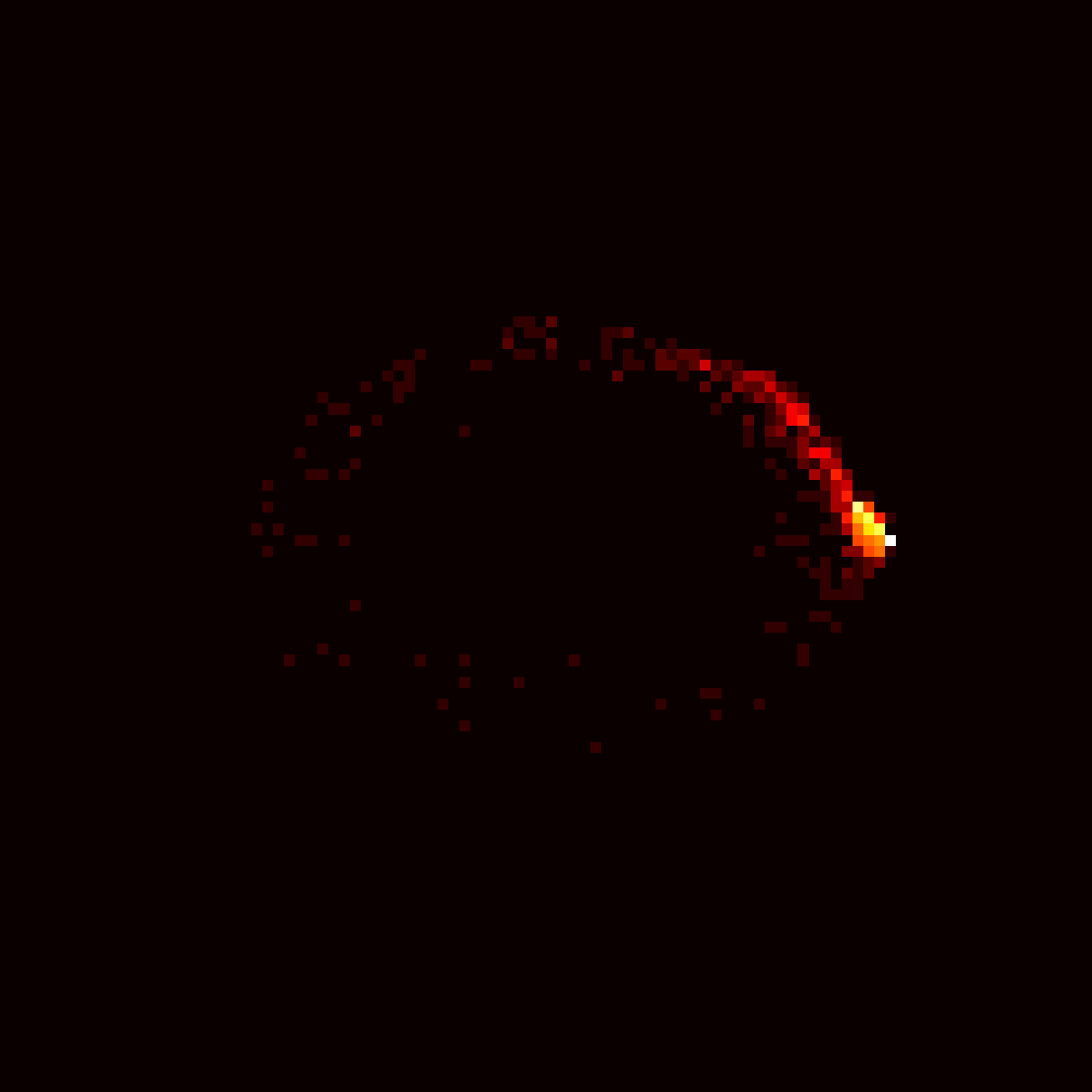} &
        \includegraphics[width=0.2\textwidth]{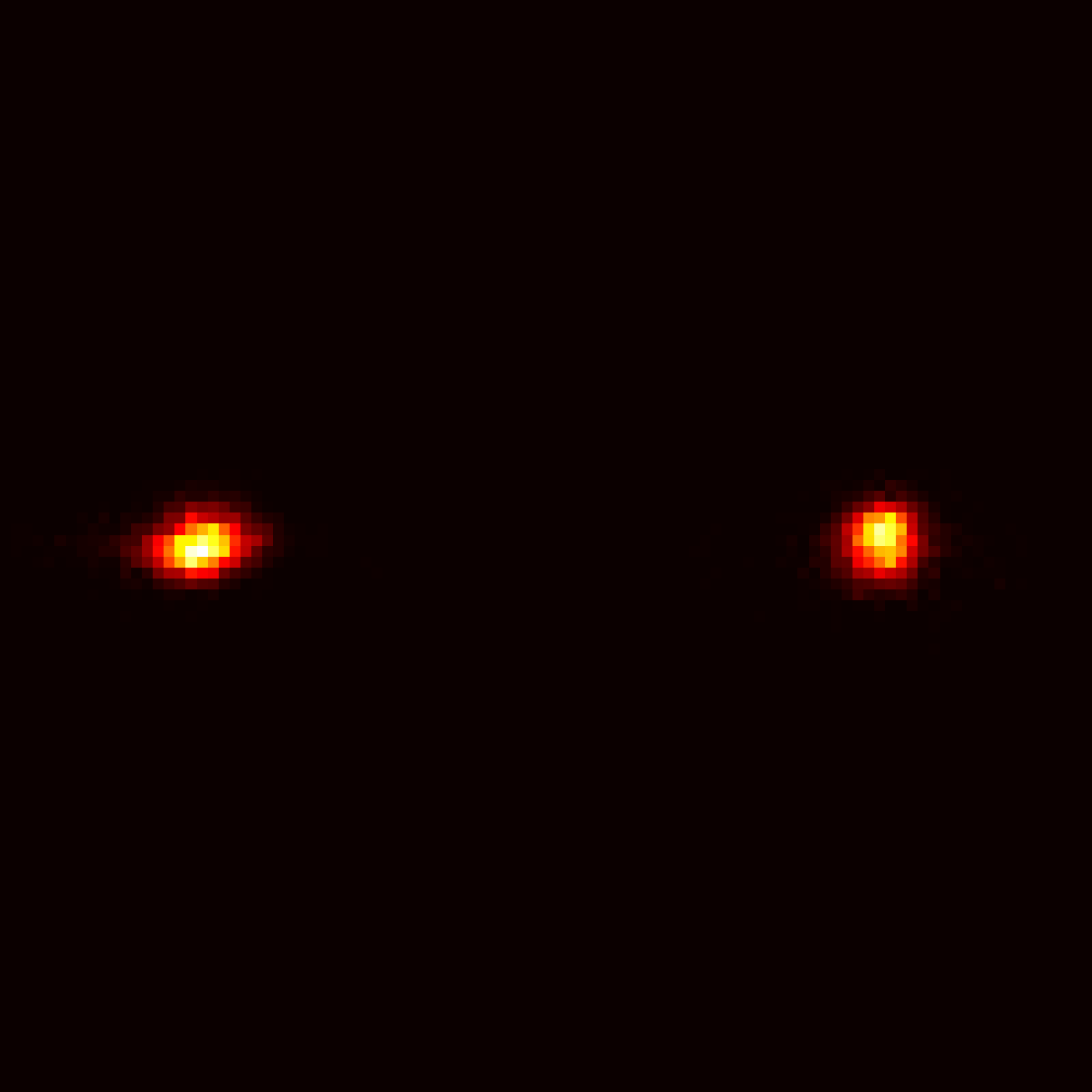} &
        \includegraphics[width=0.2\textwidth]{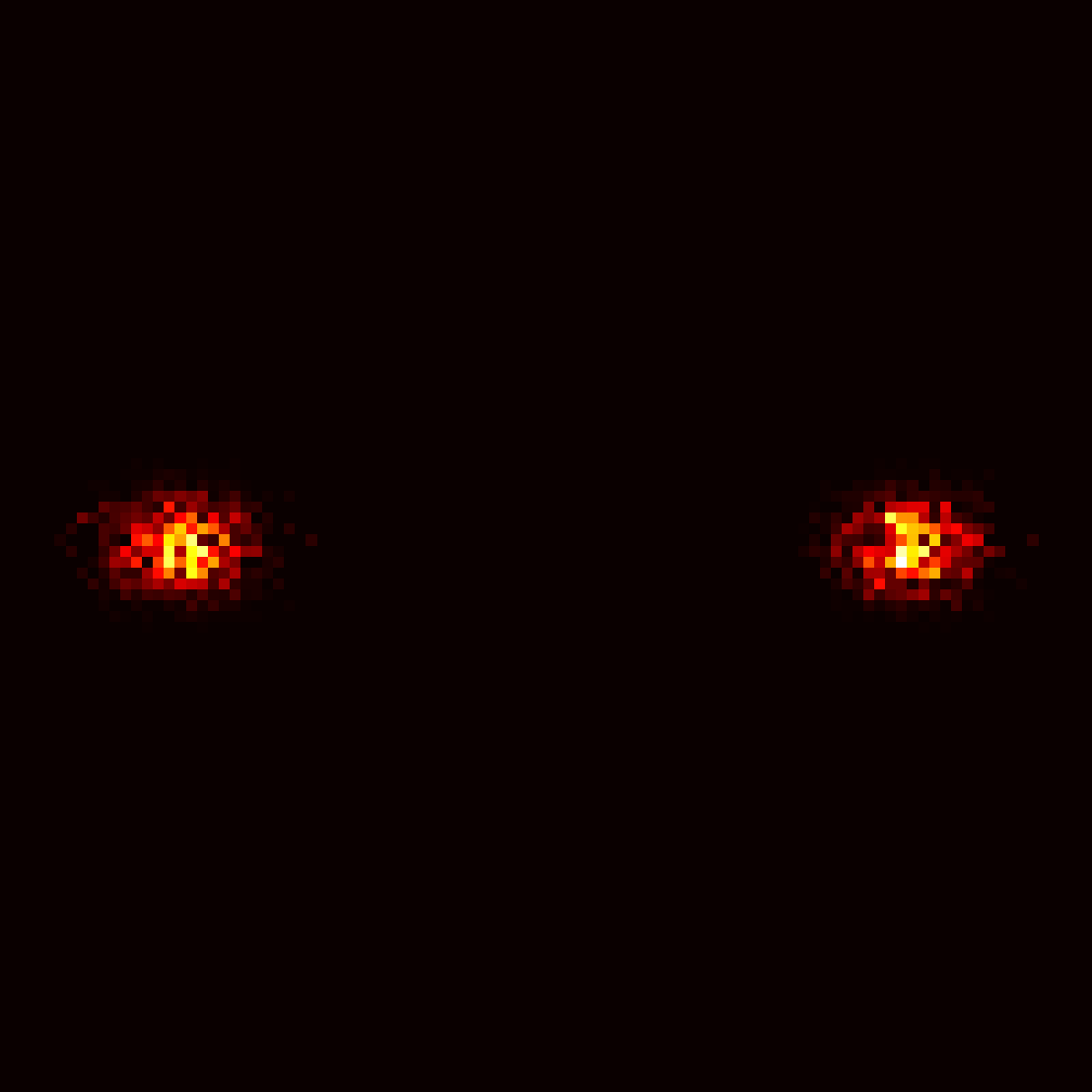} 
        \\
        \includegraphics[width=0.2\textwidth]{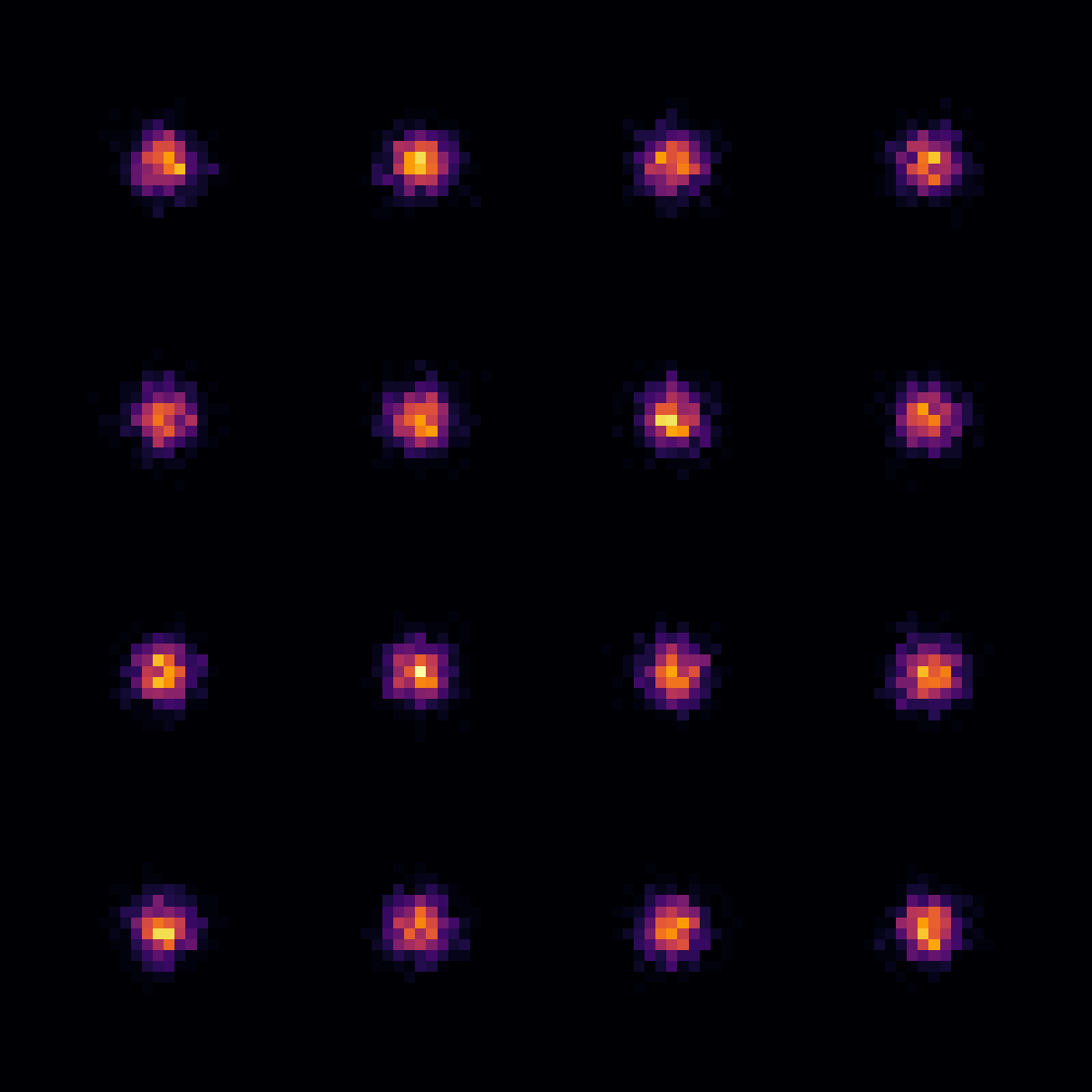} &
        \includegraphics[width=0.2\textwidth]{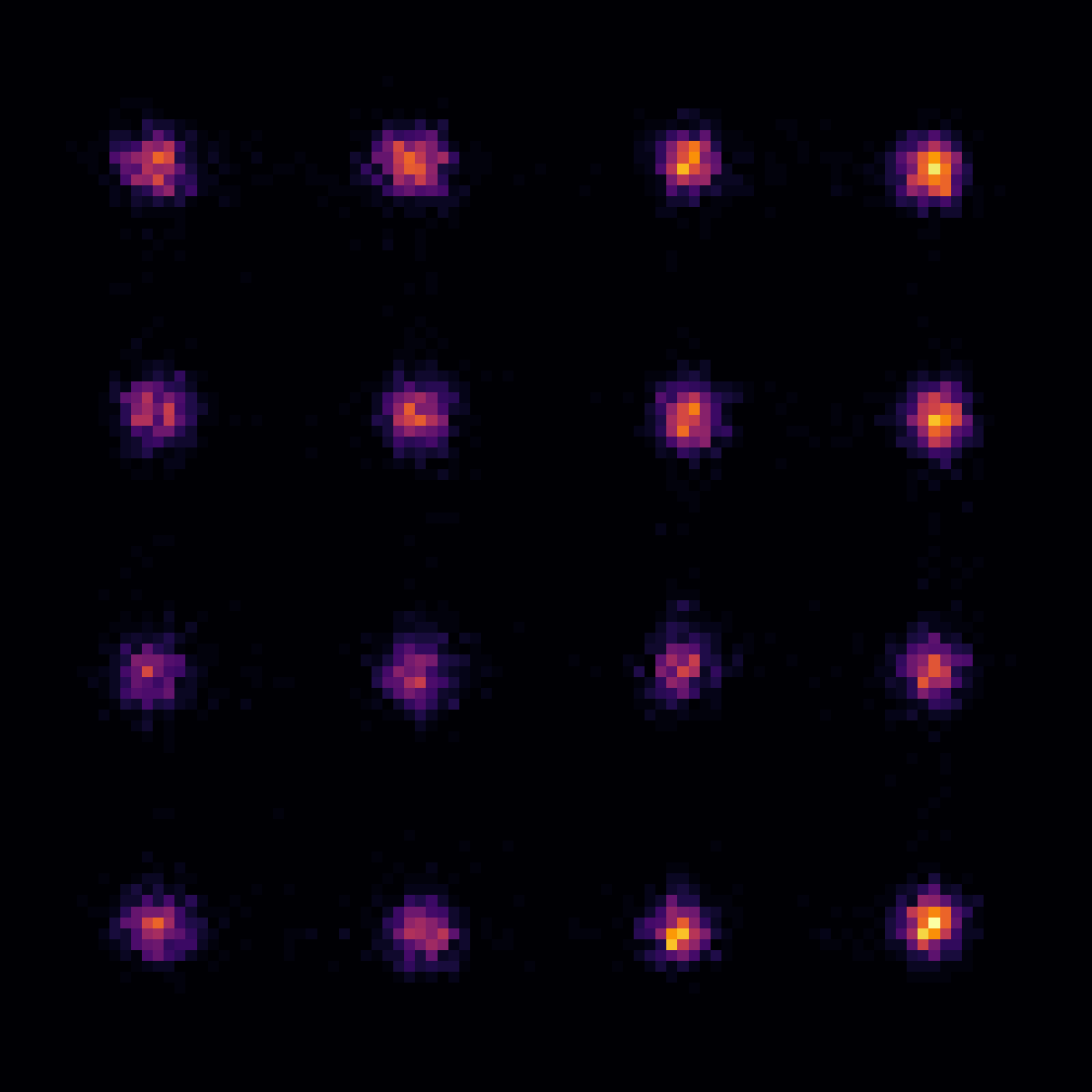} &
        \includegraphics[width=0.2\textwidth]{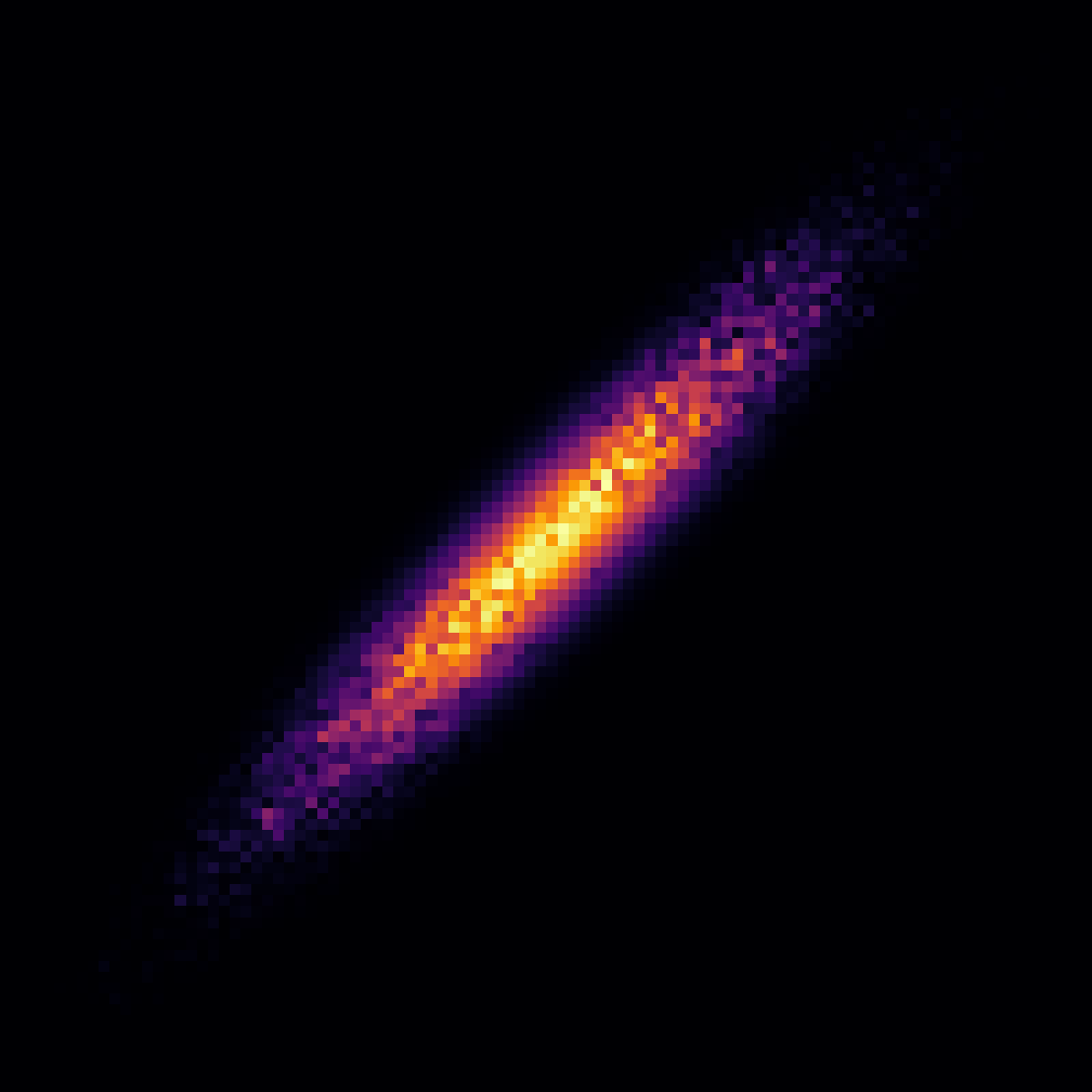} &
        \includegraphics[width=0.2\textwidth]{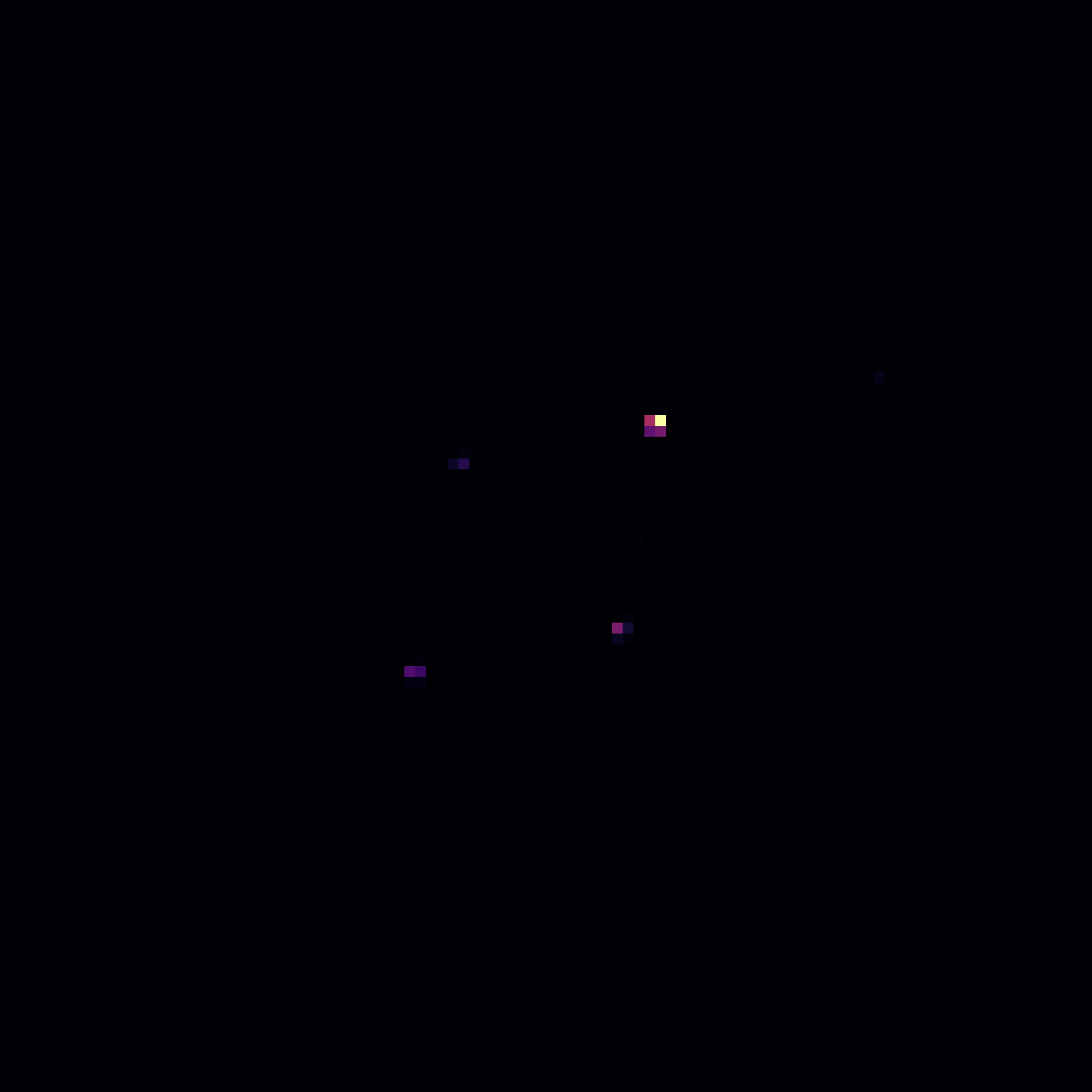} &
        \includegraphics[width=0.2\textwidth]{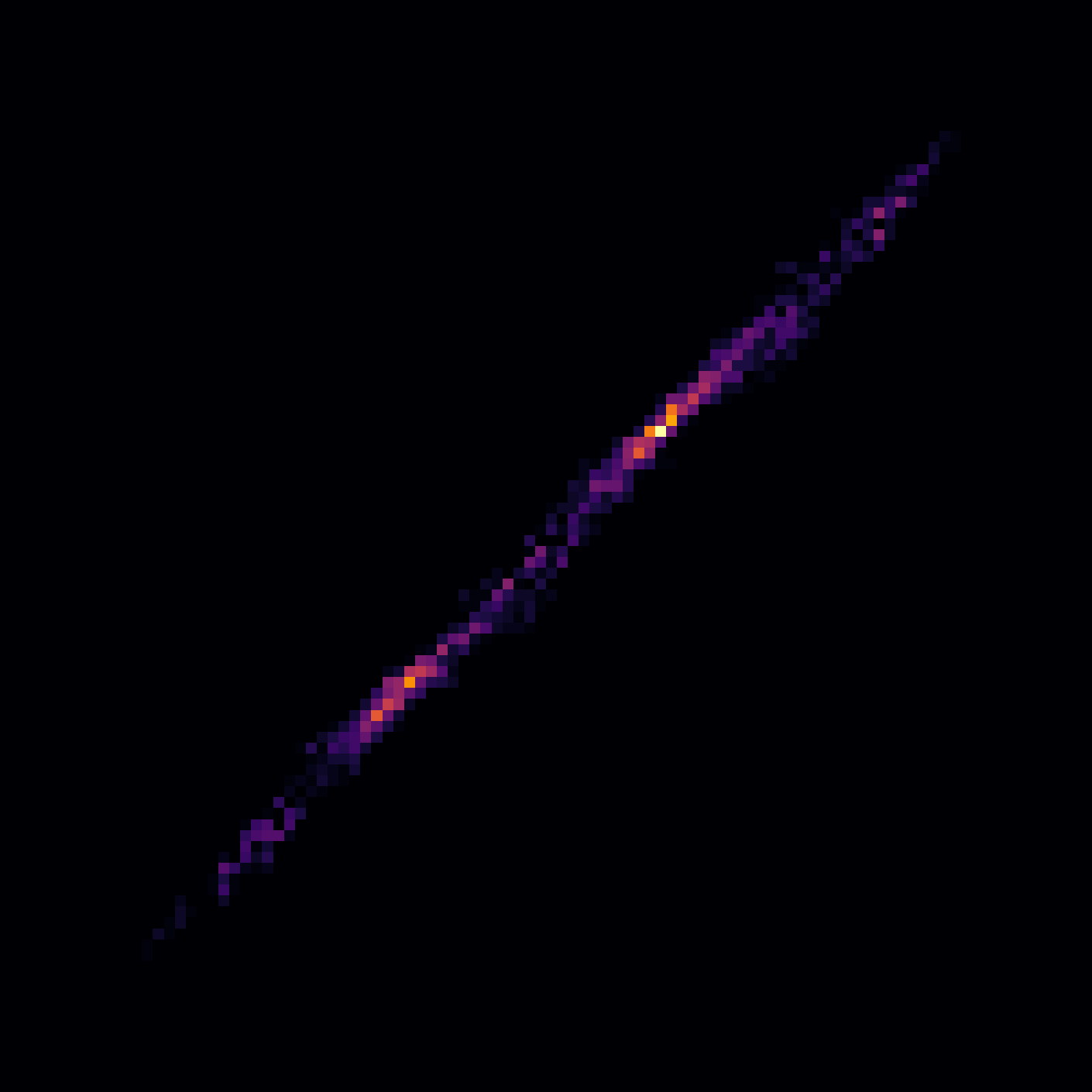} &
        \includegraphics[width=0.2\textwidth]{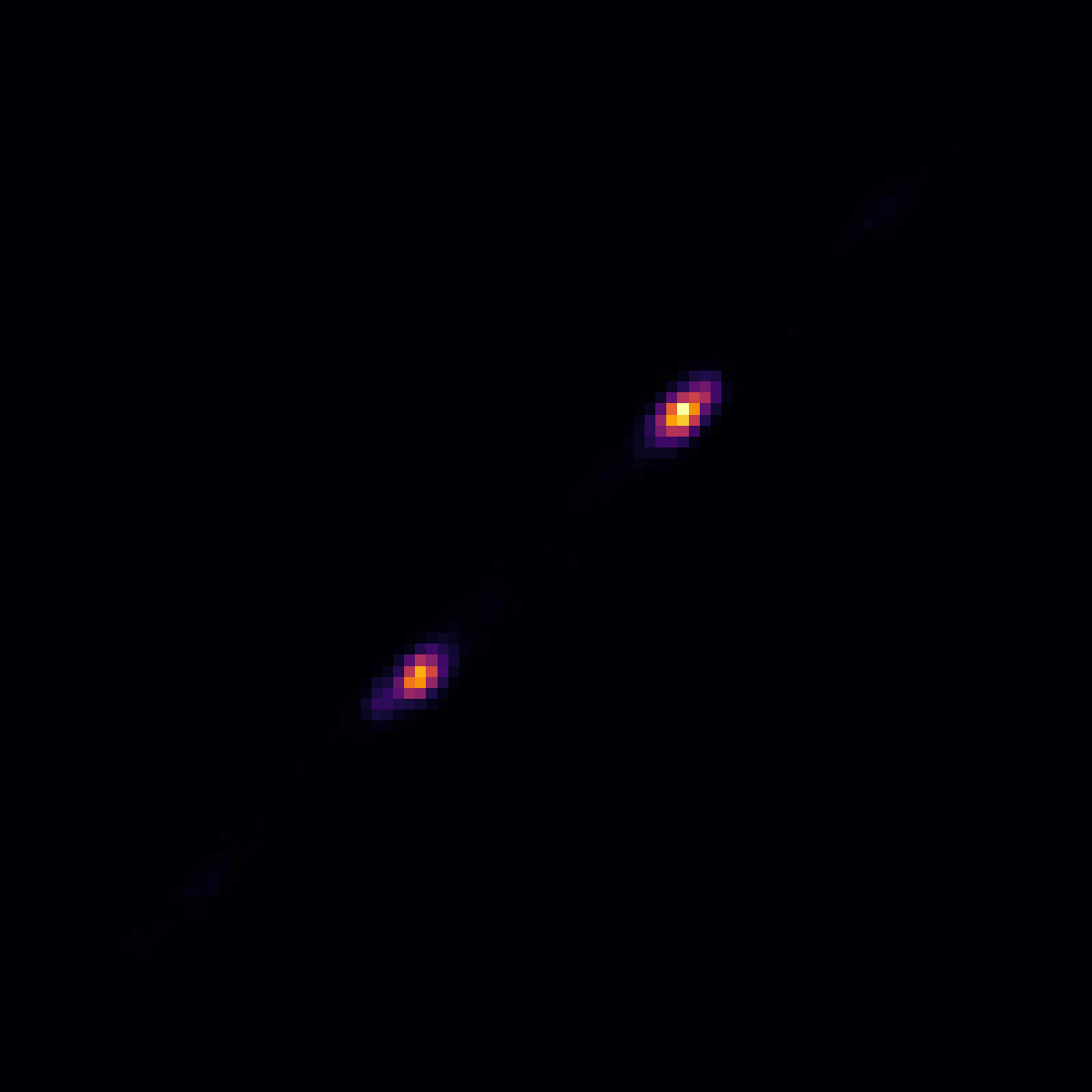} &
        \includegraphics[width=0.2\textwidth]{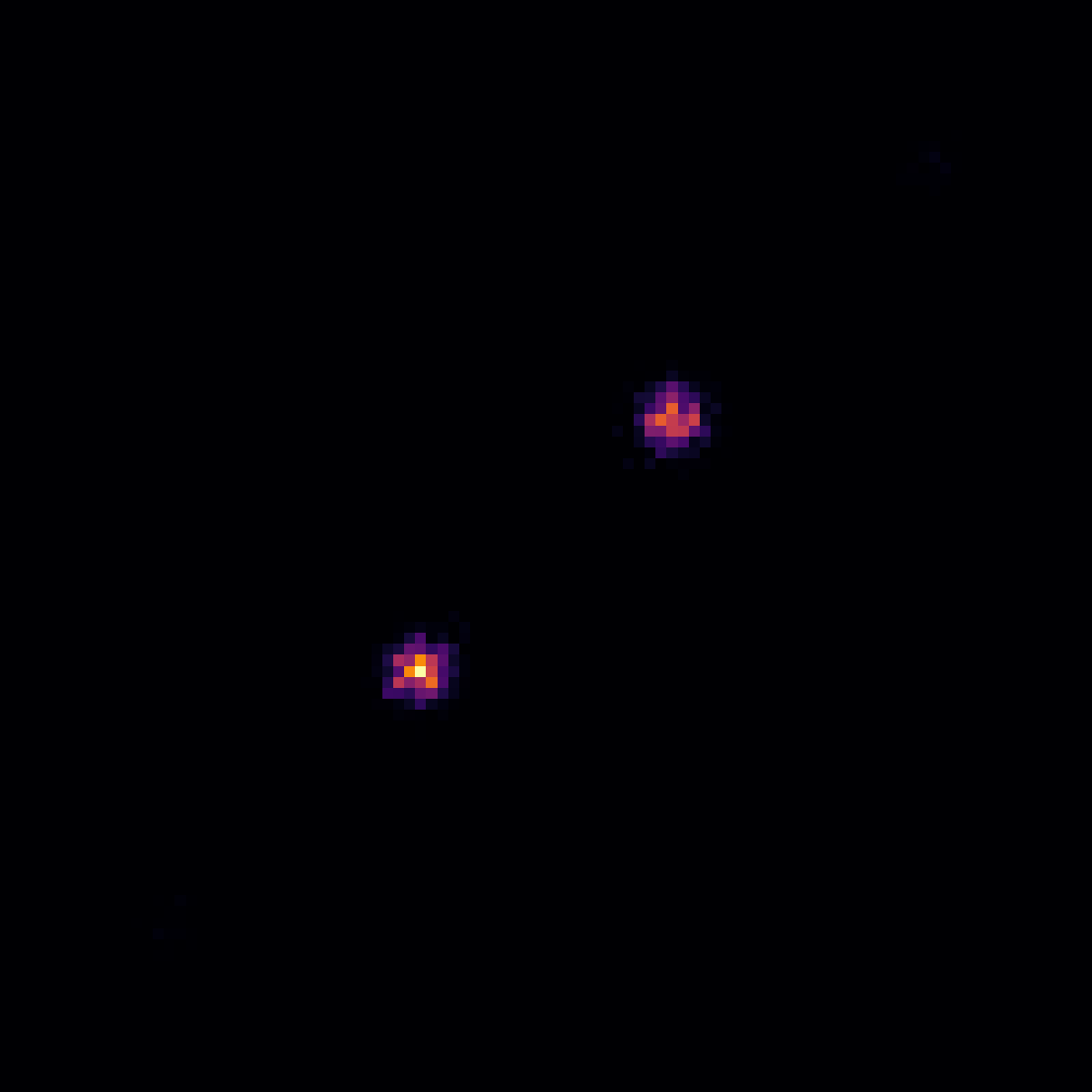} 
    \end{tabular}
    \end{tabular}}
    \caption{\textbf{Comparison of PPM and diffusion‐based VI baselines (RED-Diff and RLSD) on 2D posterior estimation tasks.} Priors are multimodal Gaussian mixtures, and observations are linear projections of the 2D latent state, producing inherently multimodal ground‐truth posteriors. PPM accurately recovers all modes, whereas RED-Diff suffers from mode collapse, and RLSD exhibits measurement inconsistency (top) or poor prior adherence (bottom).}
    \label{fig:teaser}
    \vspace{-0.1em}
\end{figure*}

The second category encompasses optimization-based \textbf{variational inference (VI)}, such as RED-Diff~\cite{mardani2023variational} and Score Distillation Sampling (SDS)~\cite{poole2022dreamfusion}. 
These methods typically formulate the inverse problem as optimizing a weighted denoising score matching objective. 
However, a critical theoretical limitation arises from their implicit assumption of a degenerate variational distribution (e.g., a Dirac delta). 
Such a point-estimate assumption effectively eliminates the entropy term from the variational objective, theoretically degrading the optimization into Maximum A Posteriori (MAP) estimation rather than probabilistic inference. 
Consequently, without the entropy term to encourage diversity, the learned posterior suffers from severe mode-seeking behavior~\cite{luo2024one} and structural collapse, leading to an underestimation of uncertainty and a failure to capture the complex, multi-modal structure of the posterior.

The third paradigm focuses on \textbf{amortized inference (AI)}, represented by methods like DAVI~\cite{lee2024diffusion}. 
These approaches aim to train a feed-forward network to predict the posterior directly for rapid inference.

Though practically useful, DAVI actually deviates from the standard unsupervised inverse problem framework -- solving the Bayesian posterior problem -- in two critical aspects. First, DAVI typically rely on \textbf{paired training data}, introducing supervised objectives—including pixel-wise reconstruction loss against ground truth and adversarial (GAN) losses—to align the reconstruction network for reconstruction. 
This reliance on ground truth supervision restricts their applicability in scientific imaging, where obtaining paired data is often impractical.
Second, regarding the optimization objective, these methods fail to minimize the exact KL divergence. 
Instead, they adopt the \textbf{Integral KL (IKL)} divergence proposed in Diff-Instruct~\cite{luo2023diff}, which serves as a heuristic approximation rather than a rigorous variational lower bound. This deviation from exact KL minimization, as well as the dependence on supervision, shifts the training paradigm from sampling from the exact Bayesian posterior distribution to some approximated distribution with a significant bias, which fundamentally limits their ability to provide the reliable, zero-shot generalization required for solving inverse problems with generic priors.


To overcome the limitations imposed by the inexact approximations of the KL divergence in existing works~\cite{mardani2023variational, zilberstein2024repulsive}, we propose \textbf{Reconstruction Score Matching (PPM)}. 
Instead of relying on the delta distribution assumption or IKL objectives, PPM returns to the principled minimization of the exact KL divergence. 
We achieve this via a classical theoretical result: the Kullback-Leibler divergence can be expanded as the integration of Fisher divergence along the diffusion process. Which leads us to optimize the integral of Fisher divergence rather than the integral of KL divergence in order to sample from the exact Bayesian posterior distribution.
Such a problem formulation provides a rigorous theoretical grounding: we prove that minimizing the exact KL via Fisher integration inherently encourages mass-covering behavior, effectively mitigating mode collapse. 
Furthermore, we demonstrate that this Fisher-based perspective can be naturally generalized to a broader family of divergences via generalized score matching, theoretically unifying existing score-based methods as special cases within our framework. 
Table~\ref{tab:bias_comparison} provides a systematic theoretical comparison, summarizing the specific sources of bias in these baselines—ranging from likelihood approximation in DPS to the independence assumption in IKL—and contrasting them with the unbiased nature of PPM. The detailed theoretical analysis of baseline bias is provided in Section \ref{sec:TheoreticalAnalysis}. 

To make the Fisher divergence optimization tractable, we derive a novel equivalent gradient formula from the Fisher divergence integral. Unlike prior methods that rely on biased estimators, our derivation yields a precise gradient with respect to the variational parameters, enabling efficient optimization via stochastic gradient descent.
In PPM, we model the variational posterior $p(\boldsymbol{x}|\boldsymbol{y})$ flexibly to support two paradigms: either as an amortized reconstruction network $\{\boldsymbol{x} = g_\varphi(\boldsymbol{z}), \boldsymbol{z} \sim \mathcal{N}(0, 1)\}$ for rapid, single-step inference, or as an ensemble of image particles $\{\boldsymbol{\mu}_k\}_{k=1}^K$ for high-fidelity particle-based VI. To accurately compute the divergence, we introduce an auxiliary score network $s_\phi$, adapted from the pre-trained diffusion score $s_p$ via Low-Rank Adaptation (LoRA)~\cite{ryu2023low, hu2022lora}, which bridges the gap between the variational score and the true posterior score.

\begin{table*}[h]
\centering
\caption{Theoretical comparison of optimization objectives across different frameworks.}
\label{tab:bias_comparison}
\resizebox{\textwidth}{!}{%
\begin{tabular}{l|c|c|c|c}
\toprule
\textbf{Method} & \textbf{Objective / Mechanism} & \textbf{Bias Source} & \textbf{Posterior Type} & \textbf{Uncertainty} \\ \midrule
\textbf{DPS}~\cite{chung2022diffusion} & Likelihood Approx. & Laplacian Approximation & Approx. Sampler & Unreliable \\
\textbf{RED-Diff}~\cite{mardani2023variational} & Weighted Score Matching & No Entropy (Dirac) & Point Estimate (MAP) & None \\
\textbf{RLSD}~\cite{zilberstein2024repulsive} & SDS + Repulsion & Heuristic Repulsion (Proxy Entropy) & Particles + Forced Diversity & Artificial \\
\textbf{DAVI (IKL)}~\cite{lee2024diffusion} & $\int D_{\text{KL}}(q_t \| p_t) dt$ & Ignores Temporal Dependency & Amortized Approx. & Biased \\ \midrule
\rowcolor{academicblue}
\textbf{PPM (Ours)} & $\int \text{Fisher}(q_t \| p_t) dt$ & Unbiased (Exact KL) & True Variational & Exact \& Calibrated \\ \bottomrule
\end{tabular}%
}
\end{table*}

We evaluate PPM on a comprehensive suite of inverse imaging benchmarks. To ensure robust and comparable assessments, we first validate our method on standard tasks—including image inpainting, super-resolution, and deblurring—using established datasets such as FFHQ~\cite{karras2019style} and ImageNet~\cite{deng2009imagenet} at $256\times256$ resolution. Beyond natural images, we rigorously test PPM on two challenging scientific imaging applications: super-resolution fluorescence microscopy~\cite{qiao2021evaluation} and radio-interferometric imaging of black holes~\cite{chael2019eht, sun2021deep, sun2022alpha}. Across all experiments, PPM consistently outperforms leading baselines, including DPS~\cite{chung2022diffusion}, KL-based VI methods~\cite{mardani2023variational, zilberstein2024repulsive}, and the amortized approach DAVI~\cite{lee2024diffusion}. Crucially, PPM achieves these results without relying on paired training data, delivering superior reconstruction fidelity, markedly greater sample diversity, and more reliable uncertainty quantification.

\vspace*{-0.2cm}
\section{Background}
\label{sec:bg}


\subsection{Computational Imaging Inverse Problems}
In general, the inverse problem aims to reconstruct underlying image signals $\boldsymbol{x} \in \mathbb{R}^d$ from corrupted observations $\boldsymbol{y} \in \mathbb{R}^m$, where the image formation process is probabilistically modeled as:
\vspace{-0.05in}
\begin{equation} 
\boldsymbol{y} \sim p(\boldsymbol{y}|\boldsymbol{x}). 
\label{eq:forward}
\end{equation} 
Since the observation are usually under-determined ($m \leq d$) and observation noise is inevitable, inverse problems in computational imaging are typically ill-posed, with the inverse mapping $\boldsymbol{y} \rightarrow \boldsymbol{x}$ being one-to-many. To address this complexity, Bayesian inference framework introduces a prior distribution of underlying images, $p(\boldsymbol{x})$, to constrain the solution space for the image posterior, $p(\boldsymbol{x}|\boldsymbol{y})$, as illustrated by:
\begin{equation}
\label{eq:bayesian_inference}
    p(\boldsymbol{x}|\boldsymbol{y}) = \frac{p(\boldsymbol{y}|\boldsymbol{x})p(\boldsymbol{x})}{p(\boldsymbol{y})} \propto \underbrace{p(\boldsymbol{y}|\boldsymbol{x})}_{\text{Likelihood}} \underbrace{p(\boldsymbol{x})}_{\text{Prior}}
\end{equation}
Employing Maximum a Posteriori (MAP) estimation, one can derive a point estimate of the underlying image by maximizing $\log p(\boldsymbol{x}|\boldsymbol{y})$. Alternatively, posterior image samples of reconstructed images can be obtained through methods like Markov Chain Monte Carlo (MCMC)~\cite{brooks2011handbook} or Variational Inference (VI)~\cite{blei2017variational, sun2021deep, sun2022alpha}. However, the performance of many computational imaging solvers is limited by their reliance on oversimplified, handcrafted priors such as sparsity~\cite{candes2007sparsity} and total variation (TV)~\cite{vogel1996iterative}. These priors fail to capture the true complexity of natural image distributions, hindering the solvers' ability to achieve high-quality reconstructions.

\subsection{Diffusion Models}
Diffusion models \cite{ho2020denoising, song2020score, sohl2015deep} formulate generation as the reverse of a continuous‐time diffusion process defined by a stochastic differential equation (SDE). The forward SDE gradually corrupts data by injecting noise:
\begin{equation}
\label{eq:forward_sde}
\mathrm{d}\boldsymbol{x}_t = {f}(\boldsymbol{x}_t, t)\mathrm{d}t + g(t)\mathrm{d}\boldsymbol{w},
\end{equation}
where \( t \in [0, T] \) indexes the diffusion time, \( {f}(\cdot,t): \mathbb{R}^d \to \mathbb{R}^d \) controls the drift coefficient, \( g(t) \) scales the Brownian motion \( \boldsymbol{w} \), and \( \boldsymbol{x}_0 \sim p_{\text{data}} \). This process gradually transforms data samples into a tractable Gaussian distribution \( \boldsymbol{x}_T \sim \mathcal{N}(\boldsymbol{0}, \boldsymbol{I}) \).
The generative process then follows the corresponding reverse‐time SDE:
\begin{equation}
\label{eq:reverse_sde}
\mathrm{d}\boldsymbol{x}_t = \left[{f}(\boldsymbol{x}_t, t) - g(t)^2 \nabla_{\boldsymbol{x}_t}\log p_t(\boldsymbol{x}_t)\right]\mathrm{d}t + g(t)\mathrm{d}\overline{\boldsymbol{w}},
\end{equation}
where \( \nabla_{\boldsymbol{x}_t}\log p_t(\boldsymbol{x}_t) \) is the score function estimated by a neural network \( s_\theta(\boldsymbol{x}_t, t) \). Training such a neural network usually involves optimizing a score matching objective~\cite{song2019generative}:
\begin{equation}
\label{eq:score_loss}
\mathcal{L}(\theta) = \mathbb{E}_{t,\boldsymbol{x}_t} \left[ \lambda(t) \left\| s_\theta(\boldsymbol{x}_t, t) - \nabla_{\boldsymbol{x}_t}\log p_{t}(\boldsymbol{x}_t|\boldsymbol{x}_0) \right\|_2^2 \right],
\end{equation}
where \( \lambda(t) \) reweights time steps and \( p_{t} \) is the perturbation kernel of the forward process. Once trained, we can plug \(s_\theta(\boldsymbol{x}_t, t) \) into Eq.~\ref{eq:reverse_sde} and sample images from a random noise following Eq.~\ref{eq:reverse_sde} or variants \cite{lu2022dpm,karras2022elucidating,xue2023sa,li2025self}. Supported by solid theories, the diffusion model has successes in a wide range of applications~\cite{saharia2022photorealistic,chi2025diffusion,janner2022planning,chen2023diffusiondet,zhang2023enhancing,ye2024schedule,deng2024variational,bai2025vision,bai2025dive3d}. In the next section, we will focus on diffusion models for inverse problems.

\subsection{Diffusion Model for Inverse Problems}
Diffusion models \cite{ho2020denoising, song2020score, sohl2015deep}, owing to their strong ability to accurately approximate complex image distributions, have emerged as powerful data‐driven priors for imaging inverse problems. There are two main categories of diffusion‐model–based solvers for imaging inverse problems. The first builds on MCMC sampling~\cite{brooks2011handbook}, using score functions to guide gradient‐based samplers that steer reconstructions toward the learned image prior. The second employs variational inference frameworks with score distillation sampling (SDS) techniques~\cite{poole2022dreamfusion, mardani2023variational, zilberstein2024repulsive}, enforcing similarity between reconstructed images and the diffusion prior by minimizing the KL divergence between them. Below, we provide an overview of each approach.


\paragraph{Monte Carlo sampling methods}
After training on large image datasets, a diffusion model provides the unconditional score $\nabla_{\mathbf{x}_t}\log p_t(\mathbf{x}_t)$. Posterior sampling replaces this with the conditional score $\nabla_{\mathbf{x}_t}\log p_t(\mathbf{x}_t\mid y)$ during reverse diffusion, via Bayes’ rule:
\begin{equation}
\begin{split}
\nabla_{\mathbf{x}_t}\log p_t(\mathbf{x}_t\mid y)
&= \nabla_{\mathbf{x}_t}\log p_t(\mathbf{x}_t) + \nabla_{\mathbf{x}_t}\log p_t(y\mid \mathbf{x}_t)\\
&\approx s_\theta(\mathbf{x}_t,t) + \nabla_{\mathbf{x}_t}\log p_t(y\mid \mathbf{x}_t),
\label{eq:dps1}
\end{split}
\end{equation}
where $s_\theta(\mathbf{x}_t,t)$ is the learned score network. The principal challenge of posterior sampling lies in approximating the time-dependent likelihood term $\nabla_{\mathbf{x}_t}\log p_t(y\mid \mathbf{x}_t)$.

A popular solution—Diffusion Posterior Sampling (DPS) \cite{chung2022diffusion}—approximates
\begin{equation}
p_t(y \mid \mathbf{x}_t)
= \int p(y\mid \mathbf{x}_0)\,p(\mathbf{x}_0\mid \mathbf{x}_t)\,d\mathbf{x}_0
\approx p\bigl(y\mid \hat{\mathbf{x}}_0(\mathbf{x}_t)\bigr),
\label{eq:dps2}
\end{equation}
where $\hat{\mathbf{x}}_0(\mathbf{x}_t)=\mathbb{E}[\mathbf{x}_0\mid \mathbf{x}_t]$. This point‐estimate approach is computationally efficient and yields strong empirical performance. Alternative schemes approximate both $p(y\mid \mathbf{x}_0)$ and $p(\mathbf{x}_0\mid \mathbf{x}_t)$ as Gaussians to better capture uncertainty \cite{zhu2023denoising}. However, most of these approaches assume linear forward models and do not readily extend to nonlinear inverse problems.

Building on Plug-and-Play (PnP) optimization \cite{graikos2022diffusion, zhu2023denoising}, stochastic PnP Monte Carlo algorithms—such as Generative PnP (GPnP) \cite{bouman2023generative} and PnP Monte Carlo (PMC) \cite{sun2023provable}—alternate between data‐consistency and prior‐refinement steps to approximate the full posterior. Structurally, these methods resemble DPS, but by avoiding point‐estimate approximations, they admit theoretical convergence to the true posterior (albeit at a higher cost). Recent advances further improve sampling via Sequential Monte Carlo \cite{wu2023practical, trippe2022diffusion, cardoso2023monte, dou2024diffusion} or variable‐splitting techniques \cite{coeurdoux2024plug, chen2022improved, lee2021structured, wu2024principled, zhang2024improving, xu2024provably, song2023solving, li2024decoupled, cai2025tone, codes2026}, and extend applicability to nonlinear inverse problems.

Despite their strengths, Monte Carlo sampling methods still face key limitations: they may require many iterations (leading to high computational cost and slow convergence in high dimensions), their approximation errors can introduce bias, performance often depends sensitively on hyperparameters, and they lack amortized inference for rapid repeated use.

\paragraph{Variational inference methods via Weighted score matching objective}
Inverse imaging problems are inherently ill-posed, where a single observation $\boldsymbol{y}$ can be consistent with multiple latent ground-truth images $\mathbf{x}_0$. By combining the measurement forward model with a learned diffusion prior via Bayes' rule, one can define the posterior distribution $p(\mathbf{x}_0|\boldsymbol{y}) \propto p(\boldsymbol{y}|\mathbf{x}_0)p(\mathbf{x}_0)$. 
However, directly sampling from this posterior is intractable. Variational Inference (VI) addresses this by approximating the true posterior $p(\mathbf{x}_0|\boldsymbol{y})$ with a tractable variational distribution $q(\mathbf{x}_0|\boldsymbol{y})$. The objective is to minimize the Kullback-Leibler (KL) divergence between this variational approximation and the true posterior:
\begin{equation}
\label{eq:vi_objective}
\begin{split}
    &\min_q D_{\text{KL}}(q(\mathbf{x}_0\vert\mathbf{y})\;\Vert\; p(\mathbf{x}_0\vert\mathbf{y})) \\
    &= \underbrace{-\mathbb{E}_{q(\mathbf{x}_0|\mathbf{y})} \left[ \log p(\mathbf{y}|\mathbf{x}_0) \right]}_{\mathcal{L}_{\text{data}}} 
    + \underbrace{D_{\text{KL}}\!\left( q(\mathbf{x}_0|\mathbf{y}) \;\Vert\; p(\mathbf{x}_0) \right)}_{\mathcal{L}_{\text{prior}}} + \underbrace{\log p(\mathbf{y})}_{\text{const}}.
\end{split}
\end{equation}
The first term, $\mathcal{L}_{\text{data}}$, enforces data fidelity consistent with the forward operator. The core challenge lies in minimizing the second term, $\mathcal{L}_{\text{prior}}$, which aligns the variational distribution with the diffusion prior.

Existing methods, such as Score Distillation Sampling (SDS)~\cite{poole2022dreamfusion} and RED-Diff~\cite{mardani2023variational}, simplify this optimization by implicitly assuming that the variational posterior $q(\mathbf{x}_0|\mathbf{y})$ is a degenerate Dirac delta distribution $q(\mathbf{x}_0|\mathbf{y}) = \delta(\mathbf{x}_0 - \boldsymbol{\mu})$ (or a Gaussian with vanishing variance $\sigma \to 0$) centered at the estimated parameters $\boldsymbol{\mu}$. 
Under this point-estimate assumption, the entropy term of the variational distribution is effectively discarded. Consequently, the minimization of the KL divergence simplifies to a weighted score matching objective for the point estimate $\boldsymbol{\mu}$:
\begin{equation}
\label{eq:red_diff_loss}
    \min_{\boldsymbol{\mu}} \quad \underbrace{\| \boldsymbol{y} - \mathcal{A}(\boldsymbol{\mu}) \|^2}_{\mathcal{L}_{\text{data}}} + \lambda \underbrace{\mathbb{E}_{t, \boldsymbol{\epsilon}} \left[ \omega(t) \left\| \boldsymbol{\epsilon}_\theta(\alpha_t \boldsymbol{\mu} + \sigma_t \boldsymbol{\epsilon}, t) - \boldsymbol{\epsilon} \right\|_2^2 \right]}_{\mathcal{L}_{\text{prior}}},
\end{equation}
where $\mathcal{A}(\cdot)$ represents the forward operator, and $\omega(t)$ is a weighting function (often chosen heuristically based on SNR).
While computationally tractable, this formulation represents a **biased approximation** of the true variational objective. 
By enforcing a degenerate distribution and neglecting the entropy term, the optimization theoretically degrades into Maximum A Posteriori (MAP) estimation. This induces significant optimization bias, manifesting as mode-seeking behavior, where the single estimate $\boldsymbol{\mu}$ fails to capture the necessary diversity and uncertainty of the full solution space.

Several recent methods have adopted Eq.~\ref{eq:red_diff_loss} for VI posterior estimation. 
For instance, ~\cite{feng2023score, feng2023efficient} integrate normalizing flows~\cite{kingma2018glow, dinh2016density} with diffusion models for accurate posterior modeling. However, their performance is constrained to lower-dimensional signals (e.g., $64\times64$) due to inherent limitations in normalizing flow's scalability. 
Recently, RED-Diff~\cite{mardani2023variational} proposes a variational approach that combines the prior loss with a data fidelity term to optimize an estimate of the clean image $\boldsymbol{x}$. 
VSS~\cite{he2024solving} manages to adopt the VI approach to solve zero-shot sparse-view CT reconstruction with a latent diffusion model.
However, these methods have been observed to usually suffer from mode collapse issues. 
To address this, RLSD \cite{zilberstein2024repulsive} adds a repulsive penalty between similar reconstructions. Although this increases sample diversity, its empirical assumptions limit gains in full‐posterior recovery, and mode collapse remains an issue. Our method aims to fundamentally overcomes these challenges by embedding a score‐based divergence distillation loss within a variational inference framework.

\paragraph{Amortized Inference for Inverse Problems via Integral KL Divergence}

Unlike optimization-based methods that solve for a specific instance, amortized inference aims to learn a parametric reconstruction network $\boldsymbol{x}_0 = g_\varphi(\boldsymbol{y})$ that maps observations directly to the posterior samples. 
Recent approaches like DAVI~\cite{lee2024diffusion} adopt the training objective from Diff-Instruct~\cite{luo2023diff}, replacing the standard KL divergence with a heuristic metric known as the IKL divergence. 
IKL modifies the objective by manually introducing a time-weighting function $\omega_t$ and integrating the marginal KL divergences over the entire diffusion process:
\begin{equation}
\label{eq:ikl_objective}
\begin{split}
    &\mathcal{L}_{\text{IKL}}(\varphi) = \int_0^T \omega_t \, D_{\text{KL}}(q_{\varphi,t}(\boldsymbol{x}_t|y) \| p_t(\boldsymbol{x}_t)) \mathrm{d}t \\
    &= \int_0^T \omega_t \, \mathbb{E}_{q_{\varphi,t}(\boldsymbol{x}_t|y)} \left[ \log \frac{q_{\varphi,t}(\boldsymbol{x}_t|y)}{p_t(\boldsymbol{x}_t)} \right] \mathrm{d}t,
\end{split}
\end{equation}
where $q_t(\boldsymbol{x}_t)$ is the distribution of the generated sample diffused to time $t$. 
Assuming the reconstruction network output is deterministic given $\boldsymbol{y}$ (or the implicit distribution is approximated as Gaussian), the marginal distribution $q_{\varphi,t}(\boldsymbol{x}_t)$ becomes a Gaussian centered at $\alpha_t g_\varphi(\boldsymbol{y})$. Consequently, its score $\nabla_{\boldsymbol{x}_t} \log q_{\varphi,t}(\boldsymbol{x}_t|y)$ is analytically computable.
Following the derivation in Diff-Instruct~\cite{luo2023diff}, the gradient of this objective with respect to the reconstruction network parameters $\varphi$ avoids backpropagation through the frozen score network $p_t$, and is given by:
\begin{equation}
\label{eq:ikl_gradient}
\begin{split}
    &\int_0^T \omega_t \, \mathbb{E}_{\boldsymbol{x}_0 = g_\varphi(y), \boldsymbol{\epsilon} \sim \mathcal{N}(0,\mathbf{I}), \boldsymbol{x}_t = \alpha_t \boldsymbol{x}_0 + \sigma_t \boldsymbol{\epsilon} } \\
    &\Biggl[
        \left( \nabla_{\boldsymbol{x}_t} \log q_{\varphi,t}(\boldsymbol{x}_t|y) \right.
        \left. - \nabla_{\boldsymbol{x}_t} \log p_t(\boldsymbol{x}_t) \right)^\top
        \frac{\partial \boldsymbol{x}_t}{\partial\boldsymbol{\varphi}} 
    \Biggr] \mathrm{d}t,
\end{split}
\end{equation}
where $\omega_t$ is the weight of different time step $t$.

While $\mathcal{L}_{\text{IKL}}$ provides a gradient signal for aligning the reconstruction network with the diffusion prior, it is crucial to note that Eq.~\ref{eq:ikl_objective} is not an exact estimation of the true posterior KL divergence $D_{\text{KL}}(q_0(\boldsymbol{x}_0|\boldsymbol{y}) \| p(\boldsymbol{x}_0))$. 
The transformation from the original KL to the time-integrated IKL relies on heuristic weighting $\omega_t$ and ignores the temporal dependencies of the diffusion trajectory. 
This discrepancy implies that minimizing $\mathcal{L}_{\text{IKL}}$ does not guarantee minimization of the actual variational bound, leading to biased posterior estimation and limited sample diversity compared to exact optimization. Besides the IKL divergence minimization, some other works have studied amortized inference in the context of diffusion acceleration ~\cite{luo2024one,luo2024diffpp,zhou2024score,zhou2024adversarial,luo2024diffstar,luodavid,yin2024one,wang2024integrating,wang2025uni,luo2025reward,luo2023comprehensive}.

\vspace*{-0.1cm}
\section{Method}
\label{sec:method}

In this section, we present PPM, a principled framework for posterior recovery in computational imaging inverse problems. 
PPM introduces a novel score-based divergence guides optimization via an unbiased gradient estimator, which demonstrates unparalleled performance in both variational inference and amortized inference.
Unlike the asymmetric mathematical form of KL divergence—whose tendency toward mode collapse limits reliable posterior estimation—our divergence provides a stable, unbiased surrogate objective that extends naturally to amortized settings, where an inference network learns to approximate posterior samples across instances. 
This formulation enables PPM to enhance both VI-based and AI-based methods, outperforming existing approaches including Score Distillation Sampling (SDS)~\cite{poole2022dreamfusion}, RED-Diff~\cite{mardani2023variational}, and Diffusion Prior-Based Amortized Variational Inference (DAVI)~\cite{lee2024diffusion}.

\begin{figure*}[htbp]
  \centering 
  \includegraphics[width=0.95\textwidth]{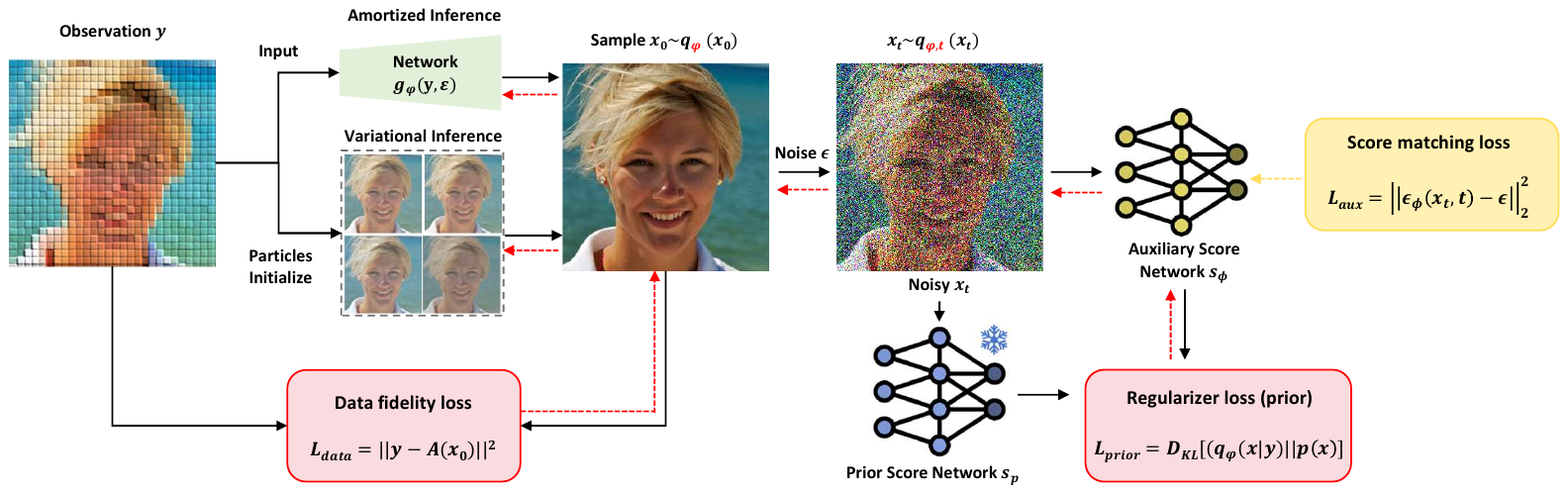} 
  \caption{\textbf{Overview of PPM.} PPM approximates the posterior by optimizing a variational posterior distribution $q_\varphi$, which is parameterized either by a set of particles or a neural network. The optimization minimizes a loss function composed of a data fidelity term and a novel score-based divergence. This divergence is computed between a pre-trained prior score model $\boldsymbol{s}_\theta$ and an auxiliary score network $\boldsymbol{s}_\phi$, which approximates the score of $q_\varphi$. Computing this divergence requires an auxiliary score network $\boldsymbol{s}_\phi$. The parameters of $q_\varphi$ and $\boldsymbol{s}_\phi$ are optimized alternatively.}
  \label{fig:pipeline} 
\end{figure*}

\subsection{Problem Formulation}
Following Bayes’ rule in Eq.~\ref{eq:vi_objective}, the optimization problem of classical VI-based inverse imaging solvers can be formulated as (neglecting the constant $\log p(\boldsymbol{y})$):
\begin{equation}
\label{equ:vi-inverse}
\begin{split}
    \varphi^*:
    &=\arg \min_{\varphi} D_{\text{KL}}(q_\varphi(\boldsymbol{x}|\boldsymbol{y})\vert\vert p(\boldsymbol{x}\vert\boldsymbol{y})) \\
    &=\arg \min_{\varphi} \frac{1}{2\sigma^2}||\boldsymbol{y}-\mathcal{A}(\boldsymbol{x})||^2 + D_{\text{KL}}\left( q_\varphi(\boldsymbol{x}|\boldsymbol{y}) \,\|\, p(\boldsymbol{x}) \right),
\end{split}
\end{equation}
where the first term enforces data fidelity consistent with observations $\boldsymbol{y}$ via the forward model $\mathcal{A}(\cdot)$. The second term encourages the variational posterior distribution $q_\varphi(\boldsymbol{x}|\boldsymbol{y})$ to align with the prior distribution $p(\boldsymbol{x})$ implicitly learned by a pre-trained diffusion model.

However, directly optimizing Eq.~\ref{equ:vi-inverse} with approximate objectives often induces mode collapse.
To address this, we replace the standard KL term with a formulation based on the integration of Fisher divergence, which allows for exact and unbiased optimization ~\cite{song2021maximum}.

\subsection{Exact Optimization via Reconstruction Score Matching}
We reformulate the KL divergence in Eq.~\ref{equ:vi-inverse} using the integral of the Fisher divergence. This leads to the following exact optimization objective:
\begin{equation}
\label{equ:fisher-div}
\begin{split}
    &D_{\text{KL}}(q_\varphi(\boldsymbol{x}|\boldsymbol{y}) \| p(\boldsymbol{x})) \\
    &= \frac{1}{2}\int_0^T g(t)^2 \, \mathbb{E}_{\substack{ 
        \boldsymbol{x}_0 \sim q_{\varphi}(\boldsymbol{\cdot|y}) \\
        \boldsymbol{x}_t | \boldsymbol{x}_0 \sim p(\boldsymbol{x}_t | \boldsymbol{x}_0)
        }} \left[ d(s_{q_{\varphi,t}}(x_t|y) - s_{p_t}(x_t)) \right] \, \mathrm{d}t \\
    &\text{with} \quad s_{q_{\varphi,t}}(x_t|y) = \nabla_{\boldsymbol{x}_t} \log q_{\varphi,t}(\boldsymbol{x}_t|\boldsymbol{y}), \\
    &s_{p_t}(x_t) = \nabla_{\boldsymbol{x}_t} \log p_t(\boldsymbol{x}_t),\\
    &d = ||\cdot||^2_2,
\end{split}
\end{equation}
where $q_{\varphi}(\boldsymbol{y})$ denotes sampling from the variational posterior (e.g., $\boldsymbol{x}_0 = \mu(\boldsymbol{y})$), $s_{p_t}(\boldsymbol{x}_t)$ and $s_{q_{\varphi,t}}(\boldsymbol{x}_t)$ are the scores of the prior and posterior distribution respectively.

Optimizing this objective requires differentiating through the score of the variational distribution, which depends on $\varphi$. To make this tractable, we derive the following gradient equivalence theorem.

\begin{theorem}[Gradient Equivalence Theorem]
\label{THE:PPM}
   If distribution \( q_\varphi(\boldsymbol{x}|\boldsymbol{y}) \) satisfies mild regularity conditions, for any score function \( s_{p_t}(\cdot) \), the following gradient equivalence holds:
    \begin{equation}
    \begin{split}
        &\mathbb{E}_{\substack{ 
        \boldsymbol{x}_0 \sim q_{\varphi}(\boldsymbol{\cdot|y}) \\
        \boldsymbol{x}_t | \boldsymbol{x}_0 \sim p(\boldsymbol{x}_t | \boldsymbol{x}_0)
        }} \frac{\partial}{\partial \varphi} \left[  d\left( s_{q_{\varphi,t}}(\boldsymbol{x}_t |\boldsymbol{y}) - s_{p_t}(\boldsymbol{x}_t) \right) \right]\\
        &= \mathbb{E} \bigg[ d'
        \left( s_{q_{\varphi,t}}(\boldsymbol{x}_t|\boldsymbol{y}) - s_{p_t}(\boldsymbol{x}_t) \right) \frac{\partial}{\partial \varphi} s_{q_{\varphi,t}}(\boldsymbol{x}_t|\boldsymbol{y}) \\
        &\quad +d'\left( s_{q_{\varphi,t}}(\boldsymbol{x}_t|\boldsymbol{y}) - s_{p_t}(\boldsymbol{x}_t) \right)\frac{\partial \boldsymbol{x}_t}{\partial \varphi}  \bigg] \\
        &= \frac{\partial}{\partial \varphi} \mathbb{E} \Bigg[ -\left\{ d' \left( s_{q_{\text{sg}[\varphi],t}}(\boldsymbol{x}_t|\boldsymbol{y}) - s_{p_t}(\boldsymbol{x}_t) \right) \right\}^T \\
        &\quad \cdot \left\{ s_{q_{\text{sg}[\varphi],t}}(\boldsymbol{x}_t|\boldsymbol{y}) - \nabla_{\boldsymbol{x}_t} \log p_t(\boldsymbol{x}_t | \boldsymbol{x}_0) \right\} \\
        &\quad + d \left( s_{q_{\text{sg}[\varphi],t}}(\boldsymbol{x}_t|\boldsymbol{y}) - s_{p_t}(\boldsymbol{x}_t) \right)\Bigg],
    \end{split}
    \end{equation}
    where $sg$ denotes the stop gradient operator. 
\end{theorem}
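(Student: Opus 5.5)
The plan is to treat the left-hand side as the total $\varphi$-derivative of the per-time objective
\[
\mathcal{L}_t(\varphi)=\mathbb{E}_{\boldsymbol{x}_0\sim q_\varphi(\cdot|\boldsymbol{y}),\,\boldsymbol{x}_t|\boldsymbol{x}_0}\big[d\big(s_{q_{\varphi,t}}(\boldsymbol{x}_t|\boldsymbol{y})-s_{p_t}(\boldsymbol{x}_t)\big)\big].
\]
I will split this derivative into two contributions and show that each matches one piece of the final expression.

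\textbf{Reparameterization and splitting.} First, reparameterize the samples as $\boldsymbol{x}_0=g_\varphi(\boldsymbol{z})$ (or $\boldsymbol{x}_0=\boldsymbol{\mu}_k$ for particles) and $\boldsymbol{x}_t=\alpha_t\boldsymbol{x}_0+\sigma_t\boldsymbol{\epsilon}$. This makes the sampling distribution of $(\boldsymbol{z},\boldsymbol{\epsilon})$ independent of $\varphi$. Under the mild regularity conditions (smooth, integrable dependence on $\varphi$, dominated derivatives), differentiation and expectation can be interchanged. The chain rule then gives two terms, which is the middle line of the statement.

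\emph{Term (A): explicit dependence.} This is $\mathbb{E}\big[d'(\cdot)^T\partial_\varphi s_{q_{\varphi,t}}(\boldsymbol{x}_t|\boldsymbol{y})\big]$ with $\boldsymbol{x}_t$ held fixed.

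\emph{Term (B): dependence through the sample location.} This is $\mathbb{E}\big[\nabla_{\boldsymbol{x}_t}d(\cdot)\,\partial\boldsymbol{x}_t/\partial\varphi\big]$. Here I read the statement's "$d'\cdot\partial\boldsymbol{x}_t/\partial\varphi$" as this chain-rule contraction.

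Term (B) equals $\partial_\varphi\mathbb{E}\big[d(s_{q_{\text{sg}[\varphi],t}}-s_{p_t})\big]$, because once the score's parameters are stop-gradiented, the only remaining $\varphi$-dependence is through $\boldsymbol{x}_t$. This is the last term of the claimed expression.

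\textbf{Term (A) via the denoising score identity.} The key tool is the identity
\[
\mathbb{E}_{q_{\varphi,t}}\big[\boldsymbol{u}(\boldsymbol{x}_t)^T s_{q_{\varphi,t}}(\boldsymbol{x}_t|\boldsymbol{y})\big]=\mathbb{E}_{\boldsymbol{x}_0\sim q_\varphi,\,\boldsymbol{x}_t|\boldsymbol{x}_0}\big[\boldsymbol{u}(\boldsymbol{x}_t)^T\nabla_{\boldsymbol{x}_t}\log p_t(\boldsymbol{x}_t|\boldsymbol{x}_0)\big],
\]
valid for any fixed vector field $\boldsymbol{u}$ of sufficient integrability. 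It follows by writing $q_{\varphi,t}(\boldsymbol{x}_t)=\int p_t(\boldsymbol{x}_t|\boldsymbol{x}_0)\,q_\varphi(\boldsymbol{x}_0|\boldsymbol{y})\,d\boldsymbol{x}_0$, so that $q_{\varphi,t}\nabla\log q_{\varphi,t}=\nabla q_{\varphi,t}=\int \nabla p_t(\cdot|\boldsymbol{x}_0)\,q_\varphi$.

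Fix $\boldsymbol{u}:=d'\big(s_{q_{\varphi^\ast,t}}-s_{p_t}\big)$ at a frozen parameter value $\varphi^\ast$. Then
\[
F(\varphi):=\mathbb{E}_{\varphi}\big[\boldsymbol{u}^T\big(s_{q_{\varphi,t}}-\nabla\log p_t(\boldsymbol{x}_t|\boldsymbol{x}_0)\big)\big]=0
\]
for \emph{every} $\varphi$, so $\partial_\varphi F=0$. Differentiating $F$ with the reparameterized samples again produces two pieces:
\begin{itemize}
\item $\mathbb{E}\big[\boldsymbol{u}^T\partial_\varphi s_{q_{\varphi,t}}\big]$, which at $\varphi=\varphi^\ast$ is exactly term (A);
\item the derivative through the samples, namely $\partial_\varphi\mathbb{E}\big[\boldsymbol{u}(\boldsymbol{x}_t)^T\big(s_{q_{\text{sg}[\varphi],t}}(\boldsymbol{x}_t)-\nabla\log p_t(\boldsymbol{x}_t|\boldsymbol{x}_0)\big)\big]$.
\end{itemize}
In the second piece, $\boldsymbol{u}$ depends on $\varphi$ only through $\boldsymbol{x}_t$, which is consistent with the stop-gradient placement in the statement. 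Setting $\partial_\varphi F=0$ therefore gives term (A) $= -\partial_\varphi\mathbb{E}\big[\{d'(\cdot)\}^T\{s_{q_{\text{sg}[\varphi],t}}-\nabla\log p_t(\boldsymbol{x}_t|\boldsymbol{x}_0)\}\big]$. Adding term (B) yields the stated formula. Note that $s_{p_t}$ never needed to be a true score; only $d$'s differentiability was used, which is why the result holds "for any score function".

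\textbf{Main obstacle.} The delicate step is justifying the score-projection identity and its differentiation in $\varphi$. This requires:
\begin{itemize}
\item that $q_{\varphi,t}$ be positive and differentiable in both $\boldsymbol{x}_t$ and $\varphi$, which Gaussian smoothing provides for $t>0$;
\item that the boundary terms in the integration-by-parts vanish;
\item that dominated convergence permits exchanging $\partial_\varphi$ with the expectation.
\end{itemize}
I would state these as the "mild regularity conditions", e.g.\ finite second moments of $q_\varphi$ and polynomial growth of $d'$, $s_{p_t}$ and their derivatives. Under these, Gaussian tails of $p_t(\cdot|\boldsymbol{x}_0)$ make all integrands dominated.
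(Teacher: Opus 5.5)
Your proposal is correct and follows essentially the same route as the paper. You split the total derivative into the explicit-score term and the sample-location term, identify the latter with $\partial_\varphi\mathbb{E}[d(s_{q_{\text{sg}[\varphi],t}}-s_{p_t})]$, and resolve the former by differentiating the score-projection identity with $\boldsymbol{u}=d'(s_{q_{\text{sg}[\varphi],t}}-s_{p_t})$; freezing $\boldsymbol{u}$ at $\varphi^\ast$ and reading the middle line as a chain-rule contraction are just cleaner statements of what the paper's stop-gradient notation and proof already do.
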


\begin{proof}
The proof is based on the Score-projection identity, which bridges denoising score matching and denoising auto-encoders. Let $\boldsymbol{u}(\cdot,\varphi)$ be a vector-valued function. Using the notations of Theorem \ref{THE:PPM}, under mild conditions, the following identity holds:
\begin{equation}
\label{eq:score_identity}
\begin{split}
    &\mathbb{E}_{\substack{\boldsymbol{x}_0 \sim q_{\varphi}(\cdot|\boldsymbol{y}) \\ \boldsymbol{x}_t|\boldsymbol{x}_0 \sim p_t(\boldsymbol{x}_t|\boldsymbol{x}_0)}} \bigg[ \boldsymbol{u}(\boldsymbol{x}_t,\varphi)^T \Big\{ s_{q_{\varphi,t}}(\boldsymbol{x}_t|\boldsymbol{y}) \\
    &\quad - \nabla_{\boldsymbol{x}_t} \log p_t(\boldsymbol{x}_t|\boldsymbol{x}_0) \Big\} \bigg] = 0.
\end{split}
\end{equation}
We start by applying the chain rule for the total derivative with respect to $\varphi$. The function $d(\cdot)$ depends on $\varphi$ both directly through the score function $s_{q_{\varphi,t}}$ and indirectly through the distribution $\boldsymbol{x}_t \sim q_{\varphi,t}$ (as $\boldsymbol{x}_t$ depends on $\boldsymbol{x}_0 \sim q_{\varphi}(\cdot|\boldsymbol{y})$). This gives two terms:
\begin{equation}
\begin{split}
&\mathbb{E}_{\boldsymbol{x}_t \sim q_{\varphi,t}} \frac{\partial}{\partial \varphi} d(s_{q_{\varphi,t}}(\boldsymbol{x}_t|\boldsymbol{y}) - s_{p_t}(\boldsymbol{x}_t)) \\
&= \mathbb{E} \bigg[ d'(s_{q_{\varphi,t}}(\boldsymbol{x}_t|\boldsymbol{y}) - s_{p_t}(\boldsymbol{x}_t))^T \frac{\partial}{\partial \varphi} s_{q_{\varphi,t}}(\boldsymbol{x}_t|\boldsymbol{y}) \\
&\quad + \frac{\partial}{\partial \boldsymbol{x}_t} d(s_{q_{\varphi,t}}(\boldsymbol{x}_t|\boldsymbol{y}) - s_{p_t}(\boldsymbol{x}_t))^T \frac{\partial \boldsymbol{x}_t}{\partial \varphi} \bigg].
\end{split}
\end{equation}
To resolve the first term, we differentiate Eq. \eqref{eq:score_identity} with respect to $\varphi$. Since the expectation is zero for all $\varphi$, its derivative is also zero. We apply the total derivative:
\begin{equation}
\begin{split}
0 &= \frac{\partial}{\partial \varphi} \mathbb{E} \left[ \boldsymbol{u}(\boldsymbol{x}_t,\varphi)^T \{s_{q_{\varphi,t}}(\boldsymbol{x}_t|\boldsymbol{y}) - \nabla_{\boldsymbol{x}_t} \log p_t(\boldsymbol{x}_t|\boldsymbol{x}_0)\} \right] \\
&= \mathbb{E} \bigg[ \frac{\partial}{\partial \varphi} \{ \boldsymbol{u}(\boldsymbol{x}_t,\varphi)^T (s_{q_{\text{sg}[\varphi],t}}(\boldsymbol{x}_t|\boldsymbol{y}) - \nabla_{\boldsymbol{x}_t} \log p_t(\boldsymbol{x}_t|\boldsymbol{x}_0)) \} \\
&\quad + \boldsymbol{u}(\boldsymbol{x}_t,\varphi)^T \frac{\partial}{\partial \varphi} \{s_{q_{\varphi,t}}(\boldsymbol{x}_t|\boldsymbol{y})\} \bigg].
\end{split}
\end{equation}
Rearranging the terms yields:
\begin{equation}
\label{eq:rearranged_identity}
\begin{split}
&\mathbb{E} \left[ \boldsymbol{u}(\boldsymbol{x}_t,\varphi)^T \frac{\partial}{\partial \varphi} \{s_{q_{\varphi,t}}(\boldsymbol{x}_t|\boldsymbol{y})\} \right] \\
&= - \frac{\partial}{\partial \varphi} \mathbb{E} \left[ \boldsymbol{u}(\boldsymbol{x}_t,\varphi)^T \{s_{q_{\text{sg}[\varphi],t}}(\boldsymbol{x}_t|\boldsymbol{y}) - \nabla_{\boldsymbol{x}_t} \log p_t(\boldsymbol{x}_t|\boldsymbol{x}_0)\} \right].
\end{split}
\end{equation}
Let $\boldsymbol{u}(\boldsymbol{x}_t,\varphi) = d'(s_{q_{\text{sg}[\varphi],t}}(\boldsymbol{x}_t|\boldsymbol{y}) - s_{p_t}(\boldsymbol{x}_t))$. Substituting this specific function $\boldsymbol{u}$ into Eq. \eqref{eq:rearranged_identity} allows us to replace the first term in our objective expansion. Furthermore, $\varphi$ does not appear in the differentiation with respect to $\boldsymbol{x}_t$ for the second term:
\begin{equation}
\begin{split}
&\mathbb{E} \left[ \frac{\partial}{\partial \boldsymbol{x}_t} d(s_{q_{\varphi,t}}(\boldsymbol{x}_t|\boldsymbol{y}) - s_{p_t}(\boldsymbol{x}_t))^T \frac{\partial \boldsymbol{x}_t}{\partial \varphi} \right] \\
&= \frac{\partial}{\partial \varphi} \mathbb{E} \left[ d(s_{q_{\text{sg}[\varphi],t}}(\boldsymbol{x}_t|\boldsymbol{y}) - s_{p_t}(\boldsymbol{x}_t)) \right].
\end{split}
\end{equation}
Combining these results yields exactly the Gradient Equivalence Theorem.
\end{proof}

In Theorem~\ref{THE:PPM}, the variational score $\boldsymbol{s}_{q_{\varphi,t}}(\boldsymbol{x}_t|\boldsymbol{y})$ is estimated by an auxiliary neural network $\boldsymbol{s}_\phi(\boldsymbol{x}_t, t)$. This network is trained on the current reconstructions $\boldsymbol{x} \sim q_\varphi(\cdot|\boldsymbol{y})$ using a standard denoising score matching objective. We refer to $\boldsymbol{s}_\phi$ as the \emph{auxiliary model}, and its training loss is:
\begin{equation}
\label{eq:aux_loss}
\begin{split}
    &\mathcal{L}_{aux}(\phi) \\
    &= \int_0^T \lambda(t)\mathbb{E}_{\substack{ 
        \boldsymbol{x}_0 \sim q_{\varphi}(\boldsymbol{\cdot|y}) \\
        \boldsymbol{x}_t | \boldsymbol{x}_0 \sim p_t(\boldsymbol{x}_t | \boldsymbol{x}_0)
        }} \left\| s_\phi(\mathbf{x}_t, t) - \nabla_{\mathbf{x}_t}\log p_{t}(\mathbf{x}_t|\mathbf{x}_0) \right\|_2^2 \mathrm{d}t.
\end{split}
\end{equation}
We implement $\boldsymbol{s}_\phi$ as a copy of the pre-trained model $\boldsymbol{s}_{p}$. This preserves the prior information in $\boldsymbol{s}_{p}$ while adapting to the conditional distribution $q_{\varphi,t}(\boldsymbol{x}_t|\boldsymbol{y})$ during optimization.

\begin{algorithm*}[t]
\caption{Principled Posterior Matching (PPM) for Inverse Problems}
\label{alg:ppm_unified}
\begin{algorithmic}[1]
    \Require Pretrained diffusion network $\boldsymbol{s}_{\theta}$, Auxiliary network $\boldsymbol{s}_\phi$ (init via a copy of $\boldsymbol{s}_{p}$), Noising schedule $\{\alpha_t, \sigma_t\}$, Any distance function $d(\cdot)$, Noise scale $\boldsymbol{h}$, Data fidelity weight $\lambda$, Time steps weight $w(t)$, Total time steps $T$.
    \Require \textbf{Mode}: \emph{Variational} (optimize particles $\boldsymbol{\mu}$ for fixed $\boldsymbol{y}$) or \emph{Amortized} (optimize reconstruction network $g_{\boldsymbol{\varphi}}$ for observations $\mathcal{Y}$).
    
    \State \textbf{Initialize} $\boldsymbol{\varphi}$:
    \If{\textbf{Mode} is \emph{Variational}}
        \State $\boldsymbol{\varphi} \gets \{\boldsymbol{\mu}_k\}_{k=1}^K$ initialized with measurement $\boldsymbol{y}$.
    \ElsIf{\textbf{Mode} is \emph{Amortized}}
        \State $\boldsymbol{\varphi} \gets$ weights of reconstruction network $g_{\boldsymbol{\varphi}}$ initialized from $\boldsymbol{s}_{\theta}$.
    \EndIf

    \While{not converged}
        \State \textbf{1. Sampling:}
        \If{\emph{Variational}} \Comment{\emph{Variational inference}}
            \State Use target measurement $\boldsymbol{y}$. Sample $\boldsymbol{x}_0 = \boldsymbol{\mu}_k$ from current particles.
        \Else \Comment{\emph{Amortized inference}}
            \State Sample batch $\boldsymbol{y} \sim \mathcal{Y}$. Add noise $\boldsymbol{\sigma} \sim \mathcal{N}(\mathbf{0}, \boldsymbol{h}\mathbf{I})$ to y. $\boldsymbol{y'} = \boldsymbol{y} +  \boldsymbol{\sigma}$.
            \State Generate $\boldsymbol{x}_0 = g_{\boldsymbol{\varphi}}(\boldsymbol{y'})$.
        \EndIf
        
        \State Sample $t \sim \mathcal{U}(0, T), \boldsymbol{\epsilon} \sim \mathcal{N}(\mathbf{0}, \mathbf{I})$. Compute $\boldsymbol{x}_t = \alpha_t \boldsymbol{x}_0 + \sigma_t \boldsymbol{\epsilon}$.
        
        \State \textbf{2. Auxiliary Score Update (Learning $\nabla \log q_{\varphi,t}(\boldsymbol{x_t}|\boldsymbol{y})$ with auxiliary network):}
        \State $\mathcal{L}_{\text{aux}} \gets w(t) \| \boldsymbol{s}_\phi(\boldsymbol{x}_t, t) - \nabla_{\boldsymbol{x}_t} \log p(\boldsymbol{x}_t|\boldsymbol{x}_0) \|^2_2$.
        \State $\boldsymbol{\phi} \gets \text{OptimizerStep}(\boldsymbol{\phi}, \mathcal{L}_{\text{aux}})$.
        
        \State \textbf{3. Reconstruction Optimization (Update $\boldsymbol{\varphi}$):}
        \State $\mathcal{L}_{\text{prior}} \gets - d'\left( \boldsymbol{s}_\phi(\boldsymbol{x}_t, t) - \boldsymbol{s}_{\theta}(\boldsymbol{x}_t, t) \right)^T \left\{ \boldsymbol{s}_\phi(\boldsymbol{x}_t, t) - \epsilon \right\} + d\left( \boldsymbol{s}_\phi(\boldsymbol{x}_t, t) - \boldsymbol{s}_{\theta}(\boldsymbol{x}_t, t) \right)$ via Theorem~\ref{THE:PPM}.
        \State $\mathcal{L}_{\text{data}} \gets \|\boldsymbol{y} - \mathcal{A}(\boldsymbol{x}_0)\|^2$.
        \State $\boldsymbol{\varphi} \gets \text{OptimizerStep}(\boldsymbol{\varphi}, \mathcal{L}_{\text{prior}} + \lambda\mathcal{L}_{\text{data}})$.
    \EndWhile
    
    \State \Return Optimized results $\boldsymbol{\varphi}$ (Particles $\boldsymbol{\mu}$ or reconstruction network $g_{\boldsymbol{\varphi}}$)
\end{algorithmic}
\end{algorithm*}

\subsection{Unified Optimization Framework}
\label{sec:unified_optimization}

Based on the gradient equivalence in Theorem~\ref{THE:PPM}, we formulate the total objective function for PPM as a combination of a data fidelity term and an unbiased prior regularization term:

\begin{equation}
\label{eq:total_objective}
\begin{split}
    &\mathcal{L}_{\text{PPM}}(\boldsymbol{\varphi}) = \mathcal{L}_{\text{prior}}(\boldsymbol{\varphi}) + \lambda \mathcal{L}_{\text{data}}(\boldsymbol{\varphi}), \\
    &\mathcal{L}_{\text{data}}(\boldsymbol{\varphi}) = \mathbb{E}_{\boldsymbol{x}_0 \sim q_{\boldsymbol{\varphi}}(\cdot|\boldsymbol{y})} \left[ \|\boldsymbol{y} - \mathcal{A}(\boldsymbol{x}_0)\|^2 \right], \\
    &\mathcal{L}_{\text{prior}}(\boldsymbol{\varphi}) = \mathbb{E}_{\substack{t \sim \mathcal{U}(0, T) \\ \boldsymbol{x}_0 \sim q_{\varphi}(\cdot|\boldsymbol{y}) \\ \boldsymbol{x}_t | \boldsymbol{x}_0 \sim p(\boldsymbol{x}_t | \boldsymbol{x}_0)}} \Big[ - d'\big( \boldsymbol{s}_\phi(\boldsymbol{x}_t, t) - \boldsymbol{s}_{p}(\boldsymbol{x}_t, t) \big)^\top \\
    &\big( \boldsymbol{s}_\phi(\boldsymbol{x}_t, t) - \nabla_{\mathbf{x}_t}\log p_{t}(\mathbf{x}_t|\mathbf{x}_0) \big) + d\big( \boldsymbol{s}_\phi(\boldsymbol{x}_t, t) - \boldsymbol{s}_{p}(\boldsymbol{x}_t, t) \big) \Big],
\end{split}
\end{equation}

where $\mathcal{L}_{\text{data}}$ enforces consistency with the measurement $\boldsymbol{y}$, and $\mathcal{L}_{\text{prior}}$ aligns the reconstruction with the diffusion prior using the unbiased gradient estimator derived in Theorem~\ref{THE:PPM}.
While our standard implementation utilizes the $L_2$ norm (Fisher divergence) where $d(\boldsymbol{u}, \boldsymbol{v}) = \|\boldsymbol{u}-\boldsymbol{v}\|_2^2$, our framework is theoretically general: it naturally extends to other divergences by selecting different convex distance metrics $d$, offering scalability to various score matching variants.

PPM provides a unified training logic for both VI and AI inference paradigms. The unified training procedure is summarized in Algorithm~\ref{alg:ppm_unified}. Despite the difference in parameterization, both paradigms operate via an identical alternating two-stage process:

\begin{itemize}
    \item \textbf{Stage 1: Auxiliary Score Learning (Update $\phi$).} We update the auxiliary network $\boldsymbol{s}_\phi$ to minimize $\mathcal{L}_{\text{aux}}$ (Eq.~\ref{eq:aux_loss}). This step effectively learns the score $\nabla \log q_{\boldsymbol{\varphi},t}$ of the current variational distribution (defined either by particles or a reconstruction network).
    \item \textbf{Stage 2: Reconstruction Optimization (Update $\varphi$).} We update the variational parameters $\boldsymbol{\varphi}$ to minimize $\mathcal{L}_{\text{PPM}}$ (Eq.~\ref{eq:total_objective}). This step utilizes the gradient provided by the now-fixed $\boldsymbol{s}_\phi$ to drive the posterior estimate towards the true prior and measurement.
\end{itemize}

\textbf{Variational Inference (Particle-based).}
In the VI setting, we optimize for a specific single observation $\boldsymbol{y}$. The variational parameters are defined as a set of image particles $\boldsymbol{\varphi} = \{\boldsymbol{\mu}_k\}_{k=1}^K$, initialized as $\boldsymbol{\mu}_k = \boldsymbol{y}$ (or a rough inverse). The distribution $q_{\boldsymbol{\varphi}}(\boldsymbol{x}|\boldsymbol{y})$ is represented empirically by these particles. The optimization refines $\boldsymbol{\mu}_k$ to capture the complex, multi-modal posterior landscape specific to $\boldsymbol{y}$.

\textbf{Amortized Inference (Neural network-based).}
In the AI setting, we learn a global mapping for any observation $\boldsymbol{y} \sim p(\boldsymbol{y})$. The variational parameter $\boldsymbol{\varphi}$ denotes the weights of a neural network $g_{\boldsymbol{\varphi}}$, such that $\boldsymbol{x} = g_{\boldsymbol{\varphi}}(\boldsymbol{y})$. To accelerate convergence, we implement $g_{\boldsymbol{\varphi}}$ as a copy of the pre-trained diffusion U-Net (initialized with $\boldsymbol{\theta}$), enabling efficient, single-step reconstruction. This amortizes the optimization cost, allowing rapid inference at test time.

Beyond the standard formulation presented above, we highlight the inherent extensibility of the PPM framework. While this work primarily employs the squared $L_2$ norm as the metric function $d(\cdot)$—which corresponds to the standard Fisher divergence and minimizes the Kullback-Leibler divergence—our theoretical derivations (Theorem~\ref{THE:PPM}) are not restricted to this choice. The metric $d(\cdot)$ can be substituted with a broader class of convex distance functions. This flexibility allows PPM to be naturally generalized to measure and minimize a wider spectrum of divergences, positioning it as a versatile foundation for score-based variational inference.
\section{Theoretical Analysis}
\label{sec:TheoreticalAnalysis}

In this section, we provide a rigorous theoretical comparison between PPM and three leading categories of baselines of diffusion-based inverse problem solvers. We demonstrate that while these methods offer practical utility, they all rely on biased approximations of the true posterior objective.

\subsection{Bias in Optimization-based VI (RED-Diff and RLSD)}
Optimization-based methods like RED-Diff~\cite{mardani2023variational} and RLSD~\cite{zilberstein2024repulsive} formulate the inverse problem as a variational optimization but share a fundamental theoretical flaw in their handling of entropy.

\subsubsection{Missing Exact Entropy (RED-Diff)}
RED-Diff implicitly models the posterior as a Dirac delta distribution. This effectively removes the entropy term $H(q)$ from the KL divergence, collapsing the problem to Maximum A Posteriori (MAP) estimation. Consider the standard decomposition:
\begin{equation}
D_{KL}(q(\boldsymbol{x})||p(\boldsymbol{x}|\boldsymbol{y})) = -\mathbb{E}_{q}[\log p(\boldsymbol{x}|\boldsymbol{y})] - H(q).
\end{equation}
Under the Dirac assumption $q(\boldsymbol{x})=\delta(\boldsymbol{x}-\boldsymbol{\mu})$, the entropy vanishes ($\nabla_{\boldsymbol{\mu}}H=0$) and the energy term collapses to $\log p(\boldsymbol{\mu}|\boldsymbol{y})$. Consequently, the minimization problem becomes mathematically equivalent to MAP estimation:
\begin{equation}
\min_{\boldsymbol{\mu}} D_{KL}(q||p) \iff \max_{\boldsymbol{\mu}} \log p(\boldsymbol{\mu}|\boldsymbol{y}).
\end{equation}
This confirms that without the entropy term, the optimization inherently seeks the single most probable mode rather than the full distribution, explaining the severe mode-seeking behavior observed in Figure 1.

\subsubsection{Surrogate Entropy via Repulsion (RLSD)}
While RLSD mitigates mode collapse using particles, it introduces ad-hoc repulsive regularization instead of optimizing the true entropy. This repulsion acts as a heuristic proxy and does not correspond to the true score of the variational distribution, resulting in artificial uncertainty dependent on hyperparameters.

\subsection{Bias in MCMC Sampling (DPS)}
Diffusion Posterior Sampling (DPS)~\cite{chung2022diffusion} approximates samples by modifying the reverse diffusion score with a likelihood guidance term. Since the likelihood $p(\boldsymbol{y}|\boldsymbol{x}_t)$ is intractable, DPS approximates it using a clean data estimate $\hat{\boldsymbol{x}}_0(\boldsymbol{x}_t) = \mathbb{E}[\boldsymbol{x}_0|\boldsymbol{x}_t]$. This violates Jensen's inequality by treating the expectation of the likelihood as the likelihood of the expectation. This introduces systematic score estimation error, particularly in early diffusion stages, which accumulates over the trajectory.

\subsection{Bias in Amortized Inference (DAVI)}
DAVI~\cite{lee2024diffusion} employs the Integral KL (IKL) divergence~\cite{luo2023diff} to train amortized generators. Here, we formally prove that replacing the standard prior KL divergence with the IKL objective fundamentally alters the optimization target.

\subsubsection{Problem Formulation}
Standard VI minimizes $\mathcal{J}_{VI} = \mathbb{E}_{q_{\varphi}}[-\log p(\boldsymbol{y}|\boldsymbol{x})] + D_{KL}(q_{\varphi}||p)$. IKL-based methods replace the prior term with an integrated objective $\mathcal{J}_{Mod} = \mathbb{E}[-\log p(\boldsymbol{y}|\boldsymbol{x})] + \int \omega(t) D_{KL}(q_{t}||p_{t})dt$.

\subsubsection{KL Contraction and Implicit Prior}
Assuming a Variance Preserving (VP) schedule and a Gaussian Mean Shift assumption ($p(\boldsymbol{x})=\mathcal{N}(\boldsymbol{0},\boldsymbol{I}), q(\boldsymbol{x}|\boldsymbol{y})=\mathcal{N}(\boldsymbol{\Delta}, \boldsymbol{I})$), the KL divergence scales quadratically: $D_{KL}(q_{t}||p_{t}) \approx \alpha_{t}^{2} \cdot D_{KL}(q_{0}||p_{0})$.

Substituting this scaling law into the IKL integral, we rewrite the modified objective as:
\begin{equation}
\mathcal{J}_{IKL} \approx \left(\int_{0}^{T}\omega(t)\alpha_{t}^{2}dt\right) \cdot D_{KL}(q_{\varphi}(\boldsymbol{x}|\boldsymbol{y})||p(\boldsymbol{x})) = \beta D_{KL}(q_{\varphi}||p).
\end{equation}
This effectively scales the prior weight by $\beta$. Expanding the terms reveals the implicit posterior target $p'(\boldsymbol{x}|\boldsymbol{y})$:
\begin{equation}
\mathcal{J}_{Mod} = \beta \mathbb{E}_{q_{\varphi}}\left[\log q_{\varphi}(\boldsymbol{x}|\boldsymbol{y}) - \left(\frac{1}{\beta}\log p(\boldsymbol{y}|\boldsymbol{x}) + \log p(\boldsymbol{x})\right)\right].
\end{equation}
Exponentiating this result implies optimization against a Distorted Prior $p'(\boldsymbol{x}) \propto p(\boldsymbol{x})^{\beta}$. Since typically $\beta < 1$, the effective prior is a flattened, high-temperature version of the true prior, proving that IKL leads to biased posterior estimation.
\vspace*{-0.1cm}
\section{Experiment}
\label{sec:experiment}


In this section, we compare our proposed method, PPM, with state-of-the-art (SoTA) diffusion model-based methods for solving inverse problems, particularly those employing variational inference. Our experiments aim to demonstrate PPM's capability to generate diverse reconstructions while maintaining fidelity, thereby recovering the full posterior and surpassing baselines that typically yield homogeneous results. 

\subsection{Toy Examples of 2D Posterior Estimation}
\label{subsec:2dtoy}
We first validated PPM on two simple 2D posterior‐estimation tasks. In each task, the hidden variable $x\in \mathbb{R}^{2 \times 1}$ follows a Gaussian‐mixture prior, and measurements obey the linear model
\begin{equation}
y = F x + n,
\end{equation}
where $F$ is a linear projection matrix and $n\sim\mathcal{N}(0,\sigma^2I)$ is Gaussian noise. We compare PPM against two related baselines, RED-Diff \cite{mardani2023variational} and RLSD \cite{zilberstein2024repulsive}, using the same pre-trained pixel-space diffusion prior. To ensure fair compaison, we adapt RLSD by replacing its latent diffusion backbone with our pixel-space prior and incorporating its repulsive term into a pixel-space SDS loss, so that all methods operate under identical conditions.

Figure \ref{fig:teaser} highlights the significant differences in each method’s ability to capture multimodal posteriors. RED-Diff exhibits severe mode collapse: its samples converge to accurate point estimates but lack diversity. RLSD produces more varied samples, yet its empirical repulsive term leads to either measurement inconsistency (top example) or poor prior adherence (bottom example). In contrast, PPM faithfully recovers every posterior mode, yielding both accurate and diverse samples that closely match the ground‐truth bimodal distribution.

\begin{figure*}[!ht]
    \centering
    \setlength{\tabcolsep}{1pt}
    \setlength{\fboxrule}{1pt}
    \resizebox{0.99\textwidth}{!}{
    \begin{tabular}{c}
    \begin{tabular}{ccccc|cccc}
        & & Observation & Ground Truth & & & Observation & Ground Truth &
        \\
        & &
        \includegraphics[width=0.2\textwidth]{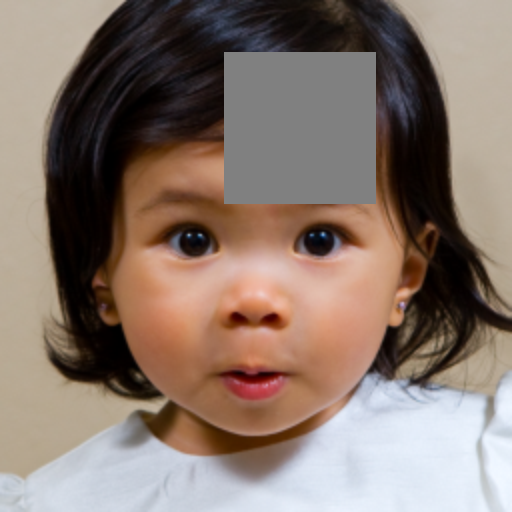} &
        \includegraphics[width=0.2\textwidth]{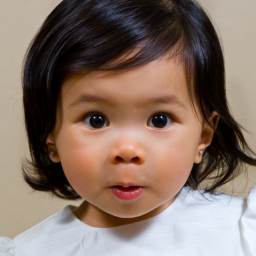} & & &
        \includegraphics[width=0.2\textwidth]{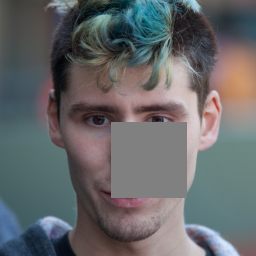} &
        \includegraphics[width=0.2\textwidth]{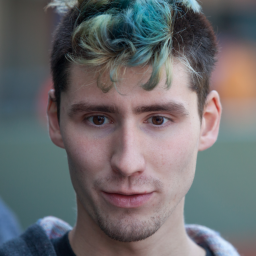} &
        \\
        \begin{turn}{90} \,\,\,\,\,\,\,\,\,\,\,\,\,\,\,\,\,\,\small{DPS} \end{turn} &
        \includegraphics[width=0.2\textwidth]{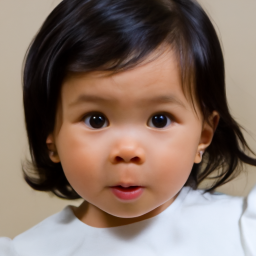} &
        \includegraphics[width=0.2\textwidth]{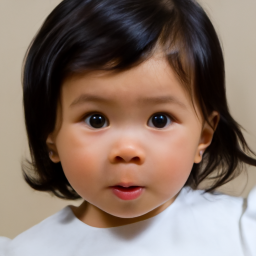} &
        \includegraphics[width=0.2\textwidth]{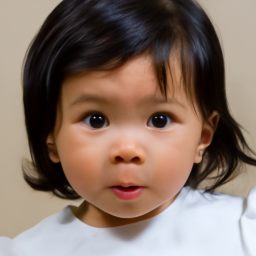} &
        \includegraphics[width=0.2\textwidth]{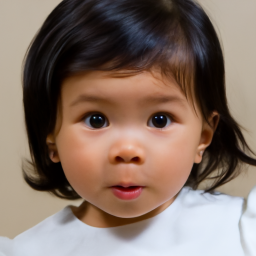} &
        \includegraphics[width=0.2\textwidth]{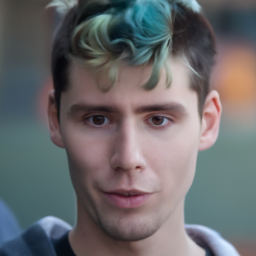} &
        \includegraphics[width=0.2\textwidth]{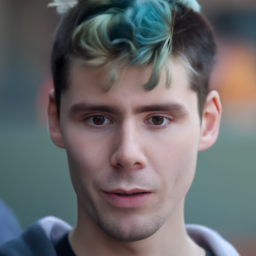} &
        \includegraphics[width=0.2\textwidth]{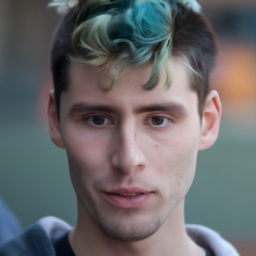} &
        \includegraphics[width=0.2\textwidth]{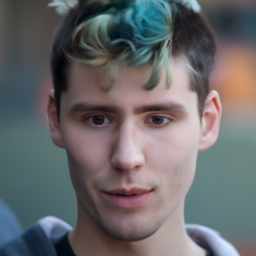}
        \\
        \begin{turn}{90} \,\,\,\,\,\,\,\,\,\,\small{RED-Diff} \end{turn} &
        \includegraphics[width=0.2\textwidth]{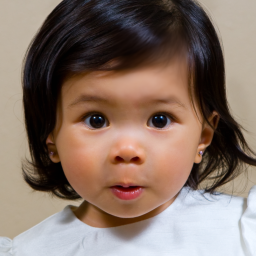} &
        \includegraphics[width=0.2\textwidth]{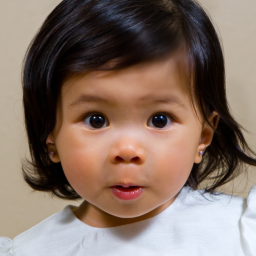} &
        \includegraphics[width=0.2\textwidth]{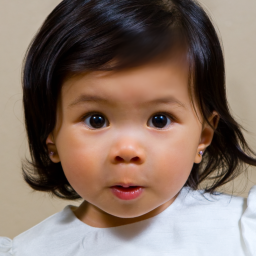} &
        \includegraphics[width=0.2\textwidth]{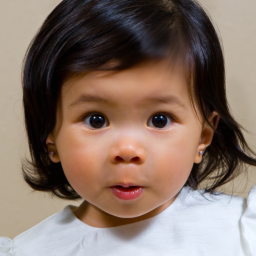} &
        \includegraphics[width=0.2\textwidth]{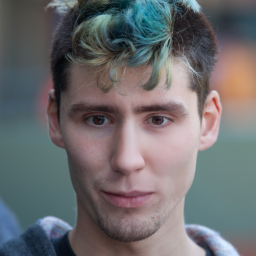} &
        \includegraphics[width=0.2\textwidth]{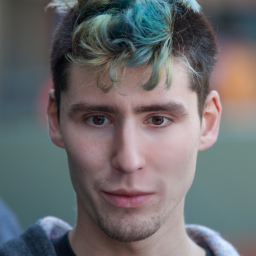} &
        \includegraphics[width=0.2\textwidth]{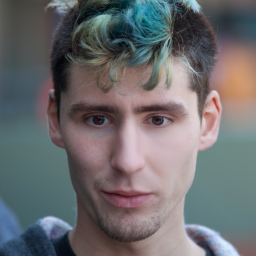} &
        \includegraphics[width=0.2\textwidth]{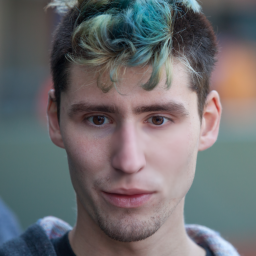} 
        \\
        \begin{turn}{90} \,\,\,\,\,\,\,\,\,\,\,\,\,\small{RLSD} \end{turn} &
        \includegraphics[width=0.2\textwidth]{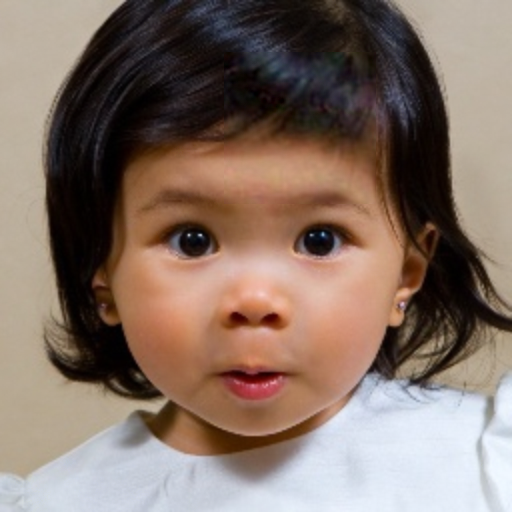} &
        \includegraphics[width=0.2\textwidth]{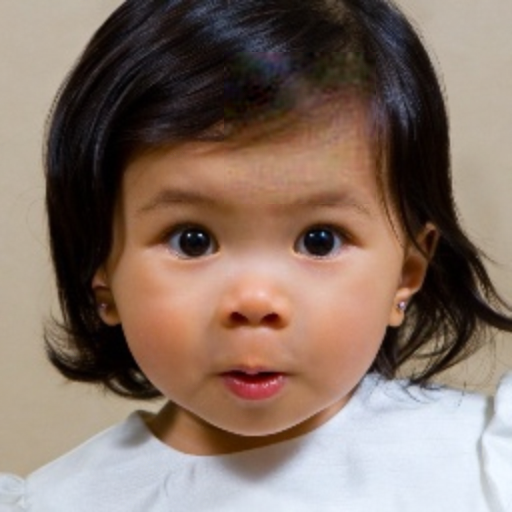} &
        \includegraphics[width=0.2\textwidth]{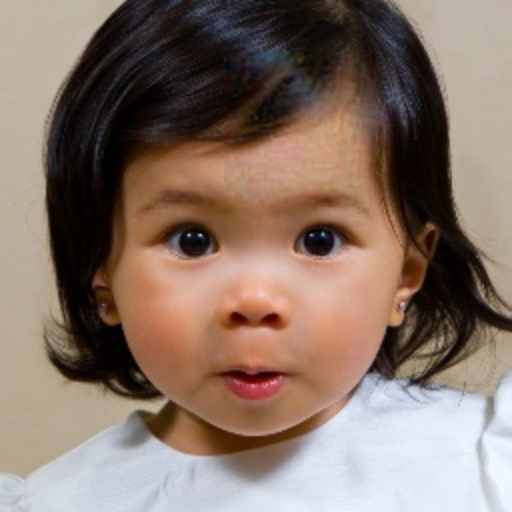} &
        \includegraphics[width=0.2\textwidth]{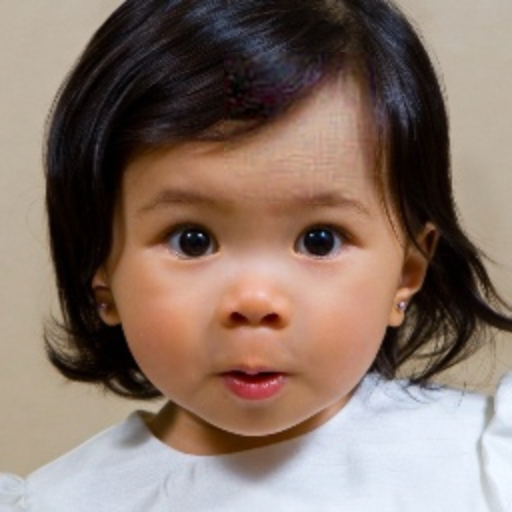} &
        \includegraphics[width=0.2\textwidth]{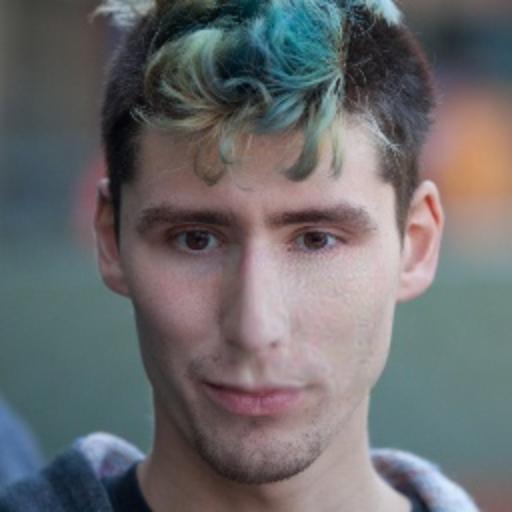} &
        \includegraphics[width=0.2\textwidth]{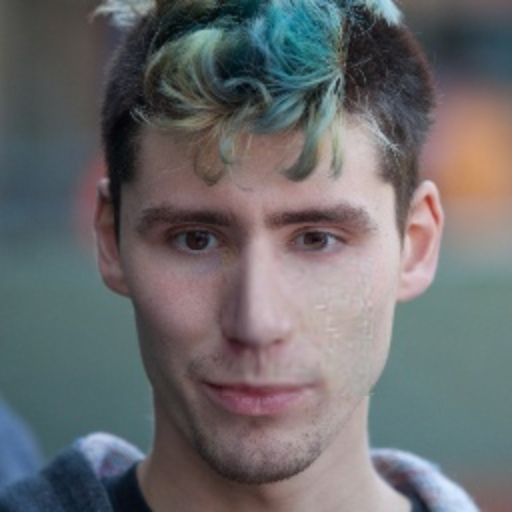} &
        \includegraphics[width=0.2\textwidth]{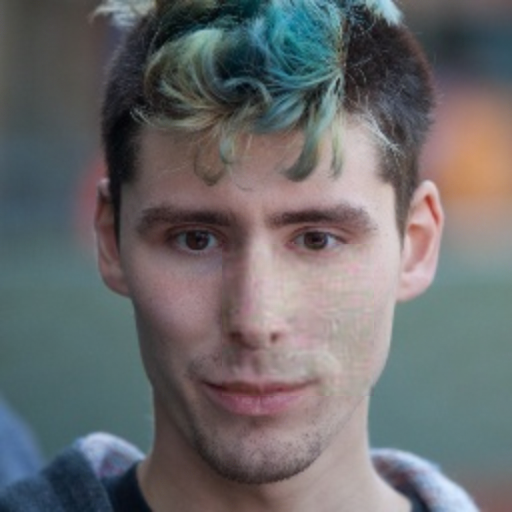} &
        \includegraphics[width=0.2\textwidth]{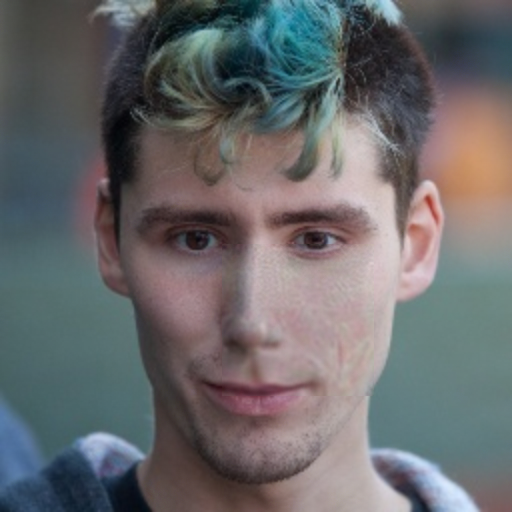} 
        \\
        \begin{turn}{90} \,\,\,\,\,\,\,\,\,\,\,\,\,\,\,\,\,\,\small{\textbf{Ours}} \end{turn} &
        \begin{tikzpicture}
        \node[anchor=south west, inner sep=0pt, outer sep=0pt] (image) at (0,0)
            {\includegraphics[width=0.2\textwidth]{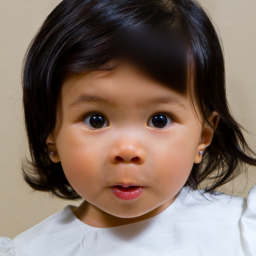}};
        \draw[red, thick, rounded corners=1pt] (1.6cm, 2.15cm) rectangle (2.7cm, 3.25cm);
        \end{tikzpicture} &
        \begin{tikzpicture}
        \node[anchor=south west, inner sep=0pt, outer sep=0pt] (image) at (0,0)
            {\includegraphics[width=0.2\textwidth]{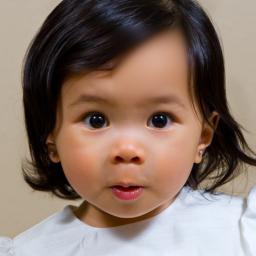}};
        \draw[red, thick, rounded corners=1pt] (1.6cm, 2.15cm) rectangle (2.7cm, 3.25cm);
        \end{tikzpicture} &
        \begin{tikzpicture}
        \node[anchor=south west, inner sep=0pt, outer sep=0pt] (image) at (0,0)
            {\includegraphics[width=0.2\textwidth]{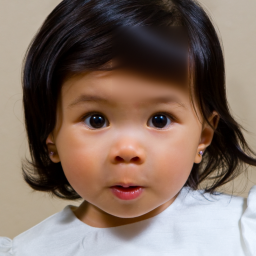}};
        \draw[red, thick, rounded corners=1pt] (1.6cm, 2.15cm) rectangle (2.7cm, 3.25cm);
        \end{tikzpicture} &
        \begin{tikzpicture}
        \node[anchor=south west, inner sep=0pt, outer sep=0pt] (image) at (0,0)
            {\includegraphics[width=0.2\textwidth]{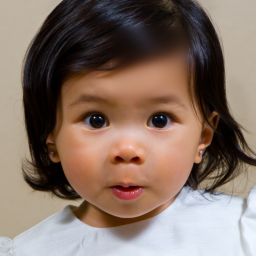}};
        \draw[red, thick, rounded corners=1pt] (1.6cm, 2.15cm) rectangle (2.7cm, 3.25cm);
        \end{tikzpicture} &
        \begin{tikzpicture}
        \node[anchor=south west, inner sep=0pt, outer sep=0pt] (image) at (0,0)
            {\includegraphics[width=0.2\textwidth]{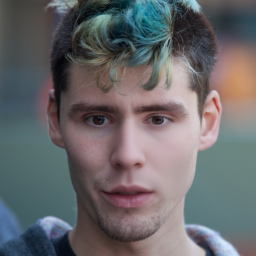}};
        \draw[red, thick, rounded corners=1pt] (1.55cm, 0.8cm) rectangle (2.65cm, 1.9cm);
        \end{tikzpicture} &
        \begin{tikzpicture}
        \node[anchor=south west, inner sep=0pt, outer sep=0pt] (image) at (0,0)
            {\includegraphics[width=0.2\textwidth]{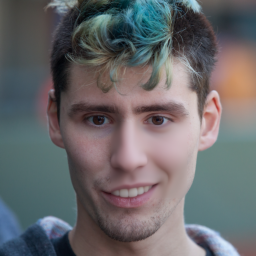}};
        \draw[red, thick, rounded corners=1pt] (1.55cm, 0.8cm) rectangle (2.65cm, 1.9cm);
        \end{tikzpicture} &
        \begin{tikzpicture}
        \node[anchor=south west, inner sep=0pt, outer sep=0pt] (image) at (0,0)
            {\includegraphics[width=0.2\textwidth]{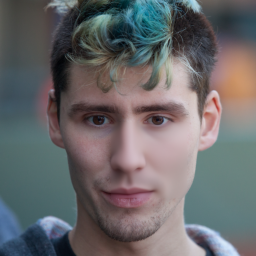}};
        \draw[red, thick, rounded corners=1pt] (1.55cm, 0.8cm) rectangle (2.65cm, 1.9cm);
        \end{tikzpicture} &
        \begin{tikzpicture}
        \node[anchor=south west, inner sep=0pt, outer sep=0pt] (image) at (0,0)
            {\includegraphics[width=0.2\textwidth]{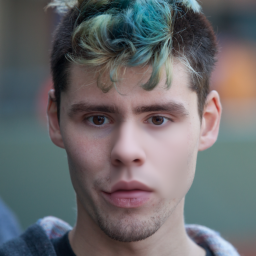}};
        \draw[red, thick, rounded corners=1pt] (1.55cm, 0.8cm) rectangle (2.65cm, 1.9cm);
        \end{tikzpicture}
    \end{tabular}
    \end{tabular}}
    \vspace{-0.1em}
    \caption{\textbf{Comparison of PPM and diffusion-based VI baselines (DPS, RED-Diff and RLSD) on box inpainting with FFHQ.} From top to bottom: the masked observation and the ground truth, followed by four random posterior samples from DPS, RED-Diff, RLSD, and PPM. Although all methods yield plausible completions, PPM produces markedly more diverse and higher-fidelity samples within the inpainted region (red box). By contrast, the baselines generate nearly identical outputs, indicating a failure to capture posterior uncertainty.}
    \label{fig:box_inpainting}
    \vspace{-0.1em}
\end{figure*}

\subsection{Computational Photography}
We then evaluated PPM on various computational photography tasks, including box inpainting, motion deblurring, and super-resolution, using natural images.
\paragraph{ Datasets and Pretrained Models }
We primarily evaluate on two natural-image datasets with distinct characteristics—FFHQ (256 × 256) \cite{karras2019style} and ImageNet (256 × 256) \cite{deng2009imagenet}—using 32 randomly selected images from each validation set. For FFHQ, we use the diffusion prior released by DPS \cite{chung2022diffusion}, and for ImageNet we employ the model from \cite{dhariwal2021diffusion}. Both are used off-the-shelf, without task-specific fine-tuning. RLSD \cite{zilberstein2024repulsive} requires a latent diffusion backbone, so we use Stable Diffusion v2.1 \cite{rombach2022high} following its original setup.

\paragraph{Baselines} 
We compare PPM against state‐of‐the‐art computational imaging methods that leverage diffusion priors, spanning both variational‐inference and MCMC‐sampling paradigms. Our VI baselines include RED-Diff \cite{mardani2023variational} and RLSD \cite{zilberstein2024repulsive}, while our MCMC baselines are $\Pi$GDM \cite{song2022pseudoinverse} and DPS \cite{chung2022diffusion}. All methods use the same pixel-space diffusion prior—except RLSD, which retains its Stable Diffusion prior—and are evaluated with hyperparameters set to their original defaults to ensure a fair comparison.

\paragraph{Evaluation Metrics} 
We assess reconstruction fidelity using PSNR and SSIM, and quantify sampling diversity using the  pairwise cosine similarity among $N$ reconstruction samples from the same observation. The final diversity is the average of all $O$ observations:
\[
1 \;-\; \frac{1}{|O|}\sum_{o}\frac{2}{N(N-1)} \sum_{i<j} \frac{\langle x_{oi}, x_{oj}\rangle}{\|x_{oi}\|_2\,\|x_{oj}\|_2}\,.
\]
For quantitative and efficient comparison, we estimate over 64 observations in FFHQ validation dataset and reconstruct 8 samples from each observation.

\paragraph{Inpainting}
Box-inpainting results are assessed on FFHQ validation images.
Qualitative comparisons with $80\times80$ masked boxes are visualized in Fig.~\ref{fig:box_inpainting}, while quantitative metrics evaluated on larger $128\times128$ center masks are reported in Table~\ref{tab:quan-results}.
We compare PPM against two VI baselines, RED-Diff \cite{mardani2023variational} and RLSD \cite{zilberstein2024repulsive}.
PPM significantly surpasses both baselines in diversity while simultaneously delivering superior reconstruction quality (PSNR and SSIM).
In challenging cases where critical features like hair (Fig.~\ref{fig:box_inpainting} left) or facial contours (Fig.~\ref{fig:box_inpainting} right) are masked, RED-Diff collapses to nearly identical outputs and RLSD struggles to trade off data fidelity against prior adherence. By contrast, PPM produces varied yet plausible reconstructions (e.g., different hairstyles and lip shapes) thanks to its principled score-based divergence and tailored optimization strategy. As Table~\ref{tab:quan-results} shows, PPM attains the top scores across all metrics on the validation set—demonstrating superior quality and diversity without extensive tuning—and also outperforms MCMC-based methods such as DPS.

\begin{figure*}[t]
    \centering
    \setlength{\tabcolsep}{1pt}
    \setlength{\fboxrule}{1pt}
    \resizebox{0.99\textwidth}{!}{
    \begin{tabular}{c}
    \begin{tabular}{cccccccccc}
        & Observation & DPS~\cite{chung2022diffusion} & 
        \multicolumn{2}{c}{{RLSD~\cite{zilberstein2024repulsive}}} &
        \multicolumn{2}{c}{{RED-Diff~\cite{mardani2023variational}}} & \multicolumn{2}{c}{{\textbf{Ours}}} & Ground Truth
        \\
        &
        \includegraphics[width=0.2\textwidth]{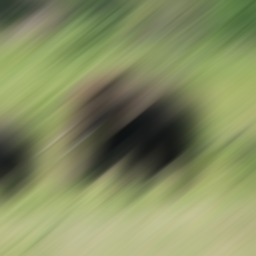} &
        \includegraphics[width=0.2\textwidth]{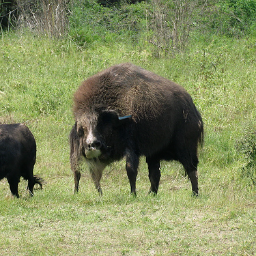} &
        \includegraphics[width=0.2\textwidth]{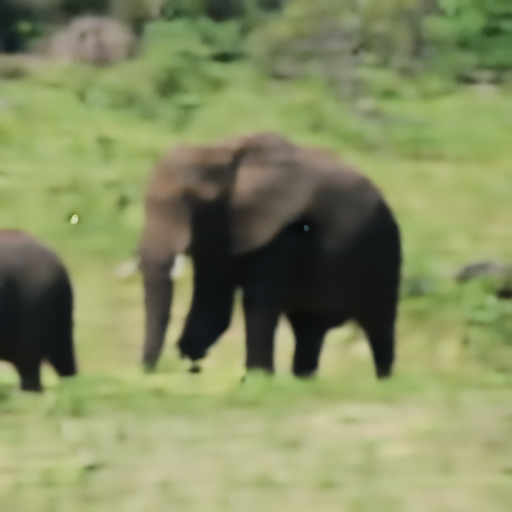} &
        \includegraphics[width=0.2\textwidth]{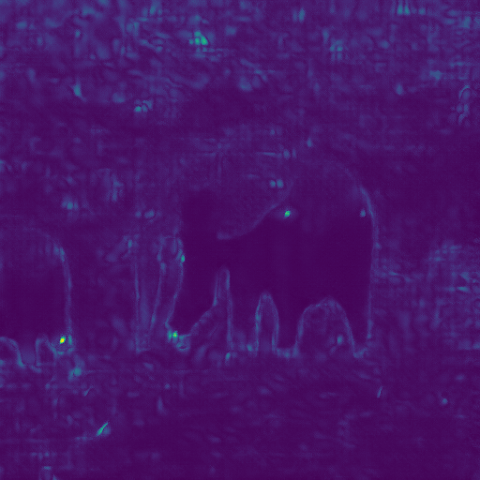} &
        \includegraphics[width=0.2\textwidth]{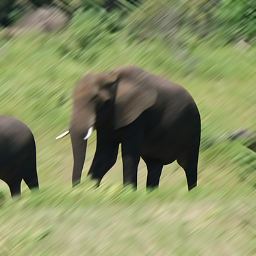} &
        \includegraphics[width=0.2\textwidth]{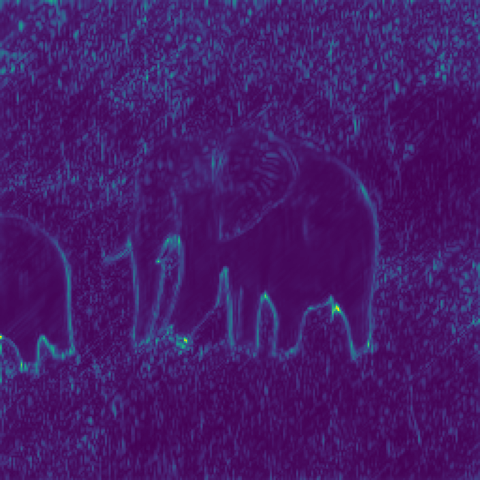} &
        \includegraphics[width=0.2\textwidth]{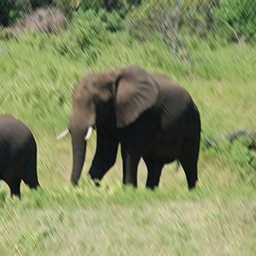} &
        \includegraphics[width=0.2\textwidth]{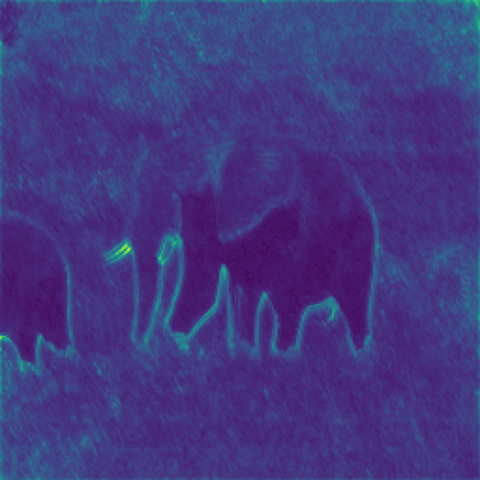} &
        \includegraphics[width=0.2\textwidth]{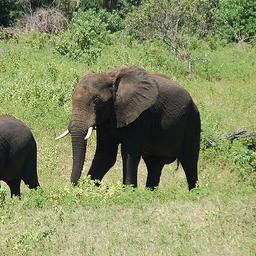} 
        \\
        &
        \includegraphics[width=0.2\textwidth]{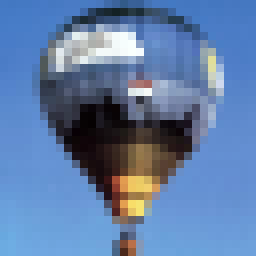} &
        \includegraphics[width=0.2\textwidth]{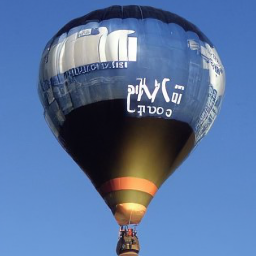} &
        \includegraphics[width=0.2\textwidth]{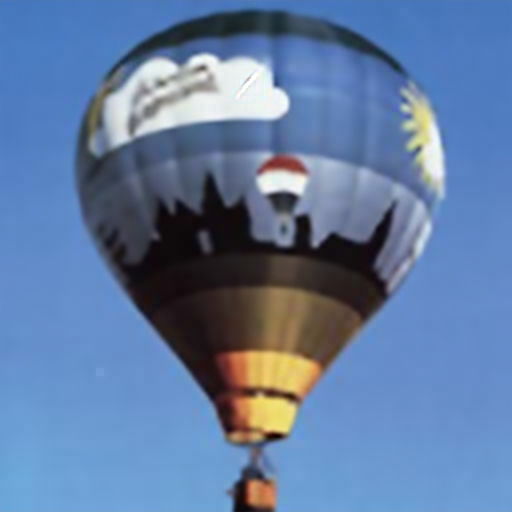} &
        \includegraphics[width=0.2\textwidth]{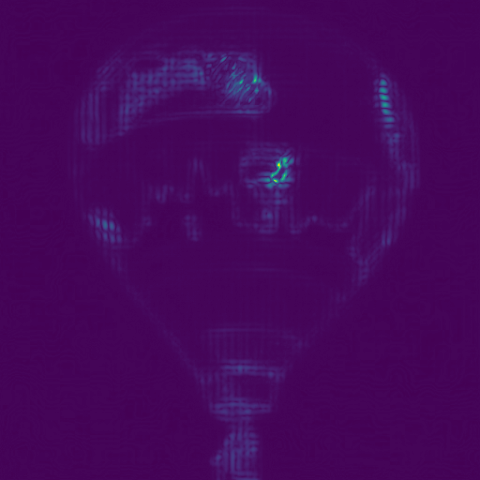} &
        \includegraphics[width=0.2\textwidth]{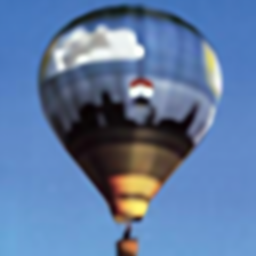} &
        \includegraphics[width=0.2\textwidth]{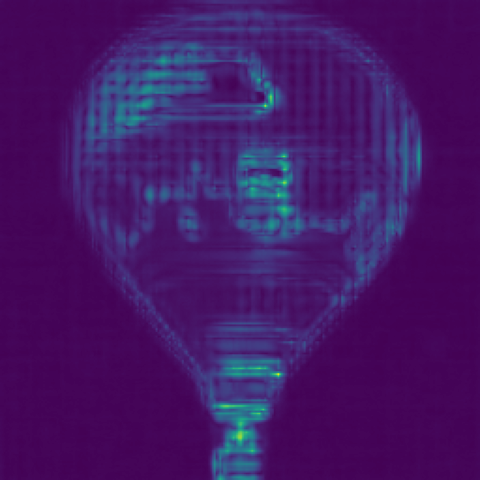} &
        \includegraphics[width=0.2\textwidth]{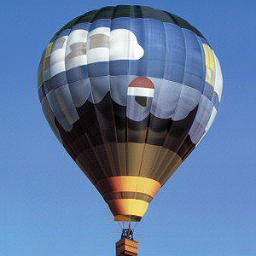} &
        \includegraphics[width=0.2\textwidth]{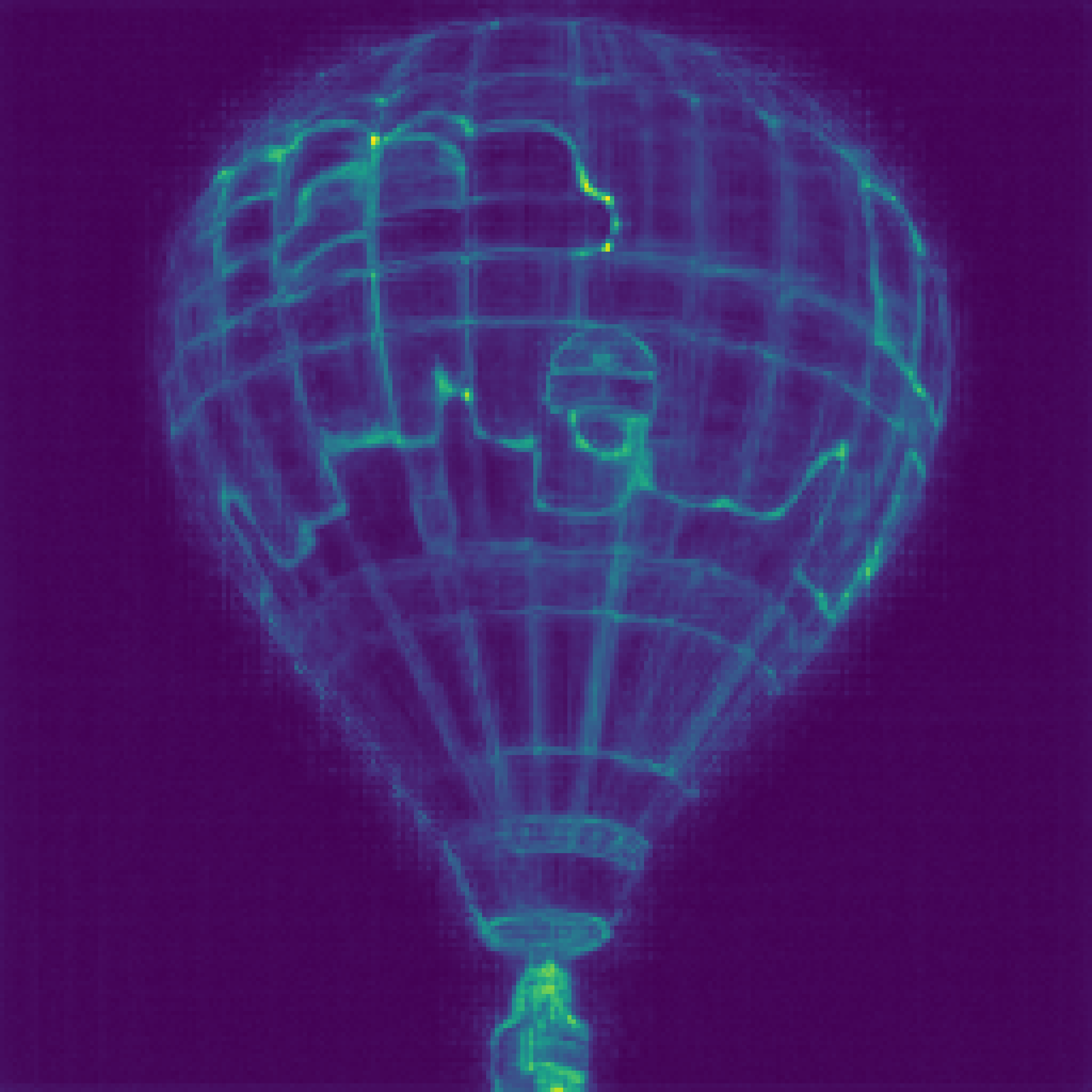} &
        \includegraphics[width=0.2\textwidth]{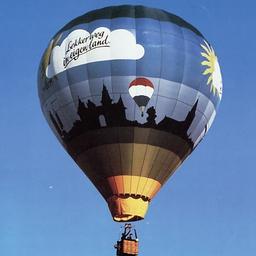} 
        \\
        &
        \includegraphics[width=0.2\textwidth]{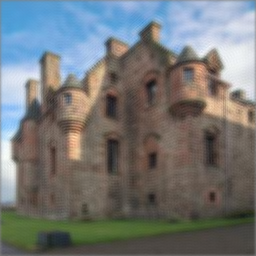} &
        \includegraphics[width=0.2\textwidth]{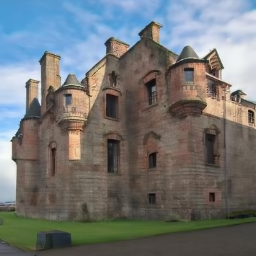} &
        \includegraphics[width=0.2\textwidth]{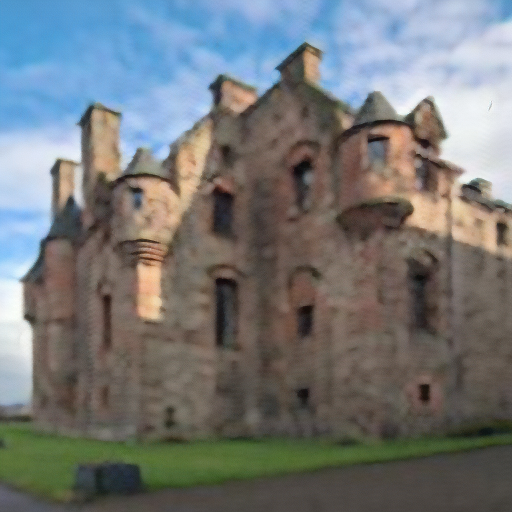} &
        \includegraphics[width=0.2\textwidth]{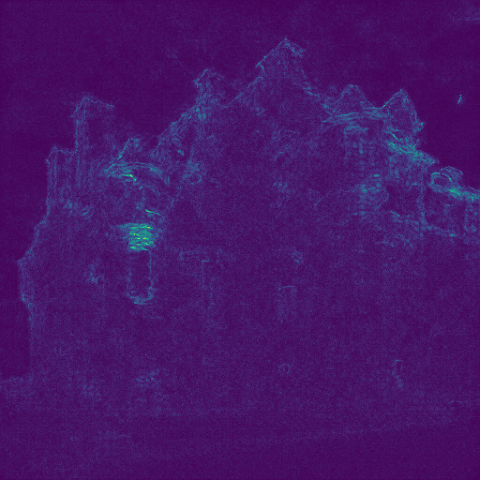} &
        \includegraphics[width=0.2\textwidth]{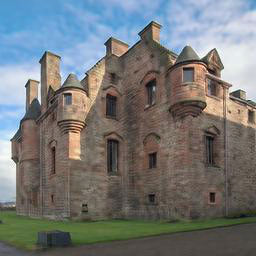} &
        \includegraphics[width=0.2\textwidth]{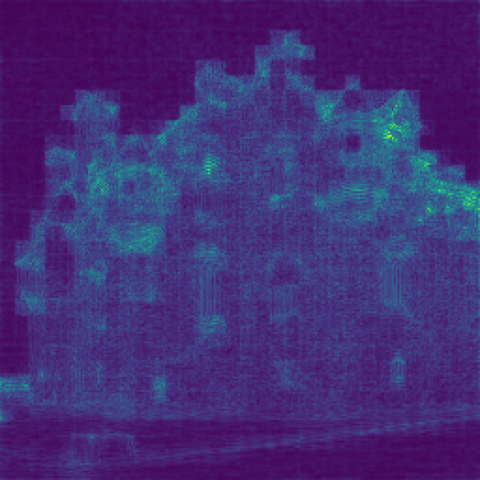} &
        \includegraphics[width=0.2\textwidth]{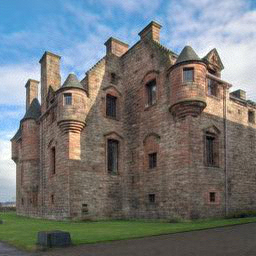} &
        \includegraphics[width=0.2\textwidth]{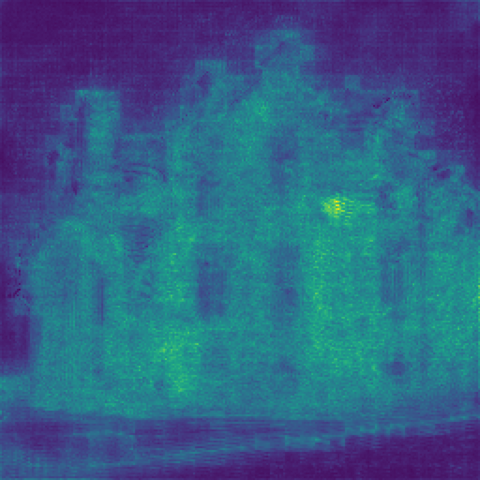} &
        \includegraphics[width=0.2\textwidth]{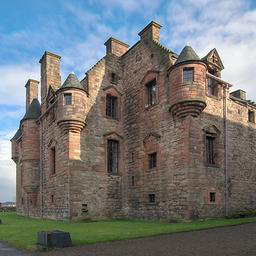} 
    \end{tabular}
    \end{tabular}}
    \label{subfig:mesh}
    \caption{\textbf{Comparison of PPM, DPS, and diffusion-based VI baselines (RED-Diff and RLSD) on motion deblurring, super‐resolution and gaussian deblurring tasks with ImageNet}. For each VI method, we show one sample reconstruction (left) alongside its standard‐deviation uncertainty map (right). Compared to DPS, PPM achieves higher fidelity—accurately rendering details like the elephant’s tusks, balloon logos, and human faces. Compared to VI methods, PPM produces better calibrated uncertainty: RED-Diff’s standard‐deviation maps reveal mode collapse, and RLSD’s contain pronounced artifacts.}
    \label{fig:denoisingdeblurring}
    \vspace{-0.1em}
\end{figure*}

\begin{figure*}[t]
    \centering
    \setlength{\tabcolsep}{1pt}
    \setlength{\fboxrule}{1pt}
    \resizebox{0.99\textwidth}{!}{
    \begin{tabular}{c}
    \begin{tabular}{cccccc}
        Observation &  
        \multicolumn{2}{c}{{DAVI~\cite{lee2024diffusion}}} &
        \multicolumn{2}{c}{{Ours}} &
        Ground Truth
        \\
        \includegraphics[width=0.2\textwidth]{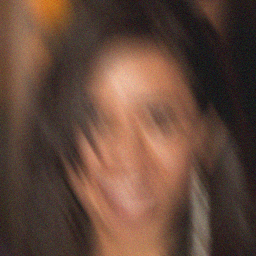} &
        \includegraphics[width=0.2\textwidth]{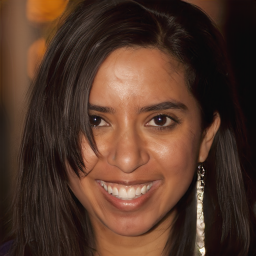} &
        \includegraphics[width=0.2\textwidth]{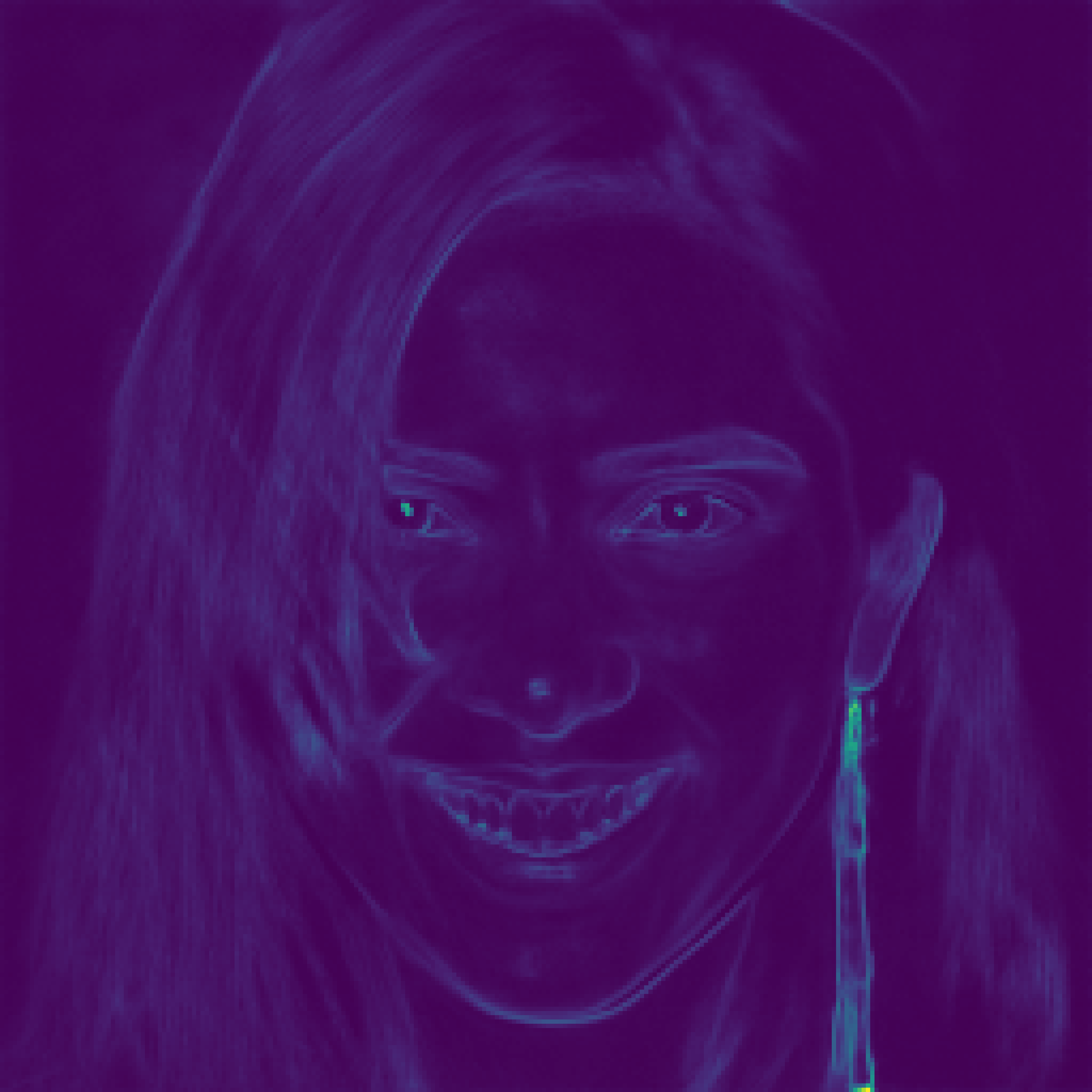} &
        \includegraphics[width=0.2\textwidth]{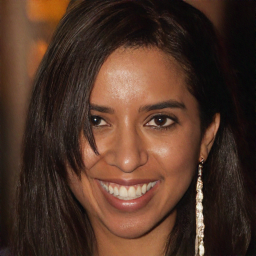} &
        \includegraphics[width=0.2\textwidth]{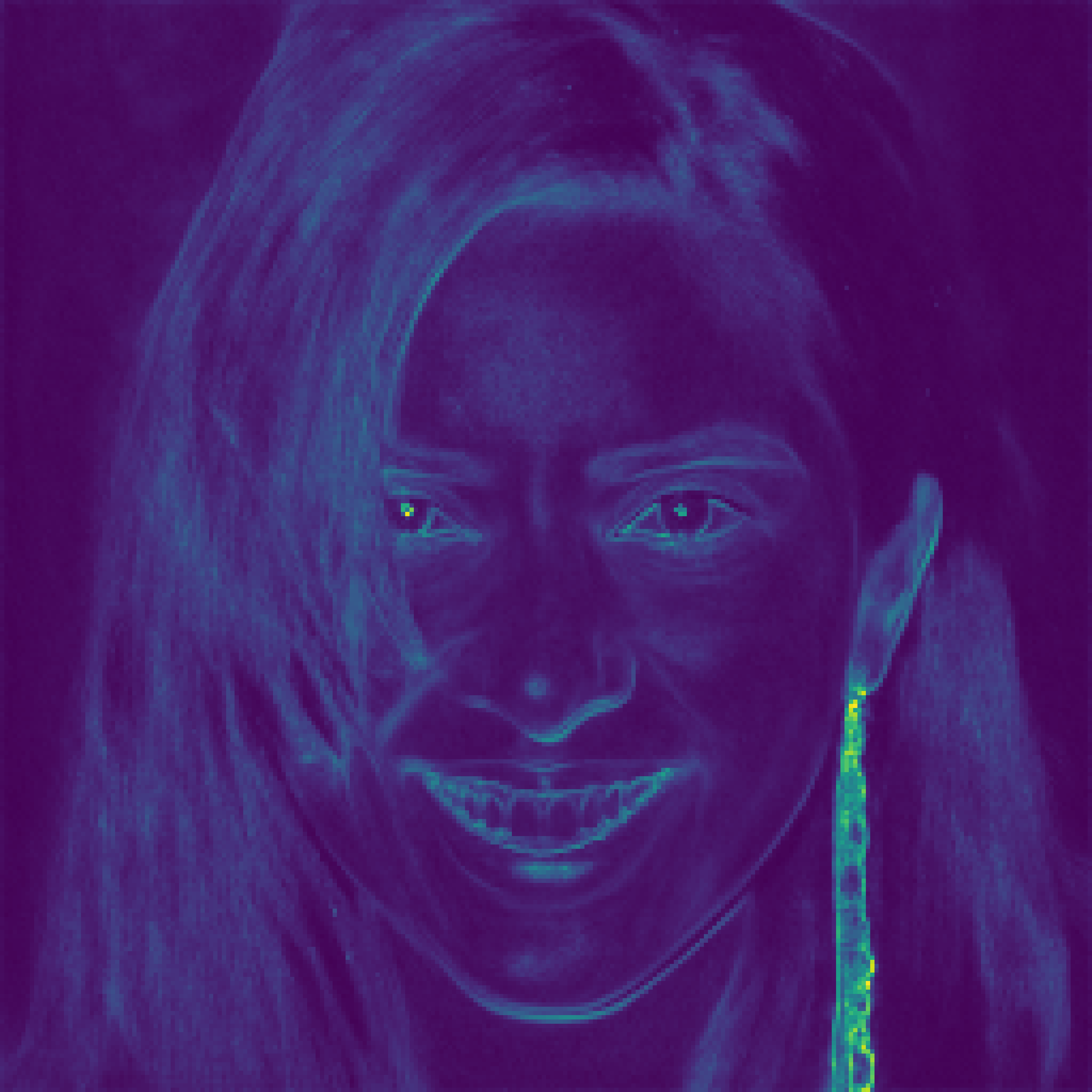} &
        \includegraphics[width=0.2\textwidth]{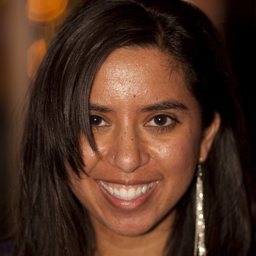}
        \\
        \includegraphics[width=0.2\textwidth]{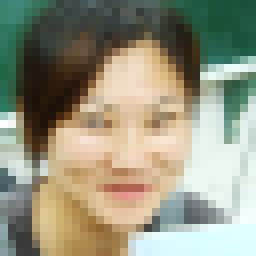} &
        \includegraphics[width=0.2\textwidth]{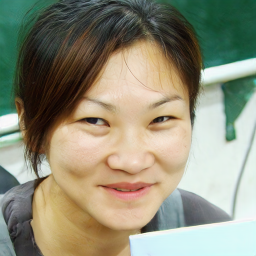} &
        \includegraphics[width=0.2\textwidth]{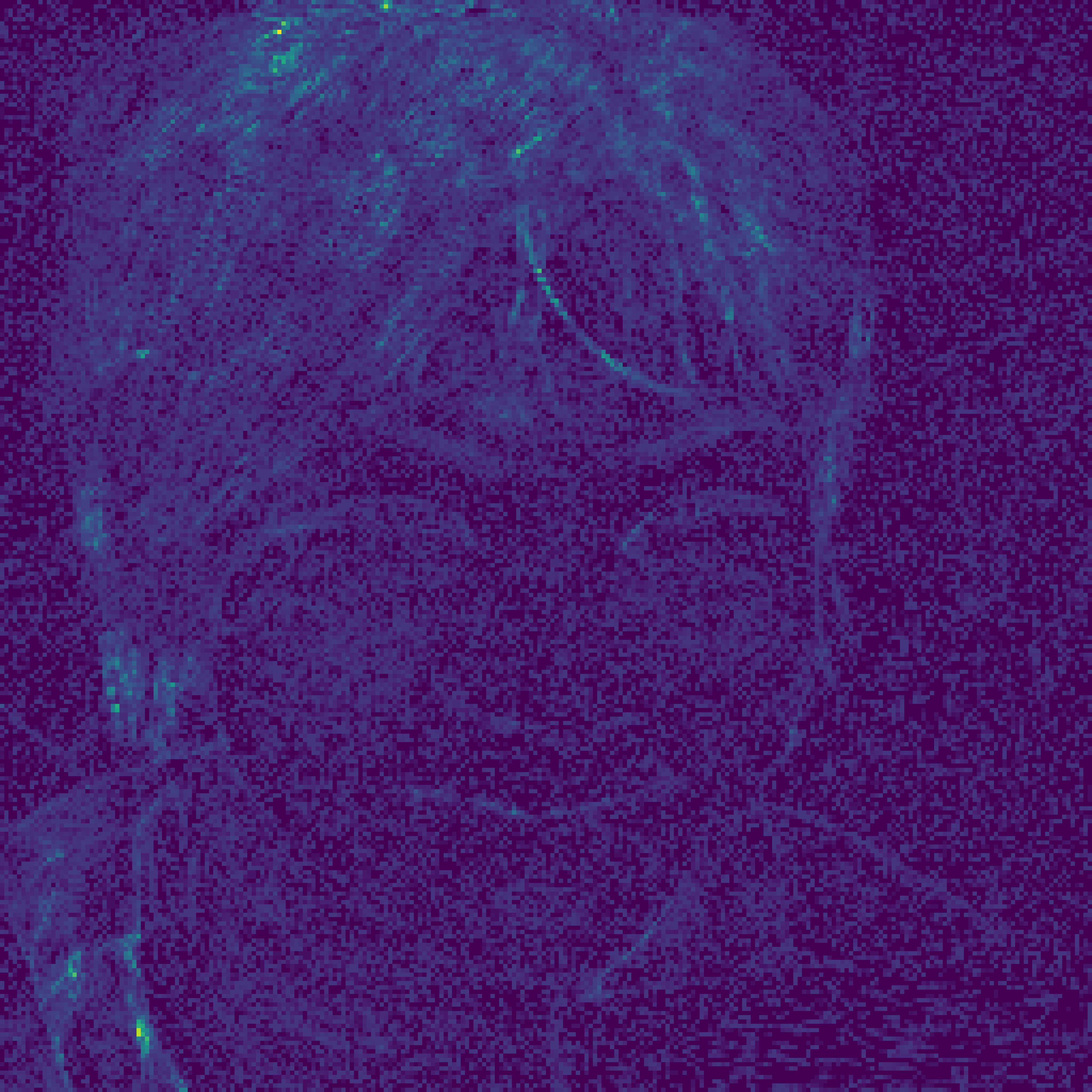} &
        \includegraphics[width=0.2\textwidth]{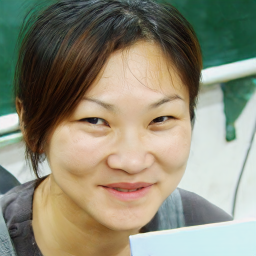} &
        \includegraphics[width=0.2\textwidth]{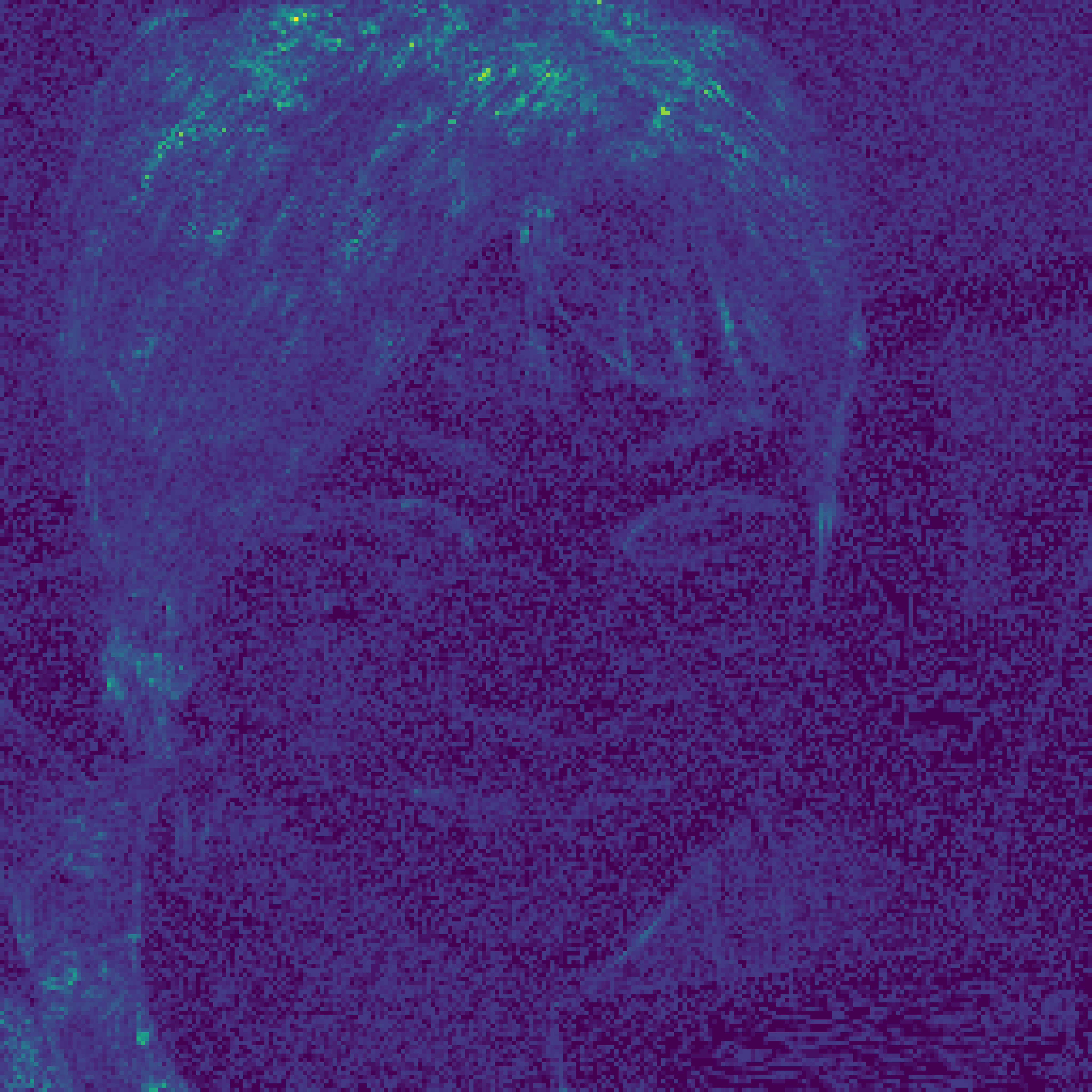} &
        \includegraphics[width=0.2\textwidth]{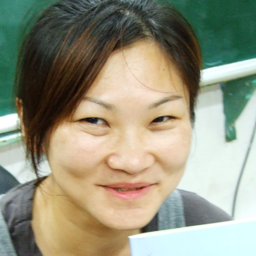} 
        \\ 
        \includegraphics[width=0.2\textwidth]{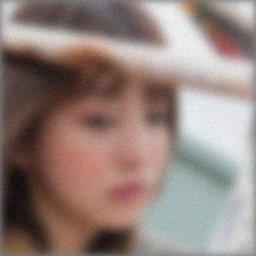} &
        \includegraphics[width=0.2\textwidth]{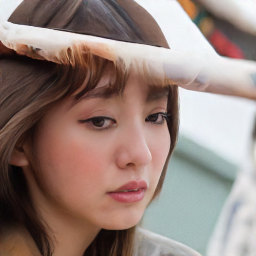} &
        \includegraphics[width=0.2\textwidth]{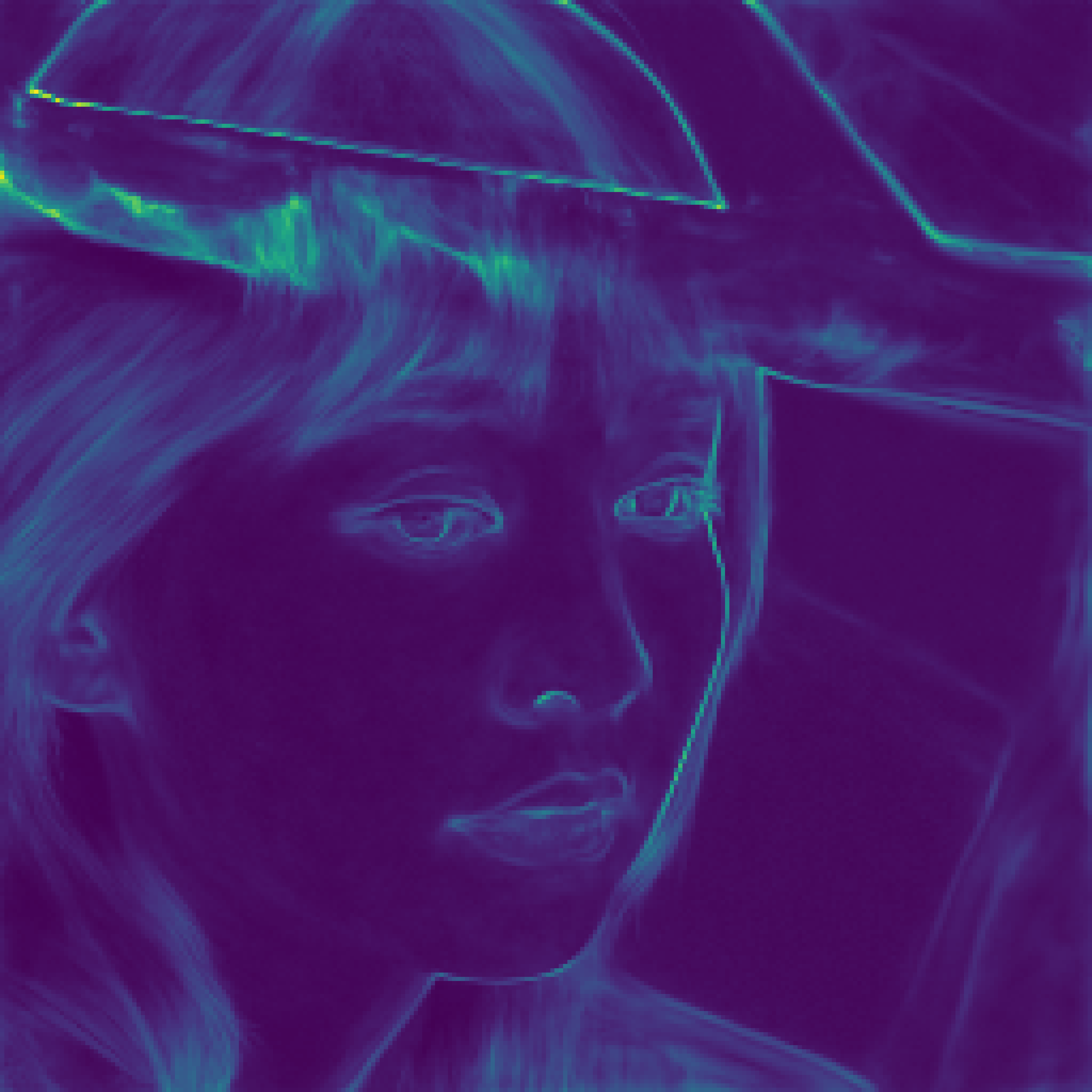} &
        \includegraphics[width=0.2\textwidth]{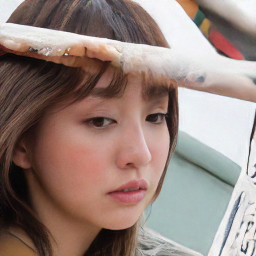} &
        \includegraphics[width=0.2\textwidth]{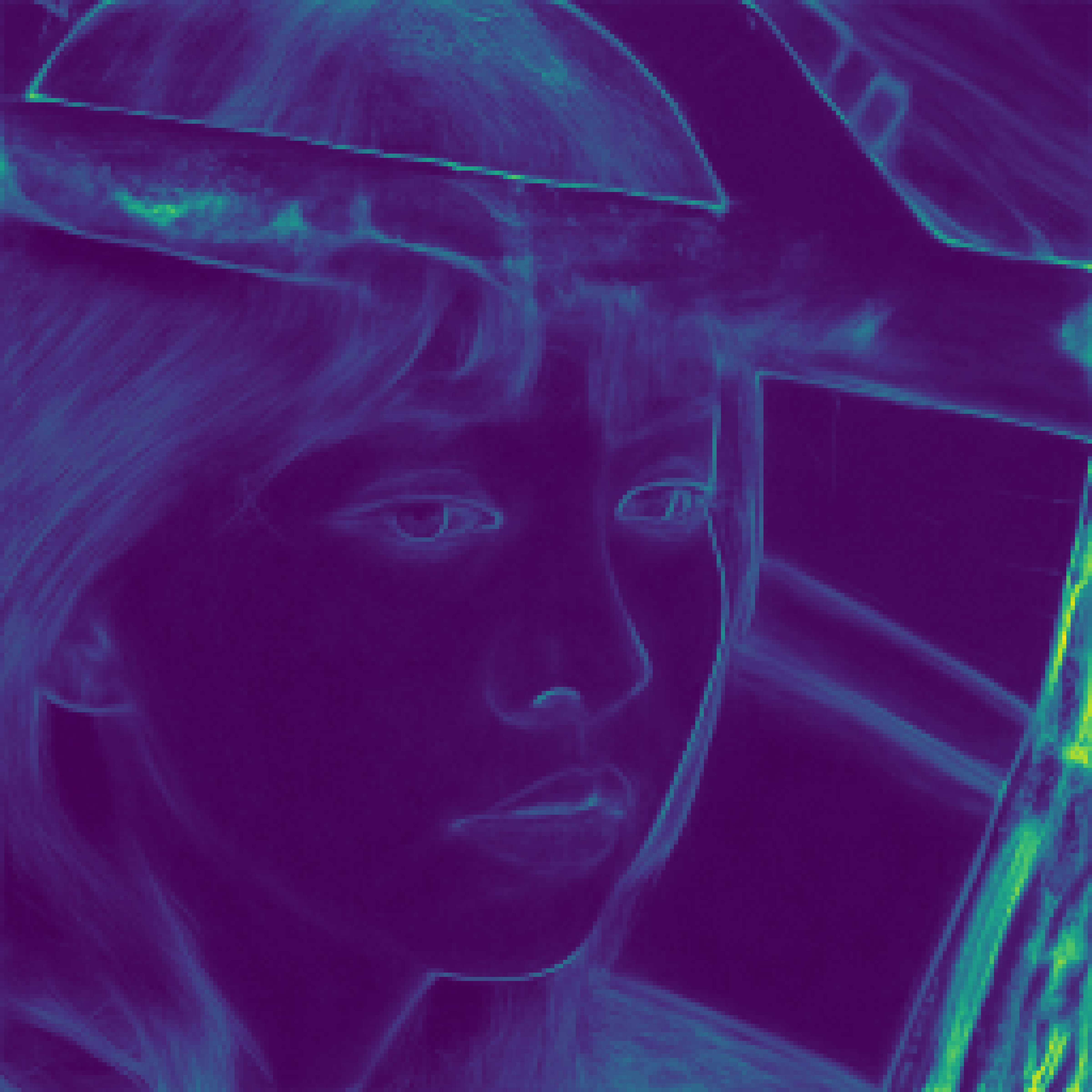} &
        \includegraphics[width=0.2\textwidth]{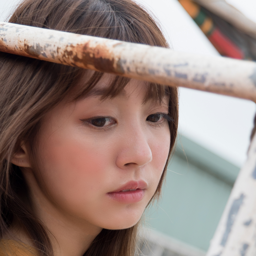}
        
    \end{tabular}
    \end{tabular}}
    \label{subfig:mesh}
    \caption{\textbf{Qualitative comparison of amortized inference on the FFHQ validation set.} 
  We compare our PPM framework against the KL-divergence-based baseline, DAVI~\cite{lee2024diffusion}. 
  The rows correspond to three inverse problems: \textbf{Motion Deblurring} (top), \textbf{$8\times$ Super-Resolution} (middle), and \textbf{Gaussian Denoising} (bottom). 
  For each task, we visualize the degraded observation, the reconstruction and pixel-wise uncertainty map from DAVI, the reconstruction and uncertainty map from our method, and the ground truth. 
  Our method yields significantly sharper structural details (e.g., facial features and hair textures) and provides more informative uncertainty estimates that capture the rich diversity of local details, whereas the IKL-based approach tends to over-smooth the results.}
    \label{fig:amortized-infer}
    \vspace{-0.1em}
\end{figure*}

\begin{table*}[htbp]
  \centering
  \caption{Quantitative comparison of PPM and baseline methods across computational photography tasks, including super-resolution, motion deblurring, and box inpainting, on FFHQ \cite{karras2019style} and ImageNet \cite{deng2009imagenet} (256×256 resolution). All methods use the same pretrained unconditional diffusion model, except RLSD \cite{zilberstein2024repulsive}, which employs Stable Diffusion \cite{rombach2022high}. Best results are shown in \textbf{bold}, and second-best are \underline{underlined}.}
    \begin{tabular}{cccccccccc}
    \toprule
    \multirow{2}[2]{*}{Method} & \multicolumn{3}{c}{Super Resolution} & \multicolumn{3}{c}{Motion Deblurring} & \multicolumn{3}{c}{Box Inpainting} \\
          & PSNR $\uparrow$ & SSIM $\uparrow$ & Diversity $\uparrow$ & PSNR $\uparrow$ & SSIM $\uparrow$ & Diversity $\uparrow$ & PSNR $\uparrow$  & SSIM $\uparrow$ & Diversity $\uparrow$\\
    \midrule
    \multicolumn{10}{l}{\textit{Sampling}} \\
    DDRM~\cite{chung2022diffusion}   & 22.62     & 0.62     & 0.001    & -     & -     & -     & 22.11     & 0.78     &  0.004 \\
    DPS~\cite{chung2022diffusion}   & 21.02     & 0.57     & 0.010     & 20.34     & 0.55     & \underline{0.006}     & 23.43     & 0.80     & 0.009 \\
    $\Pi$GDM~\cite{song2022pseudoinverse}  & 23.92     & 0.67     & \underline{0.011}     & 25.82     & 0.75     & \underline{0.006}     & 23.25     & 0.86     & \underline{0.010} \\
    \midrule
    \multicolumn{10}{l}{\textit{Particle variational inference}} \\
    RED-Diff~\cite{mardani2023variational} & \underline{26.54}     & \underline{0.76}     & 0.001     & \underline{29.02}     & \underline{0.84}     & 0.001     & 24.69     & 0.87     & 0.002 \\
    RLSD~\cite{zilberstein2024repulsive}  & \textbf{27.28}     & \textbf{0.79}     &0.003     & 25.64     & 0.82     & 0.002     & \underline{28.27}     & \underline{0.93}     & 0.006 \\
    Ours(VI)  & 25.63     & 0.72     & \textbf{0.013}     & 28.18     & 0.83     & \textbf{0.009}     & \textbf{28.73}     & \textbf{0.97}     & \textbf{0.016} \\
    \midrule
    \multicolumn{10}{l}{\textit{Amortized inference}} \\
    DAVI~\cite{lee2024diffusion}  & 24.49     & 0.70     & 0.006     &  27.69     & 0.82     & 0.003     & 24.58     & 0.83     & 0.005 \\
    Ours(AI)  & \textbf{24.85}     & \textbf{0.73}     & \textbf{0.008}     &  \textbf{29.17}     & \textbf{0.85}     & \textbf{0.004}     & \textbf{25.26}     & \textbf{0.88}     & \textbf{0.008} \\
    \bottomrule
    \end{tabular}%
  \label{tab:quan-results}%
\end{table*}%

\paragraph{Motion Deblurring, Super Resolution, and Gaussian Denoising}

For motion deblurring, we follow \cite{chung2022diffusion, zilberstein2024repulsive} by convolving each image with a randomly sampled $61 \times 61$ motion kernel (variance = $0.3^2$). For super-resolution, we downsample images by a factor of 8. Additionally, for Gaussian denoising, we corrupt the images with additive white Gaussian noise with a standard deviation of $\sigma=0.2$. We evaluate PPM on these tasks using FFHQ and ImageNet validation sets, comparing against DPS \cite{chung2022diffusion}, $\Pi$GDM \cite{song2022pseudoinverse}, RED-Diff \cite{mardani2023variational}, RLSD \cite{zilberstein2024repulsive}, and the amortized baseline DAVI~\cite{lee2024diffusion}.

Figure \ref{fig:denoisingdeblurring} shows that PPM delivers sharper, more observation‐consistent reconstructions than DPS, evident in fine details such as the elephant’s tusks and the balloon’s logo—thanks to PPM’s exact likelihood term versus DPS’s approximation. Compared to VI methods (RED-Diff and RLSD), PPM also excels at uncertainty quantification: its standard‐deviation maps accurately reflect positional uncertainty (e.g., the elephant’s tusks, the boy’s facial edge), whereas RED-Diff collapses modes and RLSD introduces p persistent speckle artifacts due to latent‐space optimization. 

Figure~\ref{fig:amortized-infer} presents a qualitative comparison of the amortized inference results on the FFHQ dataset. We contrast our PPM framework against the KL-divergence-based baseline, DAVI. The pixel-wise uncertainty maps demonstrate that our method captures meaningful posterior diversity, particularly in ambiguous regions (e.g., edges and textures), whereas the KL-based approach exhibits signs of posterior collapse with largely suppressed uncertainty.

Table \ref{tab:quan-results} confirms that PPM achieves the best overall balance of PSNR, SSIM, and diversity. While RLSD slightly outperforms in PSNR/SSIM for super‐resolution—due to its higher‐resolution Stable Diffusion prior (512 × 512)—its artifacts undermine visual quality. MCMC sampling methods like DPS match PPM’s diversity but fall short in fidelity. In summary, PPM consistently outperforms both variational and sampling-based baselines across computational photography tasks, delivering superior reconstruction quality and reliable uncertainty estimates.

\begin{figure*}[t]
    \centering
    \setlength{\tabcolsep}{1pt}
    \setlength{\fboxrule}{1pt}
    \resizebox{0.98\textwidth}{!}{
    \begin{tabular}{c}
    \begin{tabular}{cc|cc|cc|c}
        & Wide-Field Image & \multicolumn{2}{c|}{RED-Diff~\cite{mardani2023variational}} & \multicolumn{2}{c|}{{\textbf{Ours}}} & Ground Truth
        \\
        \begin{turn}{90} \,\,\,\,\,\,\,\,\,\,\,\,\,\,\,\small{{Microtubules}} \end{turn} &
        \includegraphics[width=0.2\textwidth]{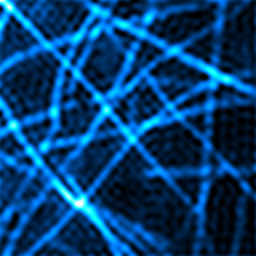} &
        \begin{tikzpicture}
            \node[anchor=south west, inner sep=0] (image) {\includegraphics[width=0.2\textwidth]{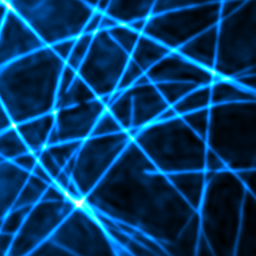}};
            \node[text=darkorange, font=\normalsize\bfseries, anchor=center] at ([yshift=-0.3cm]image.north) {25.11 / 0.78};
        \end{tikzpicture} &
        \includegraphics[width=0.2\textwidth]{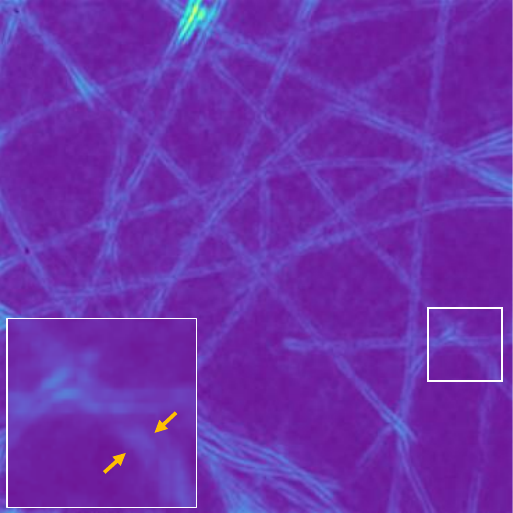} &
        \begin{tikzpicture}
            \node[anchor=south west, inner sep=0] (image) {\includegraphics[width=0.2\textwidth]{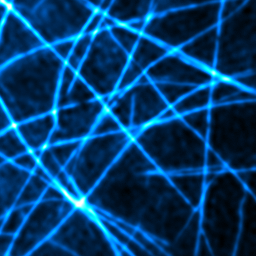}};
            \node[text=darkorange, font=\normalsize\bfseries, anchor=center] at ([yshift=-0.3cm]image.north) {27.48 / 0.82};
        \end{tikzpicture} &
        \includegraphics[width=0.2\textwidth]{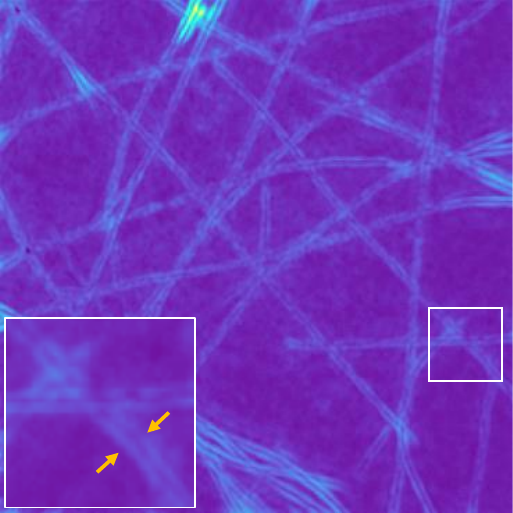} &
        \begin{tikzpicture}
            \node[anchor=south west, inner sep=0] (image) {\includegraphics[width=0.2\textwidth]{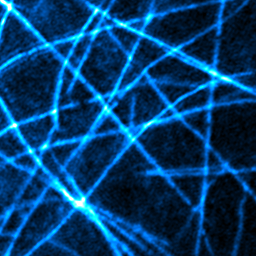}};
            \node[text=darkorange, font=\normalsize\bfseries, anchor=center] at ([yshift=-0.3cm]image.north) {PSNR / SSIM};
        \end{tikzpicture}
        \\
        \begin{turn}{90} \,\,\,\,\,\,\,\,\,\,\,\,\,\,\,\,\,\,\,\,\,\,\,\,\,\,\small{{ER}} \end{turn} &
        \includegraphics[width=0.2\textwidth]{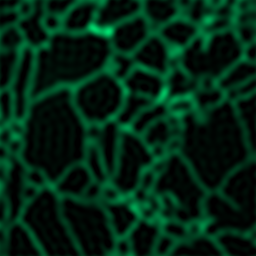} &
        \begin{tikzpicture}
            \node[anchor=south west, inner sep=0] (image) {\includegraphics[width=0.2\textwidth]{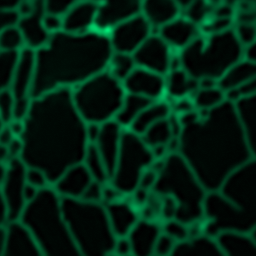}};
            \node[text=darkorange, font=\normalsize\bfseries, anchor=center] at ([yshift=-0.3cm]image.north) {26.84 / 0.88};
        \end{tikzpicture} &
        \includegraphics[width=0.2\textwidth]{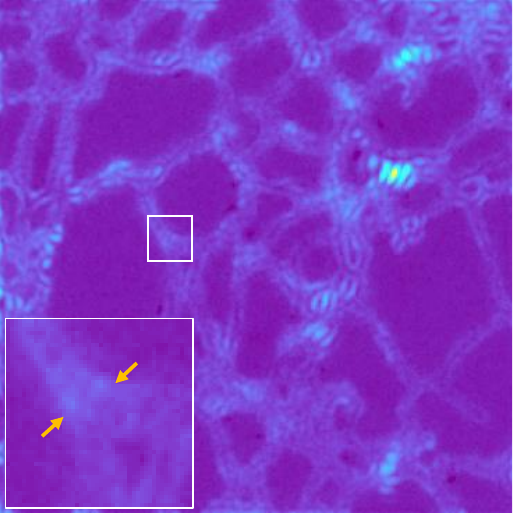} &
        \begin{tikzpicture}
            \node[anchor=south west, inner sep=0] (image) {\includegraphics[width=0.2\textwidth]{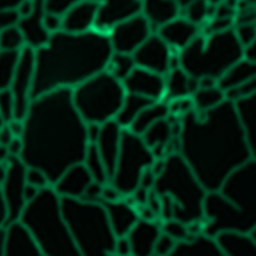}};
            \node[text=darkorange, font=\normalsize\bfseries, anchor=center] at ([yshift=-0.3cm]image.north) {28.00 / 0.93};
        \end{tikzpicture} &
        \includegraphics[width=0.2\textwidth]{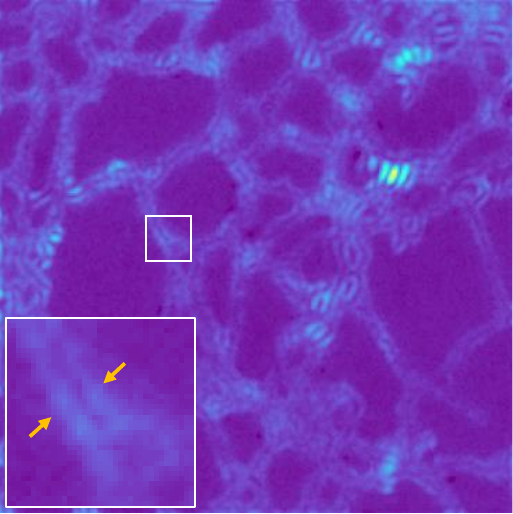} &
        \begin{tikzpicture}
            \node[anchor=south west, inner sep=0] (image) {\includegraphics[width=0.2\textwidth]{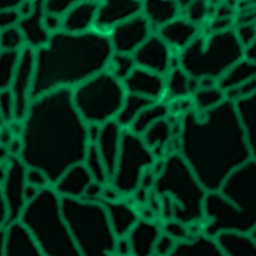}};
            \node[text=darkorange, font=\normalsize\bfseries, anchor=center] at ([yshift=-0.3cm]image.north) {PSNR / SSIM};
        \end{tikzpicture}
    \end{tabular}
    \end{tabular}}
    \caption{\textbf{Fluorescent super-resolution microscopic imaging results.} We compare our method with RED-Diff on microscopic images of Microtubules and ER samples. For each method, the reconstruction (with PSNR/SSIM scores) and its corresponding uncertainty map are reported. Our uncertainty maps accurately characterize the physical blur caused by the Point Spread Function in biological imaging: the uncertainty is lower at the center of the reconstructed structures and higher at the edges, effectively capturing the transition between the confirmed structures and the background.}
    \label{fig:microscopic}
    \vspace{-0.1em}
\end{figure*}

\subsection{Super-resolution Fluorescent Microscopic Imaging}
Beyond standard computational photography tasks, we also applied our method to real-world scientific imaging challenges in biomedicine and astronomy. We evaluated PPM on super-resolution fluorescent microscopy, a critical tool for visualizing subcellular structures. Here, the observation $y$ is a wide-field microscope image (approximately 200 nm resolution), whose measurement model is
\begin{equation}
y = \text{PSF} \circledast x + n,
\end{equation}
where $x$ is the underlying high-resolution fluorescence signal, PSF is the microscope point-spread function, and $n$ is additive Gaussian noise. Accurately recovering $x$, along with precise uncertainty quantification, is essential for resolving the fine-grained dynamics of subcellular structures, such as organelles, and their interactions. In our experiment, we primarily benchmark PPM with RED-Diff. Both methods use the same diffusion prior pretrained on the BioSR dataset \cite{aali2023solving}, which comprises over 10,000 $256\times256$ super-resolution images of diverse subcellular structures—microtubules, endoplasmic reticulum (ER), clathrin-coated pits (CCPs), and F-actin—captured with a structured illumination microscope (approximately 100 nm resolution).

Figure~\ref{fig:microscopic} demonstrates that PPM delivers higher-fidelity reconstructions than RED-Diff, faithfully rendering thin filaments in microtubules and the mesh-like ER. Quantitatively, PPM also achieves superior PSNR and SSIM scores. Crucially, PPM’s uncertainty maps align with imaging physics: confidence peaks at structure centers and decreases toward blurred edges, revealing hollow structures. This behavior reflects the influence of microscope’s PSF, which preserves feature presence while smearing precise boundaries. PPM accurately captures this boundary uncertainty, whereas RED-Diff’s estimates fail to indicate edge ambiguity. These results underscore PPM’s reliability for nanometer-scale biomedical imaging, where uncertainty quantification is indispensable.

\subsection{Radio Interferometric Black Hole Imaging}

We applied PPM to reconstruct and quantify uncertainty in black hole images from very long baseline interferometry (VLBI) measurements. Using a general‐relativistic magnetohydrodynamics (GRMHD) simulated Sagittarius A$^\star$ black hole image, we emulate a synthetic observation of the Event Horizon Telescope (EHT) array, which comprises nine telescopes worldwide to form an Earth‐sized interferometer. Ignoring atmospheric turbulence, the measurement model that maps the true image $x$ to the observed visibilities $y$ is
\begin{equation}
y = M \mathcal{F}\{x\} + n,
\end{equation}
where $\mathcal{F}$ is the Fourier transform, $M$ selects the measured frequency components, and $n$ is additive Gaussian noise. Because the EHT sampling is extremely sparse in the Fourier domain (Fig.~\ref{fig:eht}(a)), this defines a highly ill‐posed inverse problem: enforcing only data consistency yields the classical dirty image, riddled with sidelobe artifacts (top right of Fig.~\ref{fig:eht}).

Robust uncertainty quantification is therefore critical before making scientific inferences. 
Our PPM reconstruction follows the InverseBench~\cite{zheng2025inversebench} protocol, using a diffusion prior trained on approximately 50,000 synthetic black hole images. 
Figure~\ref{fig:eht} (b) shows the ground‐truth GRMHD image, the target blurred to the EHT’s resolution, 16 independent PPM posterior samples, and the resulting mean reconstruction alongside its standard‐deviation map. 
We further compare PPM with baselines in Fig.~\ref{fig:eht}(c). 
While RED-Diff suffers from severe mode collapse (indicated by the suppressed uncertainty map) and DPS yields blurry reconstructions, only PPM faithfully captures the key morphology—ring diameter, azimuthal position of the bright crescent, and the black hole’s swirling signature, demonstrating PPM’s ability to deliver accurate reconstructions with reliable uncertainty estimates in challenging VLBI black hole imaging scenarios. 

\begin{figure}[htbp]
  \centering 
  \includegraphics[width=0.48\textwidth]{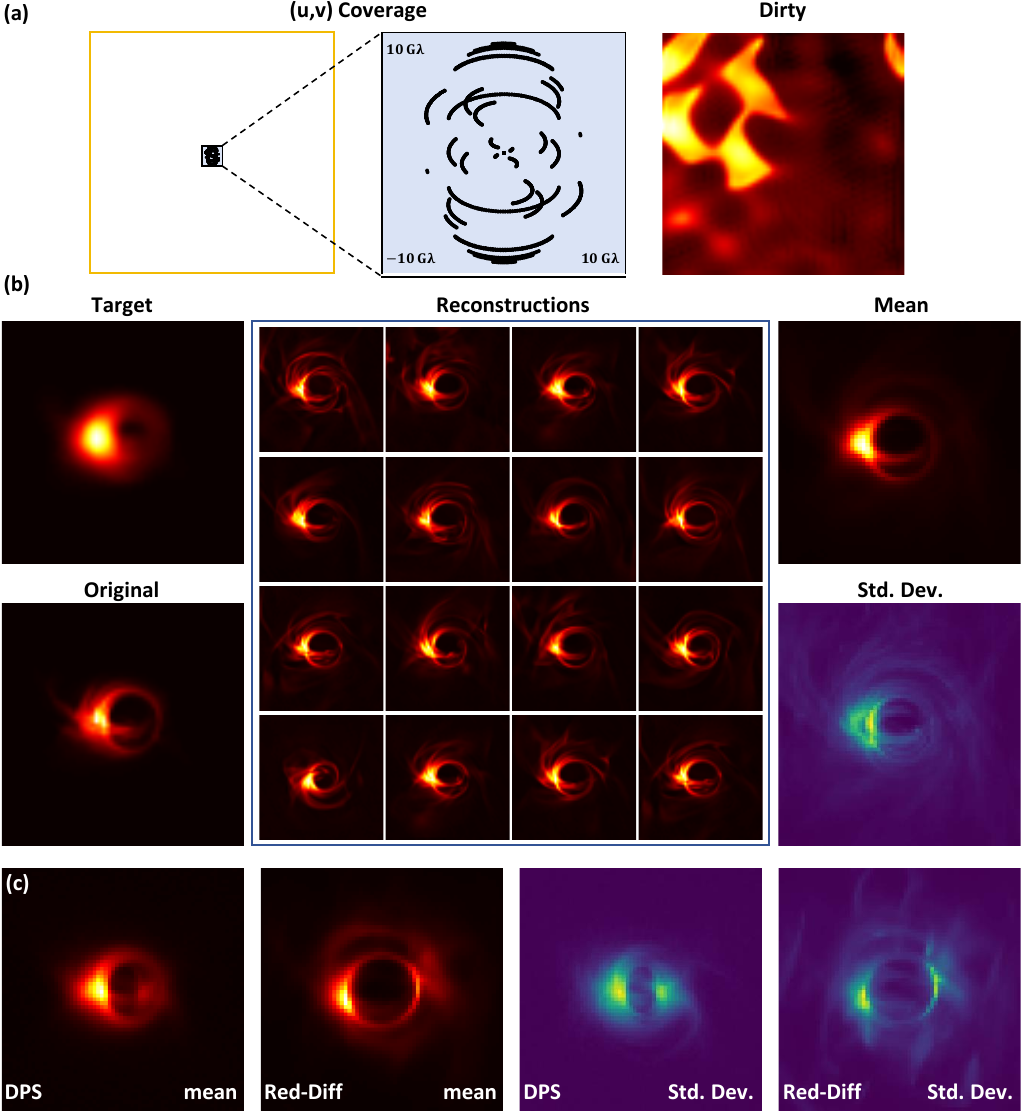} 
  \caption{\textbf{Black hole interferometric imaging from synthetic EHT observations.} This highly ill-posed inverse problem recovers an image from the sparse Fourier samples of a VLBI array (top left). (a) shows the EHT’s $(u, v)$ coverage and the ``dirty'' image reconstructed solely from observations, without any image priors. (b) presents the ground-truth GRMHD image, the target blurred to EHT resolution, 16 independent PPM posterior samples, and the resulting mean reconstruction with its standard-deviation map. 
  PPM accurately captures critical features, the ring structure and bright crescent, while providing reliable uncertainty estimates. 
  (c) Comparison with baselines. We report the mean reconstruction and pixel-wise standard deviation for baselines. 
    Consistent with our theoretical analysis, {RED-Diff exhibits severe mode collapse}, characterized by a suppressed standard deviation map. 
    While DPS captures uncertainty, its reconstruction lacks sharpness. 
In contrast, PPM achieves superior fidelity with a physically meaningful uncertainty distribution that accurately captures the structural variance of the black hole shadow.
  }
  \label{fig:eht} 
\end{figure}

\section{Conclusion}
\label{sec:conclusion}
In this paper, we presented Principled Posterior Matching (PPM), a principled framework that addresses the fundamental limitations of existing variational diffusion-based inverse problem solvers. By identifying that the mode collapse in prior works stems from biased approximations of the KL divergence, we proposed a rigorous alternative based on the integration of Fisher divergence. This enables an unbiased gradient estimator, allowing for the exact minimization of the variational objective without structural collapse.
PPM unifies variational and amortized inference, enabling both faithful posterior recovery and efficient, unsupervised generation. Validated across computational and scientific imaging tasks—including microscopy and black-hole imaging—PPM consistently outperforms baselines. Its superior fidelity and reliable uncertainty quantification establish it as a robust foundation for trustworthy imaging.




\bibliography{citation}
\bibliographystyle{plainnat}



\begin{IEEEbiography}[{\includegraphics[width=1in,height=1.25in,clip,keepaspectratio]{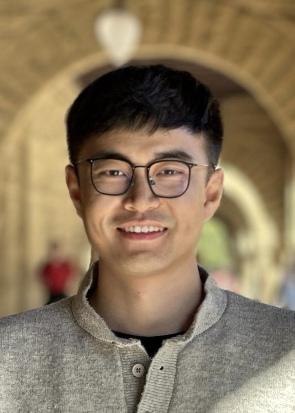}}]{Weimin Bai}
received the B.S. degree from China Agricultural University and the M.E. degree from the School of Computer Science, Southeast University. He is currently pursuing a PhD degree with the Academy for Advanced Interdisciplinary Studies, Peking University, China. His main research interests include generative models and inverse problems.
\end{IEEEbiography}

\begin{IEEEbiography}[{\includegraphics[width=1in,height=1.25in,clip,keepaspectratio]{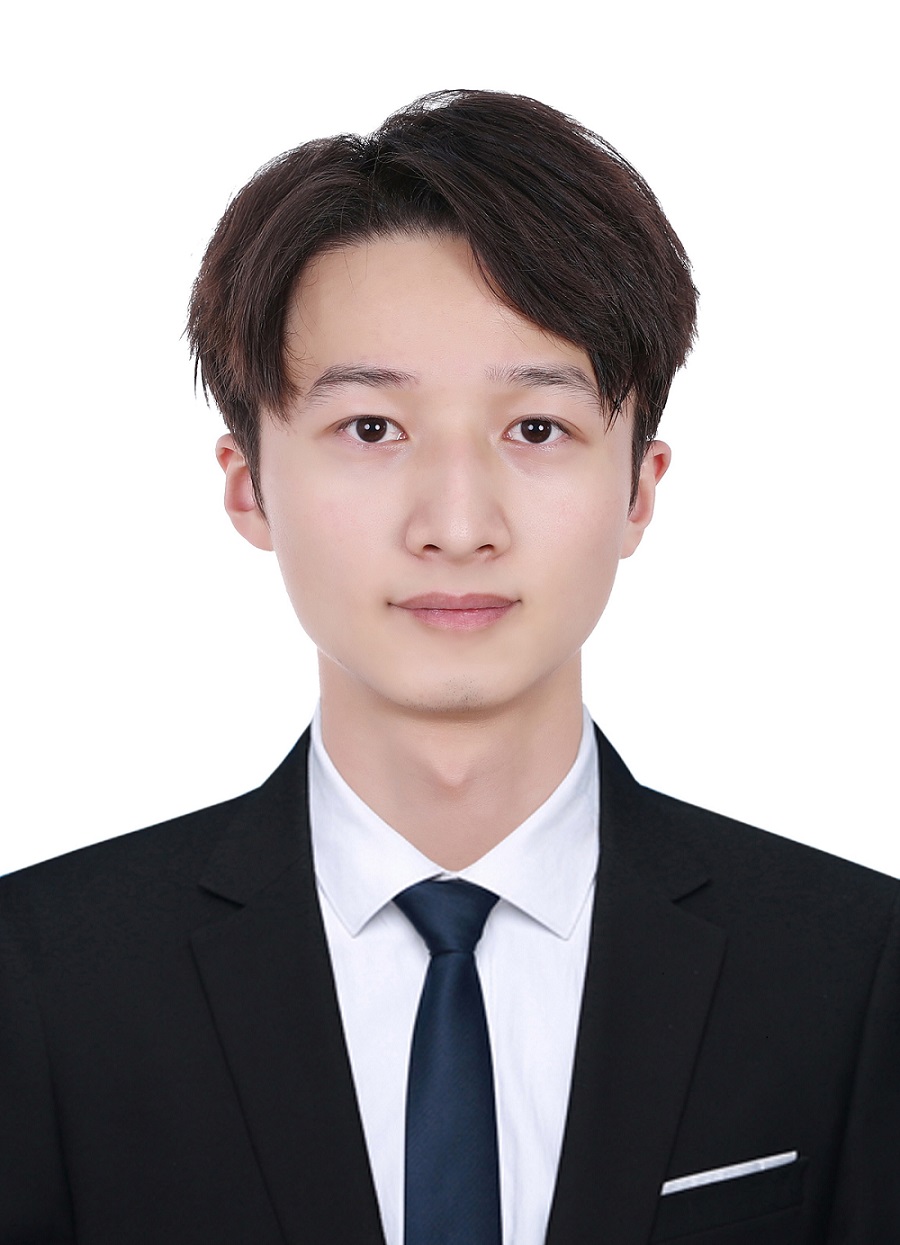}}]{Yuxuan Gu} is currently pursuing a master's degree at the School of Software and Microelectronics, Peking University, where his research interests primarily focus on the distillation of diffusion models, inverse problem solving, and the development of intelligent agents.
\end{IEEEbiography}

\begin{IEEEbiography}[{\includegraphics[width=1in,height=1.25in,clip,keepaspectratio]{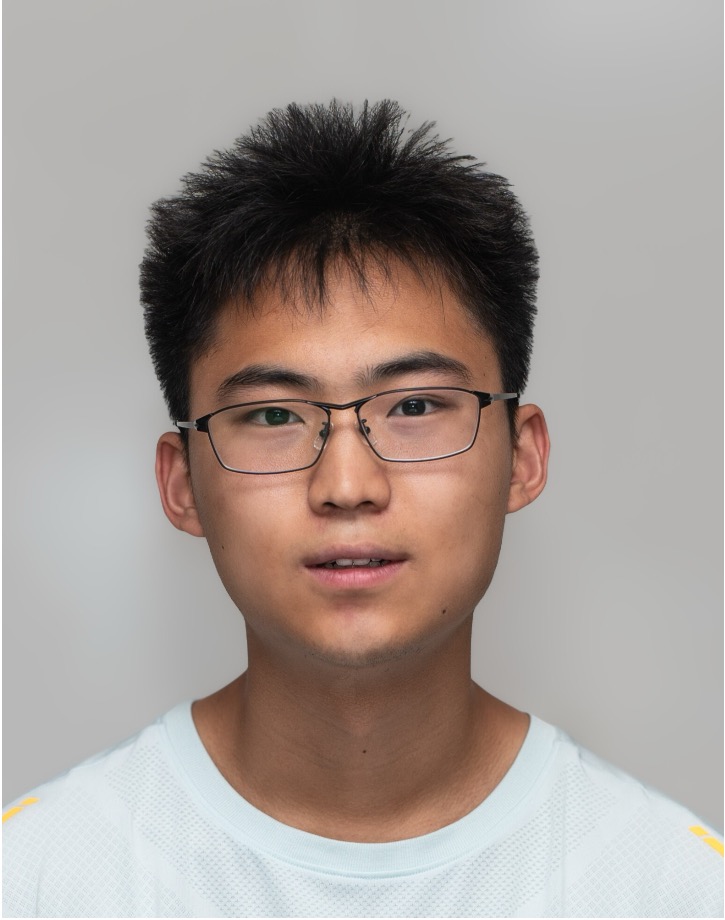}}]{Yifei Wang} received the BS degree in artificial intelligence from Peking University, Beijing, China, in 2025. He is currently working toward the doctoral degree with Rice University. His current research interests include representation learning and generative modeling.
\end{IEEEbiography}

\begin{IEEEbiography}[{\includegraphics[width=1in,height=1.25in,clip,keepaspectratio]{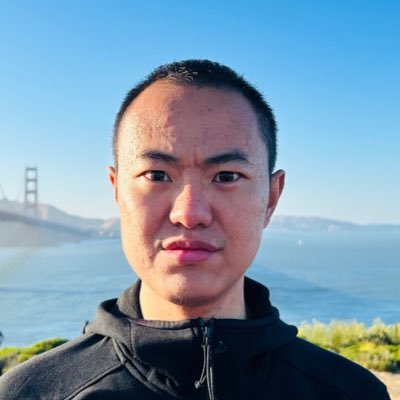}}]{Weijian Luo}
is a RedStar Senior Research Scientist at the Humane Intelligence (hi) Lab, Xiaohongshu Inc. He received his B.S. degree from the University of Science and Technology of China and his M.S. and Ph.D. degrees in statistics and generative modeling from Peking University. His research focuses on building generic AGI systems. He has published more than 15 academic papers in international journals and conferences, such as IEEE TPAMI, Transactions on Machine Learning Research, NeurIPS, ICML, ICLR, CVPR, etc.
\end{IEEEbiography}

\begin{IEEEbiography}[{\includegraphics[width=1in,height=1.25in,clip,keepaspectratio]{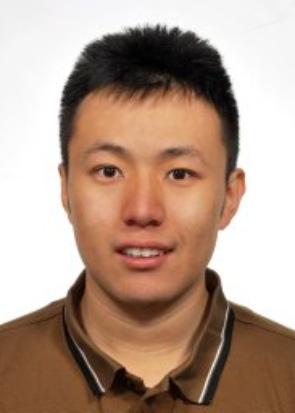}}]{He Sun}
is an Assistant Professor in the College of Future Technology and the National Biomedical Imaging Center, Peking University. He obtained his Ph.D. from Princeton University in 2019 and his bachelor's degree from Peking University in 2014. Prior to joining the faculty of Peking University, he was a postdoctoral researcher and an Amazon AI4Science Fellow at California Institute of Technology. His research primarily focuses on computational imaging, which tightly integrates optics, control, signal processing and machine learning to push the boundary of scientific imaging. His past work has contributed to multiple challenging science missions, including the Event Horizon Telescope for black hole interferometric imaging, as well as to a range of biomedical imaging modalities such as ultrasound and computational microscopy.
\end{IEEEbiography}

\end{document}